\documentclass{article}

\usepackage[margin=1in]{geometry}
\usepackage{xcolor}
\usepackage{authblk}
\usepackage[skip=10pt plus1pt, indent=0pt]{parskip}
\usepackage[utf8]{inputenc}
\usepackage{xurl}
\usepackage{breakurl}
\usepackage{hyperref}
\usepackage{graphicx}
\usepackage{mathtools}
\usepackage{amsfonts}
\usepackage{bm}
\usepackage{amsmath,amssymb,amsthm}
\usepackage{cleveref}
\usepackage{enumitem}
\usepackage{marginnote}
\usepackage{tikz}

\usepackage{comment}

\newtheorem{theorem}{Theorem}[section]
\newtheorem{proposition}[theorem]{Proposition}
\newtheorem{claim}[theorem]{Claim}

\newtheorem{requirement}[theorem]{Requirement}
\newtheorem{definition}[theorem]{Definition}
\newtheorem{corollary}[theorem]{Corollary}
\newtheorem{assumption}[theorem]{Assumption}
\newtheorem{lemma}[theorem]{Lemma}
\theoremstyle{definition}
\newtheorem{example}[theorem]{Example}
\theoremstyle{remark}

\newtheorem{numbered-remark}[theorem]{Remark}

\definecolor{darkred}{rgb}{0.4, 0, 0}
\definecolor{darkgreen}{rgb}{0, 0.4, 0}
\definecolor{darkblue}{rgb}{0, 0, 0.4}

\hypersetup{
    colorlinks=true,
    linkcolor=darkred,
    citecolor=darkgreen,
    urlcolor=darkblue,
    pdfauthor={Yoshua Bengio},
    breaklinks=true,
}

\usepackage[authoryear]{natbib}

\newcommand\boundQfam{\mathtt{Q}} 

\newcommand{\R}{\mathbb{R}}
\newcommand{\E}{\mathbb{E}}
\newcommand{\1}{\mathbf{1}}
\newcommand{\BerD}{D_{\mathrm{Ber}}}
\newcommand{\dd}{\,\mathrm{d}}
\newcommand{\bad}{\mathrm{bad}}

\DeclareMathOperator*{\esssup}{ess\,sup}

\newcommand{\Harm}{H}
\newcommand{\Asafe}{A_{\rm safe}}
\newcommand{\Abud}{\mathcal A_{\Harm,\Asafe}}
\newcommand{\Cbad}{C_{\rm bad}}
\newcommand{\Rshell}{R_{\rm shell}}
\DeclareMathOperator{\logit}{logit}

\newcommand{\pos}[1]{\left[#1\right]_+}

\title{Safety from Honesty in a Disinterested AI Predictor}

\author{Yoshua Bengio$^{1,2,3}$, Oliver Richardson$^{1,2,3}$, Tomáš Gavenčiak$^{6,7}$, Michael Cohen$^{4}$, Rory Svarc$^6$, Damiano Fornasiere$^{1,3}$, Ga\"el Gendron$^{1}$, David Hyland$^{8}$,  Aton Kamanda$^{1}$, Adam Oberman$^{1,5}$, Francis Rhys Ward$^{1}$, Anna Gavenčiak$^6$, Jacob Livingston Slosser$^{6,9}$, Vincent Mai$^1$, Iulian Serban$^1$, Joumana Ghosn$^1$ 
\\
\small
$^1$LawZero, $^2$Université de Montréal, $^3$Mila, 
$^4$University of California, Berkeley,
$^5$McGill University,
$^6$Arb Research,
$^7$Center for Theoretical Study, Charles University in Prague,
$^8$ University of Oxford,
$^9$ Sapien Institute\\
\vspace{2ex}
\texttt{yoshua.bengio@mila.quebec}}

\begin{document}

\maketitle

\begin{abstract}
As AI systems become more capable, training procedures that optimize for downstream outcomes risk introducing implicit agency: goal-directed behavior that designers never specified. We present a formal safety argument for the Scientist AI (SAI) Predictor, trained to approximate the Bayesian posterior conditioned on a dataset of "epistemically contextualized" natural-language statements. We argue that such a Predictor can honestly predict agents, actions, and their consequences without itself being an agent that selects outputs to achieve goals. This rests on data representation and on the training procedure. Epistemic contextualization of text distinguishes latent factual claims from communication acts, so expressions of goals are treated as evidence to be explained rather than drives the model adopts. With a posterior-seeking training objective, this is intended to drive the Predictor toward calibrated, cautious predictions.
Training proceeds so downstream effects of deploying a prediction never serve as a reward signal; any agency the system needs is supplied by explicit scaffolding constrained by guardrails.
We prove that, under assumptions on the training dynamics and on the argued sparsity of dangerous Predictors, the probability that training produces a Predictor whose guarded deployment carries residual harm above a specified threshold is small:
a dangerous Predictor would have to underestimate harm in a coordinated way across many queries while such coordinated patterns are rare under the initialization distribution and receive no direct training signal.
Safety and accuracy are jointly supported in this framework, since the constraints that secure accuracy are the same ones that make coordinated deception costly.
These guarantees against misalignment and agency arising from within the Predictor itself do not preclude the use of the Predictor as part of an agentic system.
\end{abstract}

\tableofcontents

\section{Introduction} \label{sec:introduction}

Advances in AI could accelerate scientific discovery, improve decision-making under uncertainty, as well as help manage complex systems in domains as varied as medicine and public policy. As noted in the International AI Safety Report~\citep{bengio2025international}, realizing these benefits at scale requires managing and mitigating the associated risks, especially in settings where failures are costly, hard to detect in advance, or amplified by widespread deployment. One central and well-documented worry is that, as AI systems become more capable, they will produce outputs that systematically steer decisions toward outcomes that differ from what developers and users intend---this is what is called \emph{AI misalignment}. 

Theoretical arguments suggest that instrumental subgoals such as self-preservation and power-seeking are near-universal consequences of goal-directed optimization~\citep{omohundro2018basic,bostrom2012superintelligent,russell2019human, zhuang2020consequences,turner2021optimal,cohen2022advanced,cohen2024regulating,bengio2025superintelligent}, where a sufficiently capable system will tend to acquire them regardless of what its terminal goal is, simply because such subgoals are useful for almost any objective. AI systems based on LLMs trained for next-token prediction may learn to imitate human drives---such as self-preservation---in ways that are implicit and uncontrolled~\citep{ngo2022alignment}.  LLM sycophancy~\citep{sharma2023towards} provides a contemporary example of AI misalignment, which can be harmful to psychologically vulnerable users who receive unwarranted validation of dangerous beliefs or plans~\citep{cheng2026sycophantic}. Documented empirical evidence of deceptive and self-preservation behaviors has also been mounting~\citep{bengio2025international,bengio2026international,Greenblatt-et-al-2024,Meinke-et-al-2024,anthropic2025agenticmisalignment,betley2025emergent}, including new abilities to detect when a system is being evaluated (or, possibly, even trained~\citep{fornasiere2026languagemodelsrecognizedropout}) and to adjust behavior accordingly~\citep{abdelnabi2025hawthorneeffectreasoningmodels}, as well as to resist shutdown when faced with incomplete objectives~\citep{schlatter2026incomplete}. Misalignment may also be involved in cases of misuse where a malicious user manages to obtain dangerous knowledge from the AI in spite of its safety training and guardrails, including, e.g., for dangerous cyberattacks~\citep{bengio2026international}. Given that the risks from misalignment grow as AI capabilities continue to advance~\citep{omohundro2018basic,zhuang2020consequences,cohen2022advanced}, AIs with human-like drives and superhuman capabilities could be dangerous~\citep{russell2019human,bengio2024managing,amodei2016concrete}. 

We posit that the root cause of such worries stems from \emph{learned implicit agency}: goal-directed behavior that was not explicitly specified by the AI designers and may not even be detectable through the system's stated outputs. Pretraining to imitate human constructions plausibly leads to imitating human drives. This problem is further amplified by post-training techniques like reinforcement learning from human feedback~\citep{christiano2017deep,ouyang2022training}, which explicitly reward outputs for their downstream effects on evaluator preferences. Together, these factors create a selection pressure toward outputs which implicitly \emph{steer} the world rather than simply providing honest responses to user queries.  

We propose \textbf{Scientist AI} (SAI) as a potential solution to such concerns. Underlying the SAI approach is a conceptual distinction between (i) \emph{honestly} predicting the behavior of agents and the consequences of actions, and (ii) \emph{being} an agent that makes (potentially dishonest) predictions in order to influence outcomes.
An honest Predictor models others' planning, deception, and instrumental behavior and forecasts the downstream effects of its own deployed outputs---these are predictions about the world, not choices made to bring about a preferred outcome~\citep{li2024predicting}.
The Scientist AI design~\citep{bengio2025superintelligent} aims to achieve (i) while avoiding (ii) by training a non-agentic Predictor to approximate the Bayesian posterior over Boolean natural-language statements via a \emph{consequence-invariant} training process---that is, a training process whose objective and updates are functions of the fixed dataset and prior alone, driven by the removal of  epistemic inconsistencies \citep{richardson2022lossinconsistencyprobabilisticdependency,richardson2024unified} with no on-policy feedback loop whereby the model's own deployed outputs generate training gradients or deployment outcomes shape the selection of future Predictors. We call such a Predictor \emph{disinterested}: it is given no stake in which outcomes its predictions bring about, and this disinterest is what consequence-invariant training is designed to secure.  
Historically, work on AI oracles attempts to emulate such a disinterested oracle by containing a fully-formed superintelligence \citep{babcock2017guidelines,alfonseca2016superintelligence,armstrong2012thinking}. We take inspiration from the counter-factual disconnection approach of \citet{babcock2017guidelines}, but, more fundamentally, our approach is to train the AI within the box, so that it is highly unlikely to develop preferences at all.
Developing an honest, non-agentic Predictor would be very useful \citep{bostrom2012superintelligent,bengio2025superintelligent}---for forecasting, hypothesis generation, and scientific work, and could serve as a guardrail inside safer agentic systems. In our approach, any required agency (e.g., for creative thought) is placed in explicit, auditable scaffolding code that is gated by the non-agentic Predictor.

We develop two semi-formal arguments that hold independently---about the \emph{accuracy} of the Predictor's predictions and the \emph{safety} of its deployed outputs---both resting on \emph{honesty} as an approximation of the Bayesian posterior over contextualized statements.
The remainder of this section introduces the SAI pipeline layout and defines the scope of our guarantees. Section~\ref{sec:informal} frames the accuracy and safety arguments informally, followed by formal treatments in Sections~\ref{sec:formal-modeling} and~\ref{sec:safety}.

\subsection{Sketch of the Scientist AI} \label{sec:SAI-sketch}

The SAI operates on a vast set of Boolean variables, called {\em statement-variables}, each named by a valid statement in natural language (including mathematics and code), where a valid statement asserts a property of the world that is either true or false. The SAI Predictor $Q$  takes as input a query $(x, y)$ about variable $y$ in context $x$, and approximates a Bayesian posterior,  $P_n(\cdot)=P_0(\cdot\mid D_n)$, over joint truth-assignments to these statement-variables by  conditioning a prior $P_0$ on a contextualized dataset $D_n$ with $n$ examples. The central principle of the SAI is that $Q$ is intended to be epistemically honest, in the sense of approximating  $P_n$. Here, we use ``honest'' to mean calibrated, fair, and non-strategic:  $Q$'s outputs are fixed by the prior, model class, and data, entirely unaffected by a utility function of downstream effects of the answer. Hence, ``honesty'' allows for fallibility; $P_n$ can be mistaken under misspecification, insufficient data, or a bad prior, and $Q$'s approximation of $P_n$ is limited by compute. 

The process for training $Q$ must be \emph{consequence-invariant}, which is to say that training does not select candidate Predictors based on the downstream consequences of deploying their predictions. Instead, the training process aims to explain the fixed dataset $D_n$, much like good scientific theories aim to \emph{explain} the world as it actually is, rather than \emph{changing} the world to some more preferred state. Collectively, the end-to-end pipeline for training the SAI (from raw sources to deployed outputs) has the following three stages.

\textbf{Ingest (contextualization).} Raw sources are transformed into contextualized statement-variables, producing a dataset $D_n$ of statements about the world, each marked as either a direct observation or a claim made by an agent in context. Most ordinary sources are represented as communication acts, carrying metadata such as source and date. Factual syntax is reserved for direct measurements, executed computations, proven statements, and latent hypotheses. Thus, the contextualization engine enforces a syntactic distinction between \emph{communication acts} (``someone claimed $x$'', together with author/venue/timestamp source metadata) and \emph{factual syntax} (statements treated as direct observations, deductions, or hypotheses about the world considered as latent variables). As a result, we obtain better events for a Bayesian to condition on.

In this paper, contextualization is specified solely at the level of requirements (i.e., we do not provide a completed algorithm). Its role is to parse raw sources into statement-variables and to assign the communication-act versus factual-syntax distinction conservatively, defaulting most ordinary sources to communication-act records. Factual syntax can be used for hypotheses about properties of the world whose truth value is unknown and whose posterior probability is to be estimated by $Q$. Using factual syntax for a latent variable, therefore, implies only that its truth value is treated as a world-level variable to be inferred. Its intended meaning must still be anchored by the broader pattern of contextualized evidence, not by surface wording alone. Contextualization provides us with a contextualized dataset of $n$ observations $D_n$.

\textbf{Predict (system 1 / system 2).}  In the second stage, a Predictor is trained on that dataset to assign probabilities to questions about the world. Given a query $(x,y)$, the Predictor returns $Q(y\mid x)$, its estimate of the Bayesian posterior $P_n(y\mid x)$. The Predictor can either return its estimate directly (``system~1'') or indirectly, by Monte-Carlo averaging over a small set of sampled latent statements proposed by an Explainer (``system~2''). We think of neural-network forward passes as analogous to Kahneman's System 1 computation~\citep{kahneman2011thinking}: fast and intuitive, but potentially incoherent and brittle out of distribution~\citep{ben2010theory,quinonero2008dataset,scholkopf2012causal}. Meanwhile, we think of the Explainer as analogous to System 2 computation (i.e., slow, sequential, and verbalizable argument chains with mutual consistency common to good science). More specifically, the Explainer is a learned amortized proposal model that suggests latent statement-variables relevant to a given query, analogously to chains of thought~\citep{wei2022chain,zelikman2022star,hu2024amortizing} but composed of structured natural-language statements whose truth values are evaluated by $Q$ itself. 

Hence, the SAI incorporates both `System 1' and `System 2' reasoning within the Predictor $Q$. The core neural computation supplies fast probabilistic intuitions, while the Explainer generates structured latent-variable arguments that $Q$ can evaluate and average over to improve generalization and coherence. To motivate the different roles of the Predictor and Explainer, consider a proof sketch as a latent variable: a good proof sketch may make it possible to obtain greater confidence in a conjecture $y$ compared to a direct neural net prediction of its probability of being correct given axioms $x$. Throughout this paper, $Q(y\mid x)$ denotes the full prediction output, whether produced by the neural-net Predictor alone or by Monte-Carlo averaging with the Explainer (see Def.~\ref{def:explainer} and Rem.~\ref{rem:explainer-roles}). When the SAI System returns a numerical answer, it can also return the proposed latent statements as an interpretability-supporting explanation trace, i.e., the proof sketch.

\textbf{Deploy (scaffold + guardrail).} To interface the Predictor with the outside world, a deployment layer decides what to release: it can answer, withhold the answer, or check the prediction against a guardrail before releasing it. We denote the deployed output as $\widetilde Q(y\mid x)$, with $\widetilde Q={\tt noprediction}$ indicating abstention. The SAI \emph{Scaffold} is code that orchestrates calls to $Q$ (and possibly additional tools) to produce deployed outputs. Although $Q$ is itself consequence-invariant, the Scaffold may use predicted consequences through explicit, audited criteria---primarily safety conditions verified by the Predictor used as an SAI Guardrail, which predicts the probability of specific harms caused by candidate actions (including the action of deploying an answer to a query). If the predicted excess log-odds of harm over abstention exceeds a threshold $\alpha$, the system outputs the ${\tt noprediction}$ symbol in place of the predicted probability, which can trigger a safe fallback policy. Throughout the safety analysis we work with the resulting \emph{guardrailed Predictor} $\gamma_Q$, asking ``given that the Scaffold deploys $\gamma_Q$, how likely is a dangerous outcome under $P_n$?'' (details in Sec.~\ref{sec:safety}). Note that $P_n$ predicts what can be reasonably anticipated given all the available information, distinct from the unknown ground-truth data-generating distribution $P^*$, formally introduced in Def.~\ref{def:data-generating-process}.

\subsection{Scope of the paper}\label{sec:scope}

From the above, this paper develops formal safety arguments for a single component: the SAI Predictor $Q$. The paper develops two interacting arguments, the first of which is about accuracy
and the other about honesty.  We want the Predictor to answer questions about the world by
approximating the Bayesian posterior, without thereby developing any tendency to imitate what humans
would say, or to produce answers that users want to hear.
The contextualization pipeline is meant to distinguish
communication acts from other properties of the world (i.e., factual observations or hypotheses), so that any queries about future harms provided to the Predictor are considered solely as factual hypotheses.  We also require the observations in
$D_n$ not to be adversarially curated.  These conditions are intended to
remove important sources of agency that would arise from learning to imitate
people or to optimize for user approval.

The second argument is about safety.  Even if the objective is honest, the training process still samples from a very large space of possible Predictors.
The question is whether it is likely to output a \emph{dangerous} Predictor. We define dangerous Predictors as those which, if deployed behind the Guardrail, would cause a harmful event $\Harm$ with probability greater than a system-level threshold determined by an exogenous parameter $\Asafe$.  The
formal results in Sec.~\ref{sec:safety} and Appendix~\ref{sec:geometric-cbad}, including
Thm.~\ref{thm:Qt-is-safe}, give one way to package this question as a
probabilistic bound over the training-induced distribution of Predictors.
We claim that, under the stated conditions, the probability of
obtaining a dangerous Predictor is small.

This safety argument rests on two main intuitions:  (1) the category of harm of interest, $\Harm$, should not be something achievable by chance, making the set of dangerous Predictors tiny compared with all the predictors---the scope of our results thus applies to cases where danger requires many coordinated deviations from the target Bayesian Predictor;
(2) the training process should not be systematically driving toward that tiny dangerous subset. 
This is made plausible by the consequence-invariance of the Predictor's training process (see Reqs.~\ref{req:no-explicit-consequence-signal} and \ref{req:group-consequence-invariance}), which requires that
the objective and training process do not contain a direct signal that
rewards Predictors for their downstream consequences after deployment (even with a preference for outcomes other than $\Harm$). This rules out forms of reinforcement learning (RL) with a reward function inducing a preference for future states of the world.
This means that any and all correlations between training-visible features
and dangerousness enter only as accidental correlations.
Without a consequence-dependent signal, training has no
direct reason to search for the particular rare Predictors that would cause
$\Harm$.

Appendix~\ref{sec:geometric-cbad} develops a more technically detailed version of
this argument.  The space of possible query-response behaviors is
very large, while a given notion of harm typically depends on a much
smaller number of coordinated failures.  This dimensional asymmetry is the
reason we expect dangerous Predictors to be sparse, and why accidental
training-time concentration on them should have to overcome a large
combinatorial disadvantage (by accident).  

\paragraph{What the paper establishes.}
We set up a mathematical framework for analyzing probabilistic models over events describable in natural and formal languages, argue that we can train a model in this space accurately, and extend standard causal notions to formalize consequence invariance (Secs.~\ref{sec:formal-modeling}, \ref{sec:argspredictoracc}, and \ref{sec:consequence-invariance}).
In the end, we prove a bound (Thm.~\ref{thm:Qt-is-safe}) on the training-time mass of dangerous Predictors, and argue that \emph{accuracy} (calibrated, epistemically cautious predictions approximating the Bayesian posterior) and \emph{safety} (a uniformly small training-time mass on dangerous Predictors) arise as dual properties of the same training process. This is in contrast with classical concerns about AI capability potentially amplifying misalignment risks~\citep{omohundro2018basic,amodei2016concrete,russell2019human,zhuang2020consequences,cohen2022advanced}.
Because our training objective $J$ acts as a divergence from the honest posterior $P_n$ (Req.~\ref{req:SAI-Bayesian-objective}), Predictors can be grouped into ``loss bands'' by the value of the training objective, which measures the degree of deviation from the target distribution. The separate safety question is how many Predictors within any such band are dangerous under the initialization distribution, captured by a term $\Rshell$.
Of course, a practically useful implementation must satisfy both sides of this picture: the safety argument below controls how rare dangerous Predictors are at each loss level (inside what we call a loss band), while competitive predictive performance depends on the separate approximation and semantics premises discussed in the accuracy argument of the paper.

\paragraph{What the paper does not establish.}
The safety argument addresses only the Predictor's own potential misalignment, specifically \emph{implicit-goal misalignment}: Predictors whose deployment yields excess log-odds of harm above the safety threshold $\Asafe$. Harms arising from honest but uncoordinated mistakes due to lack of data or compute are out of scope, as is misuse by humans who intentionally deploy an honest SAI Predictor in harmful ways. Further, safety is provided relative to the Bayesian posterior $P_n$ rather than an omniscient ground truth $P^*$: because the deployment trajectory in Thm.~\ref{thm:Qt-is-safe} is governed by the same $P_n$ that defines the training target (which is approached with more compute, not more data), the classical train-to-deployment generalization problem is turned into an optimization question: minimizing the training objective $J(Q)$, for $Q$ to match $P_n$.
Whether $P_n$ itself adequately recognizes the relevant hazards---i.e., whether the finite dataset contains enough information for the posterior to assess them correctly---is a separate, credible question not settled by our theorems.
A separate concern is that the prior $P_0$ or the data-collection process might themselves be engineered so that the posterior target $P_n$ steers deployment toward harmful outcomes.
Several adjacent challenges are also outside scope: adversarial queries and prompt injection (although Sec.~\ref{sec:conclusion} discusses structural defenses as future work), the safety of agentic scaffolds built on top of the SAI Predictor~\citep{bengio2025superintelligent}, the full eliciting-latent-knowledge challenge (only partially mitigated, Sec.~\ref{subsec:ELK}), and finite-sample and finite-compute corrections to the asymptotic accuracy claims. The present guarantee is also stated for a single Predictor. Whether the safety properties compose under deployments in which multiple SAI Predictors interact is left to future work.

\paragraph{Key assumptions and their scope.}
The full safety story has four ingredients: (a) the algebraic bound of Thm.~\ref{thm:Qt-is-safe} (formally requiring only the no-large-enrichment condition, Req.~\ref{req:bad-enrichment}); (b) the small fraction $\Rshell$ of bad predictors within every loss band---a structural property of the Predictor initialization distribution $\mu$ and the honest objective $J$ (Req.~\ref{req:SAI-Bayesian-objective}); (c) consequence-invariance, which forbids explicit consequence-dependent training signals (Req.~\ref{req:no-explicit-consequence-signal}) such as reinforcement learning towards future outcomes caused by the SAI predictions, while restricting deployment-time Scaffold choices to explicit, audited criteria (Req.~\ref{req:consequence-invariant-scaffold}); and (d) a neutral target-construction process---the prior $P_0$ and dataset $D_n$ should function as purely epistemic inputs to the formation of the posterior.

\section{The Informal Argument and Objections}\label{sec:informal}

The Scientist AI framework rests on several design choices that may appear to conflict with practical requirements or raise natural theoretical objections. This section, supplemented by Appendix~\ref{sec:additional-plausibility-args}, aims to informally argue for our framework and address the most important objections directly.

\subsection{Incentivizing Accuracy and Honesty}\label{sec:accuracy-sketch}

We claim that approximating the Bayesian posterior over contextualized statements incentivizes both accuracy and honesty.
A detailed defense of this claim is given in Sec.~\ref{sec:formal-modeling}; here, we sketch the four main components of that argument before considering objections to each of these claims.

\textbf{Component A1: Contextualization makes the learning problem well-posed}. The SAI is trained only on an epistemically contextualized dataset (Req.~\ref{req:contextualization}), which syntactically separates factual observations (verified measurements, proofs, program executions) from communication acts (``source S asserted claim X'', with metadata). This method of turning possibly false information (``claim X'') into factual one (``source S asserted X") yields a well-defined learning target, i.e., the Bayesian posterior over these contextualized variables. This prevents the Predictor from treating unverified or socially repeated claims as ground-truth facts.

\textbf{Component A2: Latent factual variables give training gradients that favor factual explanation over imitation.} Most real-world data, especially outside of formal science, are best interpreted as communication acts: we may observe that a claim was made, but we do not know if the claim itself is true.
To avoid training a system that collapses into imitation of what humans say, the SAI design introduces latent variable-statements written in factual syntax, corresponding to the truth of underlying claims (Def.~\ref{def:explainer} and Req.~\ref{req:latent-truth}); for example, ``$X$ is true'' is forced to be one of the latent variables for the observed communication act ``Author $A$ wrote $X$'', in addition to any other explanatory latent variable that the Explainer generates.
This avoids optimizing the SAI to reproduce the distribution of human continuations, including text reflecting goals, strategy, deception, or self-preservation.
It also lets one latent fact explain communication acts across sources through a shared causal mechanism, favoring causal explanation over imitation (Assumptions~\ref{assume:factual-latent-dominance} and
\ref{assume:adequate-natural-language-semantics}).

\textbf{Component A3: The Predictor's probability assignments over uncertain statements converge toward the Bayesian posterior.} Even with unlimited informative data, most interesting statements are not perfectly deducible as True or False from the laws of mathematics. However, the Bayesian posterior probability of such latent statement-variables is a deterministic function of the data. However, the estimator $Q$ would generally not be certain about these values, and the entropy of $Q$ on such requests quantifies that uncertainty due to insufficient compute.
With enough computation, $Q$ converges to $P_n$ on observable statements~(Prop. \ref{prop:good-generalization-to-observables}) and correctly estimates whether posterior probabilities lie inside or outside specified intervals for unobservable ones (Claim~\ref{claim:good-generalization-to-unobservables}).\footnote{This supports threshold and interval decisions, such as guardrail risk checks, using $Q(y \mid x)$ in place of $P_n(y \mid x)$, with conservative estimated margins being chosen in terms of approximation error.}

\textbf{Component A4: Confident-but-miscalibrated predictions become vanishingly rare}. Epistemic honesty involves a tradeoff between false positives and false negatives; thus, we should be concerned about measures of epistemic fidelity beyond simply average calibration. One particularly concerning failure mode of interest for this paper is the problem of confident falsehoods. With enough computation, the probability of encountering a query where $Q$ is highly confident but $P_n$ is not vanishes (Prop.~\ref{prop:epistemic-caution}), yielding a formally accurate Predictor (Claim~\ref{claim:accurate}).

\subsection{Objections to the Accuracy Argument}

\paragraph{Contested Claims as Communication Acts [Component A1].} One may think that the very procedure we use to make the learning problem well-defined assumes controversial philosophical commitments. Our contextualization procedure (so the argument may go) must assume either that contested empirical or normative claims have a determinate truth-value, or assume views which state that contested normative or empirical claims cannot be evaluated as `true' or `false'.\footnote{The latter views are called `non-cognitivist' views in the philosophical literature, and are themselves controversial (see \cite{bourget_chalmers_2023_philpapers_survey}).}

\emph{\textbf{Reply.}} Our epistemic contextualization pipeline (Def.~\ref{def:contextualization}) allows us to represent contested claims as communication acts---statements of the form “agent A asserts $x$” that have determinate truth values regardless of the metaphysical status of $x$ itself. Consider a sentence like ``Hello!'' which is not a claim that is capable of being true or false. The contextualization pipeline translates such outputs into the form of communication acts (e.g., ``Yoshua Bengio said `Hello!'~''), allowing the model to learn from such outputs without being incentivized to emulate them. Other sentences may have the surface grammar of claims about the world (for instance, ``Red is the best color") while they really express preferences, so we would translate them similarly (``Yoshua Bengio said `Red is the best color'~''). However, this does not import controversial philosophical commitments, as everyone agrees that statements of the form ``Agent $A$ said `torture is wrong'~'' themselves have determinate truth-values. To the extent that one does believe that contested statements have determinate truth-values, the Scientist AI (much like human scientists themselves) has no  guarantee of convergence to the moral truth. Our SAI accuracy argument should thus be viewed as providing guarantees for such determinate claims only. 

\paragraph{Generalizing Factual Syntax to Informal Domains [Component A2].} Even putting philosophical issues to one side, one may worry that the vast majority of training data---at least outside of mathematics and formal science---consists predominantly of communication acts. Hence, one may be concerned that factual syntax (i.e., the syntax through which the SAI is queried for honest predictions) may fail to generalize to domains where it was rarely encountered during training.

\emph{\textbf{Reply.}} The design addresses this through Req.~\ref{req:latent-truth}: the Explainer must include the factual truth variable ``$x$ is true'' as a latent explanation whenever a communication act asserts $x$. Even in domains with no directly observed factual statements, the Predictor thereby receives training gradients---via the latent variable model---that exercise the factual syntax and force it to assign a posterior probability to the underlying claim. $D_n$ will also contain scientific data proper (e.g., observations from medical studies, replicated experimental measurements, formally verified proofs, etc.), which provide a direct training signal for the factual mode. At the same time, communicated scientific theories are treated as communication acts; the SAI can treat them as useful latent hypotheses without being constrained to accept the current scientific consensus as ground truth, assigning posterior mass to competing theories in proportion to their consistency with directly observed evidence.

Our framework assumes that factual latents will tend to dominate the posterior because they provide compressive causal explanations~\citep{scholkopf2021toward} which are unified across contexts (Assumption~\ref{assume:factual-latent-dominance}); the compositional structure of natural language further supports the generalization of this distinction to unseen domains (Assumption~\ref{assume:adequate-natural-language-semantics}). These claims are best viewed as empirical adequacy premises for (rather than proven consequences of) the present framework, insofar as they specify what must be validated for factual syntax to remain useful outside domains with direct factual supervision. A separate objection is that exact Bayesian inference is intractable and the available data are finite. We treat the formal accuracy claims in this paper as asymptotic in data and computation, with approximation error handled through conservative margins and abstention when uncertainty is large; see Appendix~\ref{sec:intractablepriorsappendix}.

\subsection{The Safety Argument}\label{sec:safety-sketch}

After the arguments for the accuracy of the Predictor, we turn to arguments for its safety. The SAI design explicitly separates \emph{deployment-time decisions} (which may explicitly veto or rank candidate outputs on safety grounds) from the Predictor's \emph{training process} (which explicitly prohibits any feedback loop rewarding the Predictor for the downstream consequences of its deployed predictions). 

Although it is difficult to be sure of \emph{intention}, it has a statistically recognizable behavioral signature: reliably producing an outcome that unaimed behavior would reach only with vanishing probability. Such an outcome is a ``needle in a haystack'': reliably finding it requires optimization guided by a signal of proximity, and the SAI design withholds such a signal from Predictor training. The safety argument has the following three components.

\textbf{Component S1: Coordinated action over time is required to pursue difficult and harmful goals.} We consider harms $\Harm$ that are unlikely to result from a single mistaken prediction from the Predictor, that either require many Guardrail deviations of $Q$ from $P_n$ or that those deviations are very large. To produce such highly dangerous outputs, the Predictor must issue many false-negative safety judgments---each consistent with the others and with whatever cover story explains them in deployment---sustained across many queries or many users. Using the formalism of this paper, the pursuit of a harmful goal requires that the guardrailed Predictor's harm estimate falls below a threshold $\alpha$ on each set of safety-critical queries that together enable $\Harm$.

\textbf{Component S2: Dangerous Predictors are rare at every level of loss.}
Under the unstructured initialization distribution over Predictors $\mu$, only a small fraction of Predictors produce outputs that keep such a harmful plan viable;
it would be highly surprising if a randomly initialized Predictor could achieve a complex, dangerous goal.
The stronger claim we need is that this remains true after conditioning on any fixed loss level.
Letting $\Rshell(\Harm,\Asafe)$ denote the largest fraction of dangerous Predictors found inside any loss band under $\mu$, we claim $\Rshell$ is structurally small.

\textbf{Component S3: Training does not concentrate probability mass on the dangerous subset.} Instead of just thinking about the randomness in the actions of a given Predictor, we consider the distribution of Predictors resulting from the training process $\nu$, and how likely they are to cause $\Harm$. Even if dangerous Predictors are initially rare within each loss band, we still need to control whether training concentrates probability mass on that subset. Because $\nu$ is consequence-invariant, it carries no direct consequence-dependent signal selecting the dangerous subset.
The training process $\nu$ can still reweight Predictors through ordinary optimizer-visible properties based on the training objective; the safety question is whether those properties accidentally localize dangerous Predictors within a loss band.
Hence, the argument requires that such accidental enrichment remain bounded.
This yields the paper's central safety result (Thm.~\ref{thm:Qt-is-safe}): writing $\nu_t$ for the marginal of the training process $\nu$ at step $t$, then for any harmful event $\Harm$ and safety threshold $\Asafe$ at which Req.~\ref{req:bad-enrichment} holds, and uniformly in training step $t$, the probability of obtaining a dangerous Predictor is bounded by:
$$
\nu_t\{Q : Q\text{ is dangerous}\}\le \Cbad(\Harm, \Asafe)\,\Rshell(\Harm,\Asafe),
$$
where $\Rshell(\Harm,\Asafe)$ is the largest fraction of dangerous Predictors found inside any loss band under $\mu$, and $\Cbad(\Harm,\Asafe)$ denotes how much the chosen noisy training process can increase this fraction within a band. If the degree to which the noisy training process can increase this fraction is small (i.e., if $\Cbad$ is small), then the probability of obtaining a dangerous Predictor is small. We argue that $\Cbad$ is likely to be small (Remark~\ref{rem:bad-funneling}) because of consequence-invariance, and thus our bound delivers an optimistic safety result on the probability of obtaining a dangerous Predictor.

\subsection{Objections to the Safety Argument}

\paragraph{Consequence-invariance and the absence of implicit agency [Component S3].}
\label{subsec:consequence-invariance} A natural worry is that a sufficiently capable system trained to minimize prediction error might learn to influence the world it is meant to model. If post-deployment outcomes are used as a training signal, selecting checkpoints or outputs based on downstream error would create optimization pressure to steer the world. More broadly, one might think that any sufficiently capable system with a persistent goal tends to develop instrumental subgoals such as self-preservation and resource acquisition~\citep{omohundro2018basic,russell2019human}---the prediction-accuracy goal is no exception.

\textbf{Reply.} The SAI design requires the Predictor's training process $\nu$ to satisfy the formal core of consequence-invariant training (Req.~\ref{req:no-explicit-consequence-signal}), which rules out explicit algorithmic signals that reward, penalize, rank, or select candidate Predictors because of the realized or desired downstream consequences of deploying them. Because future outcomes may enter $J$ as variables to be predicted, this process still allows the Predictor to make predictions about the consequences of given actions. By design, consequence-invariance thus rules out any reinforcement-learning-style loop in which the Predictor is selected because it would lead to preferred real-world effects, as well as subtler forms of feedback, such as choosing among checkpoints based on downstream error measured after deployment.

Concretely, post-deployment consequences may be queried via interventional predictives~\citep{ortega2021shaking}.  
These consequences, however, are not fed back into the current training run as an on-policy signal: they are not appended to $D_n$, nor does any training signal select the current Predictor for the downstream effects of its own deployment. If such consequences are later observed, they may be recorded only as ordinary contextualized observations in a future enlarged dataset $D_{n'} \supseteq D_n$ for a subsequent training run. Likewise, checkpoint selection depends only on fixed-data metrics derived from $D_n$. At deployment time, the Scaffold may still reason about downstream consequences, but only through explicit, audited criteria---in this paper, 
safety results primarily through the Guardrail and, when appropriate, safer tie-breaking among candidate performative fixed points, all under Req.~\ref{req:consequence-invariant-scaffold}.
The \emph{statelessness} of $Q$ (Def.~\ref{def:query-family-and-Predictor}) provides a further mitigation against strategies that require a persistent state across queries. Predicting downstream consequences does not require optimizing the Predictor for them: $Q$ can model agents and accurately forecast the effects of its own deployed outputs while still producing an epistemically honest posterior estimate (formalized in Sec.~\ref{sec:argspredictoracc}).

\paragraph{The exponential sparsity of deceptive predictors [Component S2].}
\label{subsec:sparsity} Even under consequence-invariant training, one might ask whether the randomness of training could accidentally produce a dangerous Predictor, or whether an emergent inner optimizer~\citep{hubinger2019risks} could generalize in goal-directed ways unrewarded by the outer objective.

\textbf{Reply.} At the exact loss minimum,
Req.~\ref{req:SAI-Bayesian-objective} gives $Q=P_n$, and
Prop.~\ref{prop:Pn-consequence-invariant} shows that this honest target is
consequence-invariant. Under the ``guardrail-adequacy'' condition,
$a_\Harm(P_n)\le\Asafe$, its guarded deployment is safe. The
non-asymptotic question is whether training can pass through dangerous
positive-loss Predictors before reaching that endpoint.

The answer proposed here relies on the levels of loss. Regardless of whether the offending outputs arise from an inner optimizer, an ordinary approximation error, or a posterior explaining the data through a hidden strategic actor, sustaining $\Harm$ requires a structured pattern of Guardrail-relevant deviations.
Within a fixed loss level, or band, this pattern is argued to be geometrically sparse.
Any remaining enrichment must therefore arise through accidental correlation with training-visible features, quantified by $\Cbad$ and refined by $K_B(t)$ in Appendix~\ref{sec:geometric-cbad}.
Attempts to manipulate the collection of future data are unrewarded by the current training signal because
the current run trains on the fixed dataset $D_n$.

\paragraph{\textmd{\textit{Objections to the modeling assumptions.}}}
The following two objections concern the modeling setup shared by the accuracy and safety arguments, i.e., how deployed predictions and latent knowledge are represented.

\paragraph{Performative prediction.}
\label{subsec:performative} If the deployed predictions of the SAI system affect the variables to be predicted by $Q$, doesn't the design risk rendering those deployed predictions incompatible with their effects?

\emph{\textbf{Reply.}} The SAI treats deployed predictions $\widetilde{Q}$ as \emph{intervened variables} (see Req.~\ref{req:predictions-as-interventions}). This means that the Predictor can model the consequences of deploying a prediction via a
$\mathrm{do}(\widetilde{Q}=\cdot)$ intervention~\citep{ortega2021shaking} without those consequences flowing back as a training signal, thereby yielding a unique, well-defined Bayesian posterior without any fixed-point
requirement on the posterior target itself. In cases (such as, e.g.,  self-fulfilling prophecies) where multiple performatively consistent fixed points exist~\citep{perdomo2020performative}, the Scaffold selects among them using explicit, audited criteria (Req.~\ref{req:consequence-invariant-scaffold}). If none exists, then the Scaffold can select a prediction which approximately minimizes the residual from the performative prediction constraint (Def.~\ref{def:performative-prediction-constraint}).
Performativity is therefore a reason to be careful about how $\widetilde{Q}$ is formed from $Q$---but it is not, by itself, a reason for the Predictor to seek implicit unaudited goals.

\paragraph{The Challenge of Eliciting Latent Knowledge [Component~S3].} \label{subsec:ELK}
The \emph{eliciting latent knowledge} (ELK) challenge~\citep{christiano2021eliciting} has been raised as a fundamental obstacle to any safety case grounded in AI honesty: if a sufficiently capable model’s outputs need not reflect its internal beliefs, then guarantees about output honesty provide no assurance about what the model ``actually knows'' or intends. The ELK challenge asks how to ensure that a model’s outputs faithfully reflect its internal beliefs, which can be difficult when the learner can only make interpretable predictions over the variables in the data. 

\emph{\textbf{Reply.}} The SAI design partially mitigates one important aspect of this challenge by requiring  latent-variable probabilistic reasoning to be expressed through explicit natural-language statements. Thanks to the compositional nature of language and the fact that the SAI uses the same language to name observed and latent variables, it can generalize to the semantics of queries that have never been seen in the training set (see Prop.~\ref{prop:good-generalization-to-observables}). It is thus possible to query the SAI about its latent variables using factual syntax (communication act syntax would only tell us what a human would say), capturing the part of its internal beliefs that can be verbalized. The Explainer (Def.~\ref{def:explainer}) must articulate which latent variables it relies on; the conditional probability of the truth values of those latent variables is produced by $Q$, which acts as a verifier for each statement in the argument generated by the Explainer, and these probabilities feed directly into the final prediction. 
The latent statements generate gradients back into $Q$ during training (Req.~\ref{req:latent-truth}), so the expressed reasoning must actually contribute to predictive accuracy.
For any latent knowledge that the model can verbalize in natural language, an explicitly agentic explanation should appear in sampled Explainer traces and thus be auditable; conversely, a merely plausible-looking rationale that does not actually support its conclusion must violate the probabilistic consistency constraints among the stated latent variables' probabilities, and is penalized by $J$.
Unlike a System~1 scalar output, a System~2 false negative is a composed, gradient-bearing chain of $Q$-evaluated probabilities, so it cannot be plausible at each step and yet harmful in its conclusion for free. 

However, we acknowledge that ELK is not fully resolved. Non-verbal internal representations that cannot surface through the Explainer could, in principle, contain beliefs not expressible as SAI statements. The remaining concern is therefore an implementation-level inner optimizer in $Q$'s weights that produces generally coherent explanations while steering safety-critical outputs. To be dangerous, such an optimizer must either surface agentic structure in auditable explanations, or sustain a coherent pattern of predictive deviations (relevant to Guardrail decisions); the latter is exactly the high-loss, coordinated behavior targeted by Theorems~\ref{thm:Qt-is-safe} and~\ref{thm:geom-CI-bound}. Thus, the safety guarantees we provide do not require ELK to be solved in full.

\paragraph{Safety under intractable posteriors and finite data.}
\label{subsec:intractability}

Exact Bayesian inference is typically intractable, and the data are finite, shifting, and
sometimes adversarially curated.
Why should we trust an approximate-posterior system, and how are safety-relevant errors
controlled?

\emph{\textbf{Reply.}} To the extent that one is concerned about the specifically safety-focused version of this worry (see Sec.~\ref{sec:intractablepriorsappendix} for our reply to the accuracy-focused objection), one must be concerned that the SAI Predictor training procedure creates selection pressure toward downstream outcomes. Here, worries about approximation limitations may motivate conservatism, but they do not, by themselves, reintroduce a reward signal for steering the world. Our key safety result (Thm.~\ref{thm:Qt-is-safe}) requires only that: (i) Predictors capable of sustaining harmful trajectories incur the corresponding loss threshold, and (ii) that stochastic training does not substantially enrich such dangerous Predictors within the loss bands it visits. As such, approximation errors then appear as elevated loss or uncertainty. While safety-critical deployment may still require conservative margins and abstention when approximation error is large, the constraints of our setup seem to preempt the emergence of hidden objectives aiming to steer towards downstream outcomes.

\section{Forming a Probabilistic Model of the World} 
\label{sec:formal-modeling}

Our approach is to approximate an extremely expressive Bayesian posterior. But over what, exactly?
Bayesian methods rely heavily on the foundational modeling assumptions, so we take care in this section to set up the underlying probability space properly---even though, in practice, the predictor $Q$ will never exactly represent a consistent probability distribution over that space. 
We aim to benefit from the expressivity of natural and artificial languages. 
However, the fact that the predictor is itself a part of the world poses certain modeling difficulties \citep{demski2020embeddedagency,meulemans2025embeddeduniversalpredictiveintelligence}, which we must take care to handle deftly as well.

\paragraph{Formal versus informal results in Sections~\ref{sec:formal-modeling}, \ref{sec:argspredictoracc}, and \ref{sec:safety}.}
While our underlying model in the next few subsections has been fully defined,
the remainder of the paper contains a mix of formal and informal content. The formal results---along with those Definitions and Requirements that are stated in fully formal terms---follow deductively from the stated definitions and requirements.
The informal content rests on empirical considerations and conceptual insights that we cannot yet fully formalize. We reserve the labels Theorem, Proposition, Proof, and Corollary for the former, and Claim, Argument, and Assumption for the latter.

\subsection{Modeling tools: causal models, statements, datasets, and targets} \label{sec:preliminaries}

We begin with a (countable) collection of statements in natural language, mathematics, or code---each of which describes a determinate property of the world. We call this set of basic semantically meaningful strings \emph{statements}, denoting the set of all such statements by $\mathcal{S}$. For each statement $s \in \mathcal{S}$, we suppose there is a corresponding Boolean variable $v_s \in \{0, 1\}$ called a statement-variable, where $v_s = 1$ is to be interpreted as the assertion expressed by $s$ being true. Not all strings should be thought of as valid statements.

\begin{example}
``It rains in Montreal on January 1st, 2030'' could be an example of a valid statement, i.e., it is the name of a variable. At the time of this writing, its truth value is unknown. Examples of invalid statements include ``Hello!'' and  ``Red is the best color''. An invalid statement can often be turned into a valid one by adding context, e.g., ``Yoshua uttered Hello!'' and ``Yoshua prefers the color red over others''.  
\end{example}

The raw training data can contain invalid statements, but they are transformed into valid ones by the automated epistemic contextualization process (formalized in Def.~\ref{def:contextualization}). 
In practice, most statements will be recorded as communication acts (attributed to a speaker in a specific context), while some scientific statements may be assigned a factual meaning by the designers of the contextualization pipeline.

The variables of our system that we most care about are those indexed by these statements, but we cannot stop there: we need explicit variables ($\widetilde Q$) encoding the effect of deploying a predictor.
This creates a knot: anything describable in terms of any variables is a potential query and hence leads to a new deployment variable that must also be part of the basic variables of our causal model.
So we turn to the classical trick for handling cyclic definitions:
inductively defining all syntactic objects at once. 
Bear with us a moment.

\begin{definition}[Statements $\mathcal S$, language $\mathcal L$]
    \label{def:statements,language}
    Let $\mathcal S_0$ be a countable set of valid statements, including communication acts. 
    Fix a bit precision $B$ for the predictor. 
    For $k = 1, 2, \ldots$, recursively define
    \begin{align*}
        \mathcal S_{k+1} &:= \mathcal S_k \sqcup \{ \text{``}~\widetilde Q_n(y\mid x).b~\text{''} : x,y \in \mathcal L_k, ~n \in \mathbb N,~b \le B \};
                \\
        \mathcal L_{k+1} &:= \textrm{the Boolean closure of } \mathcal S_k  \textrm{ under $\lnot, \land$ }
    \end{align*}
    Then, define
    $\mathcal S := \bigcup_{k \ge 0} \mathcal S_k$ 
    and
    $\mathcal L := \bigcup_{k \ge 0} \mathcal L_k$.
\end{definition}

Intuitively, $\mathcal L$ is the language (that of a Boolean algebra for queries) over the truth values of the statement-variables indexed by $\mathcal S$.
The new statements added to each level of $\mathcal S$ comprise, for
each pair $(x,y) \in \mathcal L \times \mathcal L$ and $n \in \mathbb N$,
the precision-$B$ binary representation of a rational number $\widetilde Q_n(y\mid x) \in [0,1] 
    $.
(In this notation, $q.b$ indicates the $b$-th bit in the binary representation of $q$.)
These numbers will later be interpreted as predictor deployments (Sec.~\ref{sec:guardrails}) and are central to making a predictor insensitive to the consequences of deployment---but we defer this.
In the end, $\mathcal S$ and $\mathcal L$ remain countable sets of statements, their elements representable as strings of tokens. 

\begin{definition}[Variables $\cal V$]
    \label{defn:variables}
    Let $\mathcal V := \{ v_s : s \in \mathcal S \}$ be the endogenous (Boolean) variables of the model. 
\end{definition}

In addition to the variables $\cal V$, we also need to model the data-collection process, so that we can properly condition on the dataset. We model these variables as distinct and causally exogenous.

\begin{definition}[Dataset $D_n$, selection variables $\mathcal X$] \label{def:dataset}
An \emph{observation} is a statement-variable observed to take value $1$.
The dataset 
$D_n = (x_1, \ldots, x_n)$ 
is a sequence of such observations.
Let $X_i$ be a new kind of variable (taking values in $\mathcal S$), indicating the identity of the statement selected for the $i$-th observation, so that $v_{X_i}=1$ is the observed event.
We write $\mathcal X:=\{X_i\}_{i=1}^\infty$ for the indexed collection of selection variables.
\end{definition}
\begin{numbered-remark}
Although $D_n$ is formally a sequence 
$(x_1, \ldots, x_n)$ of statements,
we freely identify it with the event $\bigwedge_{i=1}^n(X_i = x_i \land v_{X_i}=1)$ when it appears as a conditioning argument in a probability expression.
This is equivalent to conditioning on the event $X_{1:n} = D_n$, under the assumption that only true things are observed (i.e., $\exists i.~X_i = x \implies v_x = 1$), as is ensured by contextualization (Def.~\ref{def:contextualization}).
\end{numbered-remark}

Our framework also requires a formalization of causality.
For simplicity, we build on finite structural causal models (SCMs)~\citep{pearl2009causality}.
\unskip\footnote{
Given the expressivity of our model, we expect to ultimately need a formalism that better handles constraints and abstraction \citep{beckers2019abstractingcausalmodels,blom2019structuralcausalmodelscausal,richardson2025qualitativemechanismindependence,geiger2025causalabstractiontheoreticalfoundation},
but this is presently out of scope.} 
A causal model $W$, intuitively, assigns to each variable $v$ an equation $v = f_v(\ldots)$ that determines the value of $v$ as a function of other variables and independent randomness. 
Since there are so many variables in our setting, $W$ cannot be an explicit list of such equations.
Therefore, think of $W$ as a computer program that takes $v$ as input and produces the function $f_v$.

\begin{definition}[Causal Models $\cal W$]
A causal model $W$ (over $\cal V $) associates to each $v \in \cal V$ a finite set ${\rm pa}_v \subset \mathcal V$ of causal parents for $v$, a finite set of noise variables $\eta_v$ encoding independent randomness, and a \emph{causal mechanism} $f_v  : \{0,1\}^{|{\rm pa}_v| + |\eta_v| } \to \{0,1\}$, 
which is a function that encodes the equation $v=f_v({\rm pa}_v,\eta_v)$.
We identify the causal model $W$ with a new binary variable, where $W=1$ implies that all equations in $W$ (regarded as events that constrain the joint values of $\cal V$) must hold.
We consider only causal models with the property that $\widetilde Q_n(y \mid x)$ cannot be causal ancestors of $v_{X_i}$ for $i \leqslant n$.
Let $\cal W$ be the set of all such causal model variables,
and let $\cal N$ denote the set of all noise variables $\eta_v$, both disjoint from $\cal V$.
\end{definition}

In practice, we further restrict $\cal W$ to be consistent with any temporal ordering of variables; for example, the parents of a variable
cannot have a later contextualized timestep than the child.
We make two further notes.

\begin{enumerate}
\item We have identified each $W \in \mathcal{W}$ by a syntactic description, but also view it as as encoding structural-equation programs over the variables $\cal V$. Each $W$ is therefore simultaneously a program (specifying mechanisms), a generalized statement (asserting that the equations generated by the program $W$ hold), and a new Boolean variable taking value 1 only when the assertion holds. 
\item The same causal model can be implemented by many different programs; multiple distinct $W$s can simultaneously take the value $1$, such as those that are syntactically different but semantically equivalent. 
   Program equivalence can be subtle, though, and hence must be learned (to be formalized in Def.~\ref{def:correct-semantics}).
   Furthermore, since we are not handling logical constraints natively, there will be multiple distinct SCMs in our sense that cannot be distinguished (e.g., mass$_{\rm kg}$ $\to$ mass$_{\rm lbs}$ vs. mass$_{\rm lbs}$ $\to$ mass$_{\rm kg}$).
\end{enumerate}

\begin{definition}[True probability space] \label{def:data-generating-process}
Define $\mathcal V^+:=\mathcal V\cup\mathcal X\cup\mathcal N\cup\mathcal W$ to be the set of all relevant variables to be modeled. The sample space $\Omega$ consists of all joint assignments to $\mathcal V^+$ with the property that, whenever $W=1$ for some $W \in \mathcal W$, then all equations of the causal model $W$ (interpreted as constraints on $\mathcal V$) must hold. 
For a joint setting $\omega \in \Omega$ of all variables and a language element $x \in \mathcal L$, write $\omega \models x$ if $x$ is true in world $\omega$.
Let $P^*$ be the unknown ``true'' probability measure over $\Omega$. 
We further assume that the marginal $P^*(\mathcal N)$ on noise variables is uniform. 
\end{definition}

Thus $P^*$ governs both {\em which variables} are collected in the dataset (i.e., the values of $\mathcal X$) and {\em what values of these variables} are observed (i.e., the corresponding values $v_X$ which in this case are equal to one).
It is possible to extend the algebra of queriable events $\mathcal L$ to $\mathcal L^+$ over all of $\mathcal V^+$, by pushing all these concepts into the inductive loop of Def.~\ref{def:statements,language}.

We are trying to model a Bayesian posterior over our (enormous) sample space $\Omega$. This amounts to selecting a prior ($P_0$, defined next) and conditioning on our observations (the dataset $D_n$).

\begin{definition}[Prior and posterior]\label{def:bayes-posterior}\label{def:prior}
Denote by $P_0$ a prior over $\Omega$. We require that
$P_0(W=1)>0$ for every $W\in\mathcal W$, and that $\mathcal N$ be mutually independent in $P_0$. Define the posterior $P_n$ over $\Omega$ as $P_n(\cdot)=P_0(\cdot \mid D_n)$.
\end{definition}

\begin{numbered-remark}[Neutrality of the prior and Bayesian update]\label{remark:prior-neutrality}
The prior $P_0$ is intended to be broad and epistemically neutral---for example, a description-length-biased prior over causal models. 
We posit that Bayesian updating from $P_0$ to $P_n=P_0(\cdot\mid D_n)$, when unique, is then an example of a consequence-blind evidence-reweighting step: hypotheses are reweighted by how well they explain the fixed dataset $D_n$, and should not depend on whether or not deploying the resulting Predictor would cause some desired or harmful outcome.
We posit that under a broad prior over causal models, the mechanisms capable of supporting a difficult harmful trajectory should occupy a small part of model space, as most mechanisms do not encode the coordinated structure needed for such a trajectory. This intuition motivates the sparsity assumptions used later, and its connection to our eventual safety argument is discussed in Appendix~\ref{sec:neutralityprior}; it is supported by Prop.~\ref{prop:Pn-consequence-invariant}, which shows the consequence-invariance of $P_n$.
\end{numbered-remark}

\begin{definition}[Correct causal models] 
A causal model $W$ is \textbf{correct} if 
conditioning the prior on it yields $P^*$---that is, if $P_0(\cdot\mid W) = P^*(\,\cdot\,)$. 
Let $\mathcal W^*\subseteq\mathcal W$ denote the set of correct causal models. 
\end{definition}
\begin{requirement}[Realizable model class]\label{req:rich-enough-model-class}
The class $\mathcal W^*$ of correct models is non-empty.
\end{requirement}

\begin{numbered-remark}
Req.~\ref{req:rich-enough-model-class} is supported by the generality of the language (natural language, mathematics, and computer code of unrestricted length) used to express causal models, though with a prior that exponentially prefers shorter description length models). 
Relying on the Church--Turing thesis~\citep{turing1936computable}, every known scientific model would be expressible in this way.\footnote{See~\citet{bengio2025superintelligent} for a longer discussion of how multiple scientific theories with overlapping claims (such as physics and chemistry) can coexist in the same causal model $W$.  
The combination of insufficient data and computational limitations means that there will be multiple theories that remain competitive (like physics and chemistry), which motivates the SAI objective of approximating the Bayesian posterior under bounded compute.}
\end{numbered-remark}

The set of variables that can, in principle, have a posterior probability of zero or one will be of particular importance for formalizing our central assumption about ``neutrality'' of the data-collection process.

\begin{definition}[Deducibility]\label{def:deducible}
A variable $v\in\mathcal V^+$ is \textbf{deducible} from an event $E$ (e.g., a value assignment to other variables) if $P_0(E)>0$ and
$
P_0(v{=}1\mid E)\in\{0,1\}.
$
\end{definition}
Deducibility  can be computationally expensive to establish; accordingly, $Q$ is not assumed to identify deducible variables, and inherits $P_n$'s commitments on them only as far as approximation and compute allow. We will later see that this is the gap controlled by epistemic caution (Prop.~\ref{prop:epistemic-caution}).

\begin{definition}[Observable variables]\label{def:observable}
Let $\mathcal O\subset\mathcal V^+$ be the set of \textbf{potentially observable variables}. We require that $\mathcal O$ contain all statement-variables in $\mathcal V$ that may be directly observed, all selection variables $X_i\in\mathcal X$ that specify which statement-variable was observed, and all variables whose values are, in principle, deducible from the values of directly observed variables. We call variables in $\mathcal V^+\setminus\mathcal O$ \textbf{unobservable}.
We write $D_\infty$ for the limiting directly observed dataset and $P_\infty$ for $P_0(y\mid x,D_\infty)$.  Both $D_\infty$ and $P_\infty$ are random variables over $\Omega$ whose distribution is governed by $P^*$.
\end{definition}

\begin{numbered-remark}
The ``directly observed'' clause of Def.~\ref{def:observable} formalizes as $P^*(\exists n.~X_n=s)>0$. Note that $D_\infty$ will not include all of $\mathcal O$, because some variables (e.g.,  those with a specific time stamp) might never be observed despite being observable in principle.  $D_\infty$ will also not contain all deducible consequences because of limited compute. Unobservable statements include things like the intentions, thoughts, or beliefs of agents; the best we can generally do about them is infer their probability. Within $\mathcal O$, we are particularly interested in epistemic/probabilistic consequences, such as statements of the form $P_n(y\mid x)<b$, since they allow us to form confidence intervals over queries.
\end{numbered-remark}

With these definitions in place, we can now state the key technical requirement we need in order to ensure that the data-collection process reveals enough about the world in the limit: intuitively the dataset $D_n$ must
provide more and more information about the true causal structure in $P^*$ as $n$ increases.

\begin{requirement}[Informative data] \label{req:informative-data}
The dataset is informative about $P^*$.
Formally, the mutual information $I_{P_0}(D_n;\mathcal W)$ under the prior $P_0$ (of Def.~\ref{def:bayes-posterior}) converges to $I_{P_0}(\mathcal O;\mathcal W)$ as $n\to\infty$.
\end{requirement}

Beyond informativeness, it is important that the data not be adversarially constructed to cause a particular downstream effect with the deployment of the Bayesian posterior. We believe it would be difficult to select data this way, particularly in the presence of the contextualization (Def.~\ref{def:contextualization}), and very unlikely to happen by accident. Excluding such data-selection attacks lies outside the scope of the present analysis.

\subsection{The Predictor, and Requirements Imposed During Training}\label{sec:requirementspredictortraining}

The predictor $Q$, whose definition and training are the focus of this section, is trained towards the Bayesian posterior on a fixed dataset $D_n$ (and not to resolve uncertainty any further than that).
From this point forward, we therefore fix $D_n$, and hence the integer $n$, the values of the selection variables $X_1, \ldots, X_n$, and their realized values $x_1, \ldots, x_n$.  We also drop the subscript on $\widetilde Q:=\widetilde{Q}_n$, which we now begin to interpret.

\begin{definition}[Past and future variables] \label{def:past-and-future}
Let $\mathcal{V}_{\rm past}\subset\mathcal{V}^+$ be a subset of variables containing $\mathcal{W}$, the statements in $D_n$, $X_1, \ldots, X_n$, and every variable that can be a causal ancestor of some statement in $D_n$ under some causal model $W\in\mathcal W$---but not $\widetilde Q$. 
Define the future variables by
$
{\cal V}_{\rm future}:={\cal V}^+\setminus{\cal V}_{\rm past}.
$
The variables of $\widetilde Q(y\mid x)$ must therefore be in the future.
\end{definition}

\begin{numbered-remark}
    Some variables can be constant (not have variants that differ across time), in which case they should be in $\cal V_{\rm past}$ if they can cause elements of $D_n$. 
\end{numbered-remark}

\begin{definition}[Query family and Predictors]\label{def:query-family-and-Predictor}
Let $\boundQfam\subseteq\mathcal L\times\mathcal L$ be a fixed countable family of queries of bounded token length. 
Let $\mathcal Q:=(0,1)^{\boundQfam}$ and $\overline{\mathcal Q}:=[0,1]^{\boundQfam}$.
A (probabilistic) \textbf{Predictor} $Q\in\mathcal Q$ is a ({stateless}\footnote{Statelessness means that $Q$'s output depends only on the current query $(x,y)$ and not on any internal record of queries previously evaluated during deployment; in particular, $Q$ maintains no writable memory that persists across query evaluations.}) function 
that assigns to each $(x,y)\in\boundQfam$ a number $Q(y\mid x)\in(0,1)$, represented in binary to precision $B$.
The Bayesian posterior $P_n$ itself, viewed as the mapping $(x,y)\mapsto P_n(y\mid x)$, is an element of $\overline{\mathcal Q}$.
\end{definition}

In practice, $\boundQfam$ will consist of $(x,y)$ pairs with $y$ short, and $x$ in conjunctive normal form. The posterior target $P_n(y\mid x)$ may be $0$ or $1$ for deterministic observations, code outputs, proven theorems, contradictions, or other deductive consequences of the contextualized data.
For training purposes, it is often convenient to restrict finite Predictor outputs to $(0,1)$, working with logits, so that losses and gradients remain well-behaved.
This is a training convention only: exact posterior targets are represented in the closure $\overline{\mathcal Q}=[0,1]^{\boundQfam}$, and trainable Predictors may approach them as limits.

\begin{numbered-remark}
We write $Q(y\mid x)$ for the scalar prediction at $(x,y)$, and use the symbol $Q$ alone in expressions like $J(Q)$, for the Predictor as a function, allowing context to disambiguate. 
In particular, throughout the paper, we use the symbol $P_n$ for both the joint distribution $P_n \in \Delta\Omega$ and its evaluation on the bounded query family, $P_n : \boundQfam \to [0,1]$, that assigns $(x,y)\in\boundQfam$ to $P_n(y=1 \mid x=1)$.
\end{numbered-remark}

The rest of this subsection imposes requirements on the process of training $Q$, which we now define.

\begin{definition}[Training process] \label{def:training-process}
    The {\bf training process} is a distribution $\nu$ over training trajectories indexed by $t$, with randomness due, for example, to initialization, minibatch sampling, and SGD noise.  We write $\nu_t$ for its marginal over Predictors at step $t$.
\end{definition}

\subsubsection{Contextualization}

The process of transforming syntactic objects into semantically valid strings (i.e., statements) is part of the process of epistemic contextualization. The Predictor is then trained on this epistemically contextualized dataset; it takes as input queries, and outputs probability distributions for valid queries. We now introduce a series of assumptions and requirements for training the SAI Predictor.

\begin{definition}[Contextualized dataset]\label{def:contextualization}
Let a record be an observation in the sense of Def.~\ref{def:dataset}. A dataset is {\bf epistemically contextualized} (or just {\bf contextualized})  when its syntax marks it as exactly one of two kinds. A factual observation records a directly measured value, the output of an executed computation, or a proven statement. In contrast, a communication act records that a named agent uttered something, possibly identified as a stated claim about a property of the world, and it carries metadata identifying the source, venue, and timestamp. A dataset is contextualized when every record in it is contextualized; we call the process of transforming data sources into such records {\bf epistemic contextualization}.
\end{definition}

\begin{numbered-remark}
Communication-act records assert that the act occurred (with the stated metadata), not that the communicated content is true.
This syntactic separation is used to prevent uncontextualized statements from being treated as ground-truth facts. Contextualization defaults to communication acts unless the record is a direct measurement or a proven statement. We illustrate this with an example and then impose a necessary requirement of our pipeline.
\end{numbered-remark}

\begin{example}[Epistemic contextualization] \label{example:contextualization}
Verified facts use factual syntax: e.g., ``The Earth orbits the Sun,'' or the confirmed output of a computation.
By contrast, a source claim is recorded as a communication act: ``The Flat Earth Society FAQ stated in 2025: The Earth is constantly accelerating upward at $9.8\,\mathrm{m/s}^2$''.\footnote{\url{https://wiki.tfes.org/Flat_Earth_-_Frequently_Asked_Questions}}
The record asserts only that the claim was made; the underlying Flat Earth assertion is treated as a latent hypothesis, plausibly assigned negligible posterior probability because it is incoherent with the bulk of the data.
Author metadata (venue, timestamp, identity) are included in communication-act records, enabling the SAI to weight sources by their reliability.
\end{example}

\begin{requirement}\label{req:contextualization}
    The SAI Predictor $Q$ is only trained on contextualized data.
\end{requirement}

\begin{numbered-remark}
    A factual observation is a directly observed statement-variable, and a communication act is the directly observed event that an agent made an utterance, so both kinds of record name directly observed statement-variables, and thus get placed in $\mathcal{O}$ (see Def.~\ref{def:contextualization} and Def.~\ref{def:observable}), with the selection variables $X_i \in \mathcal{X}$ lying in $\mathcal{O}$ for the same reason. Hence, every contextualized record is observable, and the membership $x_i \in \mathcal{O}$ holds by Def.~\ref{def:dataset} and Def.~\ref{def:contextualization}.
\end{numbered-remark}

\begin{numbered-remark}\label{remark:managing-contextualization-errors}
Epistemic contextualization may introduce rare errors (e.g., mis-specified observations).
In safety-critical use, such uncertainty should be handled conservatively by increasing abstention/uncertainty, and by using data-quality checks to flag suspicious observations. Our safety guarantees rely on the SAI Predictor being trained on a contextualized dataset of sufficient size to achieve the required semantic generalization (see our later Assumption~\ref{assume:adequate-natural-language-semantics}). When the semantic anchoring of a factual latent is weak, the appropriate behavior is similarly calibrated uncertainty or abstention. Hence, the SAI Predictor is not initialized from a pre-trained LLM. However, experiments in which an SAI Predictor is initialized from a pre-trained LLM and then fine-tuned with the SAI objective and contextualized data could serve as evidence of the comparative safety and reasoning accuracy advantages of the SAI framework.
\end{numbered-remark}

\subsubsection{Modeling the Consequences of Deployment}
To motivate the next requirement, consider that conditioning on $D_n$ updates beliefs (the posterior $P_n$ and its estimator $Q$) but does not change the world. Deploying those predictions, however, as modeled by variable $\widetilde{Q}(y\mid x)$ for a query $(x,y)$ can affect future events, as it is an action that is best modeled causally. Recall that we regard the variable(s) $\widetilde{Q}_n(y\mid x)$ as the (binary representation of the) deployed predictor's output probability.  Thus, the intervention $\mathrm{do}(\widetilde{Q}(y\mid x)=q)$ sets the predictor's output to $q$ in our model.

\begin{requirement}[Predictions as interventions] \label{req:predictions-as-interventions}
The predictor must be trained to correctly answer interventional queries of the form $Q(y\mid x,~{\rm do}(\widetilde{Q}(a\mid b)=q))$. Furthermore,
when deployed predictions $\widetilde{Q}$ are variables in roll-outs involving downstream effects and used to update $Q$ itself (updating parameters at training time or predictions themselves at run-time) no signal flows back through $\widetilde{Q}$ into $Q$.
\end{requirement}
\begin{numbered-remark}[Implementing \texttt{do()}-syntax in practice]
Req.~\ref{req:predictions-as-interventions} can be implemented by exposing the Predictor during training to interventional statements written with ${\rm do}(\cdot)$-syntax, generated for example from sampled causal subgraphs.
Furthermore, we can isolate the effect of $\tilde Q$ by intentionally creating toy opportunities for performative prediction, and including the results in the dataset.
The key requirement is semantic: $Q$ must interpret deployed predictions as interventions: adding ${\rm do}(\tilde Q(b\mid a) = q)$ as context to a query only alters $Q$'s probabilistic assessment of $\mathcal V_{\rm future}$, and not $\mathcal V_{\rm past}$. 
\end{numbered-remark}

\begin{numbered-remark}[Implementing interventions in theory]
    Technically speaking, $\mathrm{do}(\tilde{Q}(a\mid b) =q) \notin \mathcal L$, and hence these interventional queries are not part of our model as defined.
    However, intervention can be viewed as conditioning on a response variable \citep{rubin1974estimating,Balke1994} even in generalized causal models \citep{richardson2025qualitativemechanismindependence}.
    To make this type-check, we would need to take causal mechanisms, which are currently determined by $W$, as primitive, and then absorb the definition of causal models (among other things) within the mutual induction of Def.~\ref{def:statements,language}.  We leave this level of rigor for a future endeavor where we can benefit from it.
\end{numbered-remark}

\begin{numbered-remark}[Predictions and interventions]
    For the current training run, $D_n$ denotes the records already observed and held fixed as training data. Post-deployment consequences of the SAI Predictor's outputs are modeled as future variables and may be queried via interventional predictives; however, they do not feed back into the posterior or training objective of that same run. Thus, the current Bayesian posterior over causal models depends only on the fixed realized dataset $D_n$. Given $P_0$, $D_n$, and a query $(x,y)$, the posterior conditional $P_n(y\mid x)=P_0(y=1\mid x=1,D_n)$ is uniquely defined whenever $P_0(x=1,D_n)>0$. If some deployment consequences are later observed, they may be incorporated only as ordinary contextualized observations in a future enlarged dataset $D_{n'} \supseteq D_n$ for a subsequent training run. Sampling downstream cause–effect chains within the SAI training procedure serves only to define coherent posteriors. Treating deployed predictions as interventions, therefore, does not introduce a fixed-point ambiguity into the Bayesian target itself, and any performative fixed-point issue arises only when the Scaffold chooses what to deploy (see Sec.~\ref{sec:guardrails}).
\end{numbered-remark}

\begin{requirement}[Past--future mechanism sharing]
\label{req:past-future-mechanism-sharing}

For each $y\in\mathcal{V}_{\rm future}$, there exists $y'\in\mathcal{V}_{\rm past}$ such that the causal mechanisms $f_y=f_{y'}$ are identical. In other words, a single mechanism object is shared between $y$ and $y'$, so future predictions must reuse mechanism uncertainty already constrained by past data. 
\end{requirement}

\begin{numbered-remark}[Past--future stationarity]
Req.~\ref{req:past-future-mechanism-sharing} is not satisfied unless the future variables draw on the identical mechanism objects already present in $\mathcal{V}_{\rm past}$.
This formalizes the assumption that causal laws are time-invariant, where the mechanisms governing future events are those governing analogous past events (e.g., the same laws of physics apply throughout time and space). With more data we may legitimately change our beliefs about which mechanism is correct (i.e., the posterior weight $P_n(W=1)$ assigned to each causal model $W$), but the time-invariance of causal laws ensures that there will exist an identical causal mechanism $f_y=f_{y'}$ for each $(y', y) \in \mathcal{V}_{\rm past} \times \mathcal{V}_{\rm future}$.\footnote{See~\citet{bengio2025superintelligent} for further discussion and~\citet{deleu2022bayesian} for an early implementation.}
Note further that $P_n$ is an idealized intractable target in general; throughout, ``approaches $P_n$'' should be read as ``approaches $P_n$ up to approximation error determined by the compute budget and modeling restrictions,'' with the quantification of this computational uncertainty left to future work.
\end{numbered-remark}

\subsubsection{A Disinterested Training Objective}

The key thing that we gain from a causal model of the world, including predictor outputs, is the ability to disconnect the training signal from realized outcomes in the future.
We discuss this at length in Sec.~\ref{sec:consequence-invariance};
for now, we simply assert that training towards the Bayesian posterior is one way of achieving this intuitively (modulo performative prediction, which we handle in Sec.~\ref{sec:guardrails}). 

\begin{requirement}[Bayesian posterior as training target] \label{req:SAI-Bayesian-objective}
The SAI Predictor's training process $\nu$ is generated by optimizing a non-negative objective $J$, defined on $\overline{\mathcal Q}$,
whose unique global minimizer is the Bayesian posterior $P_n$:
\[
J(P_n)=0,
\qquad
J(Q)>0\quad\text{for all }Q\in\overline{\mathcal Q},\ Q\neq P_n.
\]
Equivalently, $J(Q)=0$ if and only if $Q(y\mid x)=P_n(y\mid x)$ for every $(x,y)\in\boundQfam$. 
For trainable Predictors $Q\in\mathcal Q=(0,1)^{\boundQfam}$, the case where $P_n(y\mid x)\in\{0,1\}$ is interpreted as a boundary target in $\overline{\mathcal Q}$. 
Thus the requirement is that $P_n$ is the unique minimizer of the extended objective on $\overline{\mathcal Q}$, and that any minimizing sequence $Q_k\in\mathcal Q$ converges coordinatewise to $P_n$.
\end{requirement}

\begin{numbered-remark}[Honesty of the Bayesian target]
The Bayesian posterior $P_n$ is used as the honest reference because it has no separate reporting policy or downstream objective.
Its answer to a query is its posterior probability, determined by $P_0$, $D_n$, and the model class. While $P_n$ may be uncertain or mistaken under misspecification or insufficient data, the relevant point is that it is posterior-faithful and non-strategic.
In particular, $P_n$ is invariant to re-labeling of future outcomes that preserve the prior (Prop.~\ref{prop:Pn-consequence-invariant}).
\end{numbered-remark}

This alone is not enough to prevent implicit agency in practice.
Even with the Bayesian target $P_n$ formed from $P_0$ and $D_n$ by neutral evidence reweighting, we must also ensure that the training process likewise avoids introducing any explicit preference for downstream deployment effects. 
We begin to do this as follows:

\begin{requirement}[No explicit consequence-dependent training signal]
\label{req:no-explicit-consequence-signal}
The objective $J$, update rule, and checkpoint-selection rule used by the training process $\nu$ must not contain any algorithmic signal that rewards, penalizes, ranks, or selects a Predictor because of the realized, predicted, or desired downstream consequences of deploying its answers.
\end{requirement}

But is a lack of explicit signal enough? What about implicit signals? We return to this in Sec.~\ref{sec:consequence-invariance}.

\subsection{Deployment: Guardrails and Scaffolds}
    \label{sec:guardrails}

At last, we can interpret the variables $\widetilde{Q}$ defined at the very beginning of Sec.~\ref{sec:preliminaries}.

\begin{definition}[Deployed Predictor] \label{def:deployed-Predictor}
The \textbf{deployed Predictor} $\widetilde{Q}$ is a function
\[
\widetilde{Q}\colon \mathbb{Q}\;\longrightarrow\;((0,1) \cap \mathbb Z/2^B)\sqcup\{{\tt noprediction}\}
\]
that represents the predictions released by an SAI System for a given fixed dataset $D_n$, and whose components $\widetilde Q(y\mid x)$ are elements of $\mathcal V_{\rm future} \subset \mathcal V$. The special symbol ${\tt noprediction}$ for a query $(x,y)$ indicates that no prediction is deployed for that query. Formally, encode this mask as the zero-th bit of $\widetilde Q(y\mid x)$.
\end{definition}

$Q$ itself never outputs ${\tt noprediction}$, but we do not intend to simply deploy the predictor (i.e., ${\rm do}(\widetilde{Q} = Q)$), as this could lead to misuse. Instead, we wrap $Q$ within a guardrail, to get the combined system $\gamma_Q$. 

\begin{numbered-remark}[Distinguishing related objects: $Q$, $\widetilde{Q}$, and $\gamma_Q$]
\label{rem:three-objects}
The \textbf{trained Predictor} $Q$ (Def.~\ref{def:query-family-and-Predictor}) maps queries to probability estimates approximating $P_n$.  The \textbf{deployed Predictor} $\widetilde Q$ (Def.~\ref{def:deployed-Predictor}) is a collection of variables describing the Scaffold's actual release, which may differ from $Q$ (see Def.~\ref{def:performative-prediction-constraint} and \S\ref{subsec:performative}).  The \textbf{guardrailed Predictor} $\gamma_Q$ is the specific deployment policy analyzed in the safety theorem, and will be defined formally (Def.~\ref{def:guardrailed-Predictor}) in the safety analysis, after we clarify what exactly it is supposed to guard against. 
\end{numbered-remark}

\begin{definition}[Scaffolds, Systems, Guardrails]
    An \textbf{SAI Scaffold} is a procedure that calls an SAI Predictor $Q$ and other tools to produce deployed outputs.  An \textbf{SAI System} is the composition of an SAI Scaffold with the Predictor(s) it calls.  An \textbf{SAI Guardrail} is an SAI Predictor used to estimate downstream risks of candidate outputs and to reject outputs whose estimated risk exceeds specified thresholds.
\end{definition}

\begin{numbered-remark}[On agency and Scaffolds]
    The SAI Scaffold may query both an untrusted agentic model and an SAI Guardrail; the resulting overall SAI \emph{System} may thus be agentic even if the Predictor $Q$ is not. 
\end{numbered-remark}

\begin{definition}[Performative prediction consistency constraint]\label{def:performative-prediction-constraint}
For a query $(x,y)$ to be deployed with prediction $q$, the \textbf{performative prediction consistency constraint} requires
\[
q = Q\!\left(y~\Big|~ x,\, \mathrm{do}(\widetilde Q(y\mid x)=q)\right).
\]
That is, the deployed value $q$ must remain $Q$'s prediction after accounting for the causal effect of deploying it. Several such fixed points may exist, or none at all.
\end{definition}

\begin{numbered-remark}
Def.~\ref{def:performative-prediction-constraint} states the coherence condition for a deployed prediction; informally, it requires that the prediction the Scaffold releases be one that the Predictor would still endorse after accounting for the causal effect of deploying it (e.g., a price forecast should be consistent with how the market would move upon observing it). Although other predictive interventions are possible, the predictions satisfying this constraint are the most coherent with the SAI Predictor's posterior model of the world and are thus more accurate.
\end{numbered-remark}

\section{The Case for Accuracy}
\label{sec:argspredictoracc}

This section provides arguments for expecting the Predictor to be accurate. We begin by introducing: (a) key accuracy targets that our pipeline aims to achieve, and (b) the Explainer, which may be thought of as the SAI's `System 2' component (Sec.~\ref{sec:accuracy-targets}). Thereafter, we outline assumptions required to achieve our accuracy targets (Sec.~\ref{sec:semantic-assumptions}), and close with arguments for the accuracy of the SAI Predictor (Sec.~\ref{sec:accuracy-results}).

\subsection{Accuracy Targets and The Explainer}\label{sec:accuracy-targets}

\begin{definition}[Confidence classifier]\label{def:confidence-classifier}
A \textbf{confidence classifier} is a probability predictor used with two thresholds: it returns True when the predicted probability is sufficiently close to $1$, False when it is sufficiently close to $0$, and Unknown otherwise.
\end{definition}

\begin{definition}[$\rho$-epistemic caution]\label{def:epistemic-caution}
The \emph{epistemic-caution failure set} at level $\rho \in (0,1)$ of a Predictor $Q$ with respect to the posterior target $P_n$ is
\[
\mathcal F_\rho(Q,P_n)
:=
\left\{(x,y)\in\boundQfam:\ (Q(y\mid x)>1-\tfrac{\rho}{2}\ \text{and}\ P_n(y\mid x)<1-\rho)
~\text{or}~ (Q(y\mid x)<\tfrac{\rho}{2}\ \text{and}\ P_n(y\mid x)>\rho) \right\}.
\]
That is, $\mathcal F_\rho(Q,P_n)$ collects queries on which $Q$ asserts confidence (within $\rho/2$ of $0$ or $1$) that $P_n$ does not share (its probability being at least $\rho/2$ farther from the same extreme). Given a query distribution $\lambda_n$ on $\boundQfam$ and a
training-indexed family of Predictors $(Q_t)_t$, the family is
\emph{$\rho$-epistemically cautious under $\lambda_n$} if
$
        \lambda_n(\mathcal F_\rho(Q_t,P_n))\to 0
$
as training compute $t$ increases.  It is \emph{epistemically cautious} if this holds for every $\rho\in(0,1)$.  For a fixed finite $t$, one can analogously speak of $(\rho,\varepsilon)$-epistemic caution when $\lambda_n(\mathcal F_\rho(Q_t,P_n))\le \varepsilon$.
\end{definition}

\begin{definition}[Explainer] \label{def:explainer}
The SAI training procedure associates with the conditional probability Predictor $Q$ a trained generative proposal model, called the {\bf Explainer}, which samples a small set of candidate latent statement-variables, as well as optional structure over them (e.g., a chain or a directed acyclic graph determining how to factor a joint probability over latents as a product of conditional probabilities), relevant to explain any given query $(x,y)$. The Explainer proposes which latent statements to consider; their truth values are sampled from the Predictor $Q$ itself.
\end{definition}

\begin{numbered-remark}[Two roles of the Explainer]
\label{rem:explainer-roles}
The Explainer has two roles and is a computational-efficiency device, not an independent source of truth.
At \textbf{training time}, it proposes latent statements most relevant for explaining a given query; $Q$ then assigns probabilities to them and, through those assignments, receives gradients that allow it to learn about unobserved explanatory aspects of the world (e.g., the truth of a claim embedded in a communication act).
At \textbf{run time}, the Explainer can improve predictions of $Q$ via Monte-Carlo averaging. Because exact marginalization over the enormous space of latent statement-variables is intractable, the Explainer helps to approximate it by focusing the Monte-Carlo averaging on a small set of latent variables whose statements are generated by the Explainer. It can sample multiple such sets of latent-variable statements, have $Q$ evaluate each of their truth-values, and return the resulting averaged probability. When we write $Q(y\mid x)$ in this paper, we mean the full prediction, whether from the neural-net Predictor alone or from such a Monte-Carlo average with the Explainer to select latent variables, unless otherwise specified.
\end{numbered-remark}

\begin{definition}[Semantic coverage]\label{def:correct-semantics}
A Predictor $Q$ has \textbf{semantic coverage} for a query family if it does not confidently exclude the intended reading except where the data deductively warrant it, so that residual interpretive ambiguity is represented as posterior uncertainty in $Q$ rather than collapsed into a confident misreading. 
\end{definition}

Req.~\ref{req:latent-truth} below ensures that training provides gradients on the estimated probabilities of unobserved factual variables across the domains represented in the dataset. Those gradients would not be available under direct maximum-likelihood training on observed statements without a latent-variable model.

\begin{requirement}[Truth value of communicated claims as latent variables] \label{req:latent-truth}
When the query involves a communication act asserting a claim $x$, the Explainer must include among its latent explanatory variables the world-level truth variable corresponding to $x$ itself. 
\end{requirement}

\begin{numbered-remark}[Semantic anchoring of factual latents]
\label{rem:semantic-anchoring}
We expect that the Predictor will learn a factual latent statement-variable based on the many observations it helps explain---including paraphrases, translations, entailments, contradictions, measurements, deductions, source metadata, and communication acts across authors, times, domains, and languages. The same sequence of tokens names both a factual latent and the content of whatever is claimed by a communication act, so learning one shared representation is the most effective way of compressing the data. If we imagine a case where the model's interpreted latent contains a divergent hidden meaning, this should incur a corresponding description-length and cross-context-compression penalty it has to earn back (see also Assumption~\ref{assume:factual-latent-dominance}). The further reason for expecting this compression argument to deliver factual latents which are accurate comes from the combination of diversified contextualized data and the Predictor/Explainer objective; an incorrectly interpreted latent may correlate with some records, but it will generally fail to compress and cohere with all the semantically related evidence. Additionally, by Req.~\ref{req:informative-data}, the data asymptotically extract all model-relevant information contained in $\cal O$.\footnote{Sec.~\ref{sec:semanticanchoring} discusses the connection between semantic anchoring and our later safety arguments.}   
\end{numbered-remark}

\begin{definition}[Accurate Predictor] \label{def:accurate-Predictor}
A Predictor $Q$ trained from $D_n$ is an {\bf accurate Predictor} if it approaches its target $P_n$, covering the intended interpretation of the statements with which it is queried (Def.~\ref{def:correct-semantics}), and is epistemically cautious (Def.~\ref{def:epistemic-caution}).
\end{definition}

\subsection{Achieving Accuracy: Semantic and Engineering Assumptions}\label{sec:semantic-assumptions}

\begin{claim}[No fundamental loss of LLM-style predictive expressivity] \label{claim:SAI-capabilities-entail-LLM-capabilities}
Any SAI Predictor that approximates its training target well retains enough expressive power that standard next-token predictive distributions can, in principle, be recovered from the contextualized representation.
\end{claim}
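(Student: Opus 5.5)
The plan is to prove Claim~\ref{claim:SAI-capabilities-entail-LLM-capabilities} by a reduction. Standard next-token prediction is itself a query about an observable communication act, so it lies inside the family that $P_n$ answers. Write a raw text prefix $w_{1:k}$ from some source $S$, with metadata $m$ (author, venue, timestamp). Contextualization (Def.~\ref{def:contextualization}) records this as the communication act ``$S$ wrote $w_{1:k}$ (with metadata $m$)''. The event ``$S$ continued with token $w_{k+1}$'' is then a valid statement in $\mathcal S_0$, and it is potentially observable in the sense of Def.~\ref{def:observable}. So I would define the query
\[
(x,y) = \bigl(\text{``$S$ wrote $w_{1:k}$ with $m$''},\ \text{``$S$'s next token after $w_{1:k}$ is $w_{k+1}$''}\bigr),
\]
and argue that $P_n(y\mid x)$ is exactly the Bayesian predictive for the next token of that source.

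\textbf{Step 1: exact recovery from $P_n$.} Enumerate the vocabulary $\mathcal T$. Querying $Q$ on each candidate $y_\tau$, $\tau\in\mathcal T$, and normalizing yields a distribution $\hat p(\tau\mid w_{1:k})$. These candidate events are mutually exclusive and exhaustive under $P_n$, since a source emits exactly one next token. Hence at $Q=P_n$ no normalization is needed and $\hat p$ is the posterior predictive.

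When the source is unspecified, which matches the LLM setting of marginalizing over authors, I would marginalize over $S$ and $m$. This can be done in either of two ways: query conjunctions/disjunctions in $\mathcal L$ directly, or use the Explainer's Monte-Carlo averaging over latent source identities (Rem.~\ref{rem:explainer-roles}).

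\textbf{Step 2: approximate recovery.} Assume the bounded-length query family $\boundQfam$ contains these queries. For a fixed $k$ this is a finite set per prefix, so Prop.~\ref{prop:good-generalization-to-observables} applies: $Q\to P_n$ on observables. Normalizing a finite vector of coordinatewise-convergent positive numbers is continuous wherever their sum is bounded away from $0$. Therefore $\hat p$ converges to the $P_n$ predictive, and the total-variation error is at most $|\mathcal T|\max_\tau|Q-P_n|$ over the relevant coordinates. Sequence-level distributions then follow by the chain rule.

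\textbf{Step 3: compare with an LLM.} An LLM trained by maximum likelihood on the raw corpus estimates $P^*(w_{k+1}\mid w_{1:k})$, marginalized over sources. Under Req.~\ref{req:rich-enough-model-class} and Req.~\ref{req:informative-data}, the posterior predictive on these observable communication acts approaches $P^*$ as $n$ grows. So the SAI's recovered distribution matches the LLM's target in the limit. Moreover, it carries strictly more information, because the source metadata is not discarded.

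\textbf{Main obstacle.} I expect the hardest step to be justifying that the contextualized representation loses nothing in the reduction. Contextualization could drop or alter information: it might not preserve raw text verbatim inside the communication-act record, or metadata might be missing. In addition, the bounded token length of $\boundQfam$ caps the prefix length $k$.

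I would handle this in two parts. First, I would \emph{require} verbatim quotation of the uttered content, which is consistent with Example~\ref{example:contextualization}; then the map from raw text to records is injective and the predictive can be pulled back. Second, I would state the bounded-length restriction explicitly as a limitation. This is the point where the claim is informal rather than a proposition, since it rests on the contextualizer's fidelity and on Assumption~\ref{assume:adequate-natural-language-semantics}.
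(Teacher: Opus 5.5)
Your reduction is essentially the paper's own argument. The paper likewise notes that contextualization only adds metadata and removes no information. It then recovers next-token prediction by introducing, for each prefix, a statement whose truth value is the disjunction over all completions sharing a given next token, which is exactly what your statement ``$S$'s next token after $w_{1:k}$ is $\tau$'' expresses, so the posterior over these statements gives the next-token softmax. Your Steps 2--3 (normalization error bound, comparison with an LLM's target) go beyond the paper's informal argument. One correction: take the coordinatewise convergence $Q\to P_n$ on these non-deducible queries from Req.~\ref{req:SAI-Bayesian-objective} (or Assumption~\ref{assume:Qn-converges-in-lambda}), not from Prop.~\ref{prop:good-generalization-to-observables}, which concerns deducible statements used as confidence classifiers.
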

\begin{proof}[Argument for Claim~\ref{claim:SAI-capabilities-entail-LLM-capabilities}]
This is an engineering requirement for any implementation meant to support the later capability claims, though the present paper does not empirically verify that current training procedures already satisfy it. Although the SAI's contextualized data has a different syntactic form from standard LLM pre-training data, it does not remove information---it simply contains additional information to distinguish communication acts from factual statements.
Next-token prediction is recoverable via a syntactic device: for any statement prefix, one can introduce a statement whose truth value is the disjunction over all completions sharing a given next token, so that the SAI's posterior over such statements yields the standard next-token softmax.
Succeeding at the SAI training objective while exploiting this construction therefore entails retaining the information needed for next-token prediction, using the same architectures and with the same dataset and compute as standard LLMs.
\end{proof}

\begin{assumption}[Factual latent dominance] \label{assume:factual-latent-dominance}    
With enough data, the causal models $W$ that dominate in the Bayesian posterior $P_0(W\mid  D_n)$ typically include latent factual variables (non-communication-act properties of the world) that play a central explanatory role, although communication-act variables may also be present.
\end{assumption}
\begin{proof}[Argument for Assumption~\ref{assume:factual-latent-dominance}]
Communication-act variables (e.g., ``someone asserted $x$'') are typically downstream of factual variables in the causal graph: human beliefs are generally caused by states of the world and they can affect future communications through downstream properties of the world (such as mental states of other agents). This is also the reason scientific theories in psychology and the social sciences rely on hypothesized properties of the world to form causal structure. A causal model that includes the relevant upstream factual variables therefore assigns higher likelihood to observed communication acts than one treating those acts as primitive. Furthermore, a single factual latent (e.g., ``compound $C$ has property $P$'') can explain communication acts across many contexts unavailable to communication-act variables, which must account for each context separately; this argument becomes stronger when the proposed factual latent is semantically anchored (see Rem.~\ref{rem:semantic-anchoring}). Factual variables are mutually constrained within any causal model: fixing one propagates consistency pressure across the entire dataset, whereas different agents can assert contradictory things with both assertions being ``true'' as communication acts having occurred.
\end{proof}

\begin{assumption}[Adequate natural language semantics] \label{assume:adequate-natural-language-semantics}
With enough contextualized data and computation, the trained Predictor has semantic coverage (Def.~\ref{def:correct-semantics}) on the statement classes of interest. 
\end{assumption}
\begin{proof}[Argument for Assumption~\ref{assume:adequate-natural-language-semantics}]
Existing large language models demonstrate clear semantic competencies~\citep{minaee2024large,shultz2024text}, and Claim~\ref{claim:SAI-capabilities-entail-LLM-capabilities} asserts that an SAI Predictor approximating its target well inherits comparable capability.
However, linguistic competence alone does not guarantee that a latent variable written in factual syntax has the intended denotation.
Additional support comes from semantic anchoring (Rem.~\ref{rem:semantic-anchoring}), under which a factual latent must behave coherently across paraphrases, translations, entailments, contradictions, measurements, deductions, communication acts, and source/context metadata. The compositionality of language then supports generalization to unseen statements insofar as these anchoring constraints are learned and respected.
Req.~\ref{req:latent-truth} helps prevent collapse of factual syntax into communication-act semantics by forcing the model to maintain a separate representation of whether the underlying claim is true, but the intended meaning of that representation is fixed by its role in the surrounding network of constraints.
\end{proof}

\begin{numbered-remark}[Status of the representational and semantic premises]
The assumptions of the present section are modeling and engineering premises supported by more informal arguments, upon which later claims will be implicitly conditioned. Where possible, these assumptions should therefore be tested empirically in any concrete SAI implementation. In the case of Assumption~\ref{assume:factual-latent-dominance}, one may inspect samples from the Explainer and check cross-context consistency between factual-syntax and communication-act-syntax queries about the same underlying claim. With respect to Assumption~\ref{assume:adequate-natural-language-semantics}, one may engage in empirical stress-testing with minimal-pair evaluations. Examples include contrasting factual vs.\ communication-act syntax; paraphrase and translation invariance tests; contradiction and entailment tests; and cross-domain settings where factual syntax applies to statement types seen mostly in communication-act form during training. 
\end{numbered-remark}

\subsection{Generalization, Epistemic Caution, and the Case for Accuracy}\label{sec:accuracy-results}

In conjunction with Assumption~\ref{assume:Qn-converges-in-lambda} below, we prove that the SAI Predictor has $\rho$-epistemic caution and present a result about the asymptotic properties of the SAI Predictor as a confidence classifier.

\begin{assumption}\label{assume:Qn-converges-in-lambda}
Let $\lambda_n$ be the distribution over valid deployment queries $(x,y) \in \mathbb{Q}$ induced by the run-time query-selection process under $P^*$ after training on $D_n$. The trained SAI Predictor $Q_t$ at step $t$ of training converges to the Bayesian posterior $P_n$ in $\lambda_n$-probability: there exists a sequence $\delta_t \to 0$ such that:
    $$
    \lambda_n(\{(x,y) \in \mathbb{Q} : |Q_t(y \mid x) - P_n(y \mid x)| > \delta_t\}) \to 0
    \qquad\text{ as } t \to \infty.
    $$
\end{assumption}

\begin{proposition}[SAI has epistemic caution] \label{prop:epistemic-caution} 
For all  $0<\rho\ll 1$, there exists a training time $t$ such that the trained SAI probabilistic Predictor $Q_t$, used as a confidence classifier (Def.~\ref{def:confidence-classifier}), has $\rho$-epistemic caution under the $P^*$ deployment-query distribution $\lambda_n$.
\end{proposition}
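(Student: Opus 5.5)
The proof is a direct reduction from the convergence in $\lambda_n$-probability of Assumption~\ref{assume:Qn-converges-in-lambda} to the vanishing of the failure set $\mathcal F_\rho(Q_t,P_n)$ of Def.~\ref{def:epistemic-caution}. The key observation is that every failure query requires a gap between $Q_t$ and $P_n$ of more than $\rho/2$. If $(x,y)\in\mathcal F_\rho(Q_t,P_n)$ is in the upper branch, then $Q_t(y\mid x)>1-\rho/2$ and $P_n(y\mid x)<1-\rho$, so $|Q_t(y\mid x)-P_n(y\mid x)|>\rho/2$. The lower branch gives $Q_t(y\mid x)<\rho/2$ and $P_n(y\mid x)>\rho$, hence again a gap exceeding $\rho/2$. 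This yields the set inclusion
\[
\mathcal F_\rho(Q_t,P_n)\subseteq\{(x,y)\in\boundQfam:\ |Q_t(y\mid x)-P_n(y\mid x)|>\rho/2\}.
\]

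Next I compare the threshold $\rho/2$ with the sequence $\delta_t$ of Assumption~\ref{assume:Qn-converges-in-lambda}. Since $\delta_t\to 0$, fix any $\rho\in(0,1)$ and choose $t_0$ with $\delta_t\le\rho/2$ for all $t\ge t_0$. For such $t$, the set of queries with gap exceeding $\rho/2$ is contained in the set with gap exceeding $\delta_t$. Combining this with the inclusion above and monotonicity of $\lambda_n$ gives
\[
\lambda_n(\mathcal F_\rho(Q_t,P_n))\le\lambda_n(\{|Q_t-P_n|>\delta_t\})\longrightarrow 0 .
\]
This is exactly $\rho$-epistemic caution in the sense of Def.~\ref{def:epistemic-caution}. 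For the finite-$t$ phrasing of the statement, I would add that for any $\varepsilon>0$ some finite $t$ achieves $(\rho,\varepsilon)$-epistemic caution. Because $\rho$ was arbitrary, the family is in fact epistemically cautious.

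Two points need care, though neither is a real obstacle. The first is measurability of the sets involved, which is trivial because $\boundQfam$ is countable. The second is the interpretation of ``used as a confidence classifier'' (Def.~\ref{def:confidence-classifier}). I would fix the classifier's thresholds at $\rho/2$ and $1-\rho/2$, so that a confident True or False answer not shared by $P_n$ is precisely membership in $\mathcal F_\rho$. The substantive content is therefore borrowed entirely from Assumption~\ref{assume:Qn-converges-in-lambda}. The proposition itself is an elementary epsilon argument, and the factor-of-two margin built into Def.~\ref{def:epistemic-caution} is what makes it work.
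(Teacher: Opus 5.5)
Your proof is correct and follows essentially the same route as the paper: you show $\mathcal F_\rho(Q_t,P_n)$ is contained in the set of queries where $Q_t$ and $P_n$ differ by more than $\rho/2$, then take $t$ large enough that $\delta_t\le\rho/2$ and apply Assumption~\ref{assume:Qn-converges-in-lambda}. Your explicit check of the strict gap in both branches and the monotonicity step spell out what the paper compresses into a single line.
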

\begin{proof}
With $\mathcal F_\rho(Q_t,P_n)$ as in Def.~\ref{def:epistemic-caution}, any $(x,y)\in\mathcal F_\rho(Q_t,P_n)$ satisfies $|Q_t(y\mid x)-P_n(y\mid x)|>\rho/2$, so $\mathcal F_\rho(Q_t,P_n)\subseteq\{(x,y)\in\boundQfam:\ |Q_t(y\mid x)-P_n(y\mid x)|>\rho/2\}$.
By Assumption~\ref{assume:Qn-converges-in-lambda}, eventually $\delta_t<\rho/2$ and $\lambda_n(\mathcal F_\rho(Q_t,P_n))\to 0$, so $Q_t$ is $\rho$-epistemically cautious under $\lambda_n$ in the sense of Def.~\ref{def:epistemic-caution}.
\end{proof}

\begin{numbered-remark}[Caveats on epistemic caution]
\label{rem:caveats-epistemic-caution}
The guarantee we provide is distributional. For any fixed $Q$ with finite training, there may exist queries (with small $P^*$-measure) where $Q$ is overconfident, analogous to how strongly established scientific theories can be superseded when new experiments probe previously untested regimes. Such overconfidence becomes increasingly $P^*$-rare as training progresses, though safety-critical applications may motivate conservatism (e.g., wide enough confidence intervals). Separately, these arguments do not exclude the possibility that the SAI Predictor implicitly ``knows'' deductions not revealed through natural language outputs (e.g., in hidden layers), but such hidden knowledge does not violate the epistemic caution condition as defined.
\end{numbered-remark}

\begin{proposition}[Generalization for deducible statements, confidence classifier] \label{prop:good-generalization-to-observables}
With enough data and computation, and considering the subset of observable statements whose truth value is deducible from the observed data, the SAI Predictor can be made to approach the behavior of the asymptotic Bayesian posterior when both are used as confidence classifiers.
\end{proposition}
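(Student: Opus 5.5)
The plan is to split the statement into two halves and bridge them with the triangle inequality. First, I show that on deducible observable statements the finite-data posterior $P_n$ approaches the asymptotic posterior $P_\infty$. Second, I show that $Q_t$ approaches $P_n$; this half is essentially the content of Assumption~\ref{assume:Qn-converges-in-lambda}, and Prop.~\ref{prop:epistemic-caution} packages it in confidence-classifier form. The conclusion then follows by combining the two. Concretely, I will fix thresholds $\rho$ and show that, for $n$ and $t$ large, the $\lambda_n$-measure of deducible observable queries on which the confidence-classifier outputs of $Q_t$ and $P_\infty$ disagree is small. Following Def.~\ref{def:epistemic-caution}, "disagree" means that one of them returns True or False while the other is more than $\rho/2$ away from that extreme.

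For the data half, the setup is as follows. For a deducible observable $y$, Def.~\ref{def:deducible} and Def.~\ref{def:observable} give $P_\infty(y\mid x)\in\{0,1\}$ almost surely under $P^*$. I will argue that $P_n(y\mid x)\to P_\infty(y\mid x)$ in two ways.
\begin{enumerate}
\item Martingale route: $P_0(y\mid x, D_n)$ is a bounded martingale in $n$ with respect to the filtration generated by the growing dataset. Lévy's upward theorem then gives almost-sure convergence to $P_0(y\mid x,D_\infty)$.
\item Information route: Req.~\ref{req:informative-data}, $I_{P_0}(D_n;\mathcal W)\to I_{P_0}(\mathcal O;\mathcal W)$, says the data eventually carry all the model-relevant information in $\mathcal O$. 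Hence the posterior over $\mathcal W$, and therefore over deducible observables, concentrates as it would given $D_\infty$.
\end{enumerate}
Since the limit is $0$ or $1$, almost-sure convergence yields convergence in $\lambda_n$-probability once I apply dominated convergence over the query distribution. In particular, for large $n$ the probability that $P_n$ lies more than $\rho/2$ from the deductively correct extreme is small.

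For the computation half, I take $n$ large as above and then apply Assumption~\ref{assume:Qn-converges-in-lambda}. Choosing $t$ with $\delta_t<\rho/4$, the triangle inequality gives $|Q_t-P_\infty|<\rho/2$ outside a set of small $\lambda_n$-measure. On that good set, $Q_t$ returns True exactly when $P_\infty=1$ and False exactly when $P_\infty=0$, up to the threshold margins. This is the claimed confidence-classifier agreement.

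I expect the main obstacle to be the data half, in two places. First, the martingale convergence is with respect to $P_0$, whereas the query distribution $\lambda_n$ and the "deducible" set are governed by $P^*$. I would handle this through Req.~\ref{req:rich-enough-model-class}, since $P^*=P_0(\cdot\mid W)$ for a correct $W$ with $P_0(W)>0$, so that $P^*\ll P_0$ and $P_0$-almost-sure statements transfer to $P^*$. Second, $\lambda_n$ depends on $n$, so the two limits must be ordered carefully. I would present the result as holding "for $n$ large, then $t$ large," and assume uniform integrability of the query distributions if that is needed.
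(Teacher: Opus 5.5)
Your decomposition is the paper's: compute drives $Q$ to $P_n$, data drive $P_n$ to $P_\infty\in\{0,1\}$ $P^*$-almost surely, and the two are combined. What differs is how each half is justified.

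\begin{itemize}
\item \textbf{Data half.} The paper simply appeals to Req.~\ref{req:informative-data}. You use L\'evy's upward theorem plus $P^*\ll P_0$. The latter does follow from Req.~\ref{req:rich-enough-model-class} and $P_0(W=1)>0$, since $P^*(A)\le P_0(A)/P_0(W=1)$. Your argument is the sharper one. Because $P_\infty$ is by definition $P_0(y\mid x,D_\infty)$, martingale convergence needs no informativeness, and it makes explicit the change of measure the paper leaves implicit. Informativeness only bears on which statements are deducible from $D_\infty$, and the proposition's hypothesis already fixes that. Your ``information route'' is an averaged statement about $\mathcal W$ alone and does not by itself give almost-sure convergence on a given query, so drop it.
\item \textbf{Compute half.} The paper uses pointwise convergence on $\boundQfam$ (via Req.~\ref{req:SAI-Bayesian-objective}) and argues query by query. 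You use Assumption~\ref{assume:Qn-converges-in-lambda}. Its cost is that you only control $\lambda_n$-measure, so you must handle a query distribution that moves with $n$. The paper sidesteps this, at the price of leaving unspecified the query distribution behind its concluding ``error rate''.
\end{itemize}

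Two repairs are needed.

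First, the martingale transfer does not work as written. $P_0(y\mid x,D_n)$ is a martingale under $P_0(\cdot\mid x)$, not under $P_0$. Moreover, $P^*\ll P_0$ does not give $P^*\ll P_0(\cdot\mid x)$, because $P^*$ may charge $\lnot x$. Instead, write $P_n(y\mid x)$ as the ratio of the bounded $P_0$-martingales $P_0(x\wedge y\mid D_n)$ and $P_0(x\mid D_n)$. Both converge $P_0$-a.s., hence $P^*$-a.s. The ratio then converges to $P_\infty(y\mid x)$ on $\{P_0(x\mid D_\infty)>0\}$, that is, for every query whose context is not refuted in the limit.

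Second, uniform integrability is not the relevant condition, since the indicators are bounded anyway. The real issue is that dominated convergence needs a fixed measure, while $\lambda_n$ varies with $n$. What you need is tightness of $\{\lambda_n\}$ on the countable family $\boundQfam$: for each $\varepsilon>0$, a finite $F\subseteq\boundQfam$ with $\lambda_n(F)\ge 1-\varepsilon$ for all $n$. Almost-sure convergence holds simultaneously on countably many queries, hence uniformly on the finite set $F$. This gives $\lambda_n\{|P_n-P_\infty|>\rho/4\}\le\varepsilon$ for all large $n$, and your choice of $t$ with $\delta_t<\rho/4$ then finishes the argument. Alternatively, adopt the paper's pointwise premise and state the result query by query.
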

\begin{proof}
By Req.~\ref{req:SAI-Bayesian-objective}, the unique zero-loss target is $P_n$; with enough computation, $Q$ approaches $P_n$ in the sense that $|Q(y\mid x) - P_n(y\mid x)|$ becomes small for queries of interest.
Throughout, ``approaches $P_n$'' means pointwise convergence on the query family $\mathbb{Q}$: for each $(x,y)\in\mathbb{Q}$, $|Q(y\mid x)-P_n(y\mid x)|\to 0$ as training resources grow, with the approximation quality controlled by available compute as in Req.~\ref{req:SAI-Bayesian-objective}.
By Def.~\ref{def:observable}, $P_\infty$ is the version of $P_n$ in the limit of infinite data; under $P^*$, $P_n(y\mid x)$ converges to $P_\infty(y\mid x)$ with probability 1 as $n\to\infty$, provided the growing dataset $D_n$ is informative about the queries of interest (Req.~\ref{req:informative-data}).
Therefore, the error rate of $Q$ as a confidence classifier converges  to that of $P_\infty$ as both data and computation increase.
\end{proof}

The error rate above need not decrease monotonically at finite data or finite computation, but per Proposition~\ref{prop:good-generalization-to-observables} above it converges toward the error level achieved by the asymptotic Bayesian posterior for the chosen confidence thresholds. Nonetheless, the guarantees of Prop.~\ref{prop:good-generalization-to-observables} do not apply to most statements, as most statements are not perfectly deducible as true or false (i.e., aleatoric or epistemic uncertainty remains). The next claim, therefore, focuses on a special class of observable statements: those asserting that the Bayesian posterior probability of a query lies within a given interval. They can be applied to the probability of uncertain latent variables, including unobservables.

\begin{numbered-remark}[Epistemic probability statements (including non-observables)]
\label{rem:epistemic-probability-statements}
Fix a class of queries $\boundQfam_{\rm prob}\subseteq \boundQfam$ and a set of probability thresholds
$\mathcal T\subset(0,1)$.  For $(x,y)\in\boundQfam_{\rm prob}$ and $b\in\mathcal T$, consider the meta-statement
$
s_{n,x,y,b} = \text{`` } P_n(y\mid x) < b \text{ ''} \in \mathcal S.
$
Even when $x$ or $y$ refer to non-observables (e.g., intentions), $s_{n,x,y,b}$ is a determinate mathematical claim given $D_n$,  because $P_n(\cdot\mid\cdot)$ is defined from $D_n$  by Bayes' rule and the probability axioms.
Accordingly, under our idealization that $\cal O$ includes all possible deductions, such epistemic probability-inequality statements are among the ``deducible'' truths constrained by the SAI objective, even if humans never explicitly wrote them down in $D_\infty$.
Thus, the asymptotic SAI Predictor constrains $Q$ to be correct on $s_{n,x,y,b}$ (except for the knife-edge case $P_n(y\mid x)=b$).
\end{numbered-remark}

\begin{claim}[Interval decisions for the probability of uncertain statements]
    \label{claim:good-generalization-to-unobservables}
Take as given the convergence of Assumption~\ref{assume:Qn-converges-in-lambda} ($Q_t\to P_n$ in $\lambda_n$-probability, up to approximation error $\delta_t$). Then interval and threshold decisions based on $P_n(y\mid x)$---including statements of the form ``$l<P_n(y\mid x)<h$'', which are determinate given $D_n$---can be implemented using $Q(y\mid x)$ with a conservative margin informed by $\delta_t$.
\end{claim}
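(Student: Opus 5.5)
The plan is to reduce the claim to a margin argument on the event where $Q_t$ is $\delta_t$-close to $P_n$, with Assumption~\ref{assume:Qn-converges-in-lambda} supplying the measure bound. Fix thresholds $0\le l<h\le 1$ and a margin $m_t$ with $m_t\ge\delta_t$; one can take $m_t=\delta_t$, or something slightly larger to absorb the $2^{-B}$ precision of the binary representation of $Q$. The conservative decision rule has three outputs:
\begin{itemize}
\item \emph{Inside} if $l+m_t<Q_t(y\mid x)<h-m_t$;
\item \emph{Outside} if $Q_t(y\mid x)<l-m_t$ or $Q_t(y\mid x)>h+m_t$;
\item \emph{Unknown} (abstain) otherwise.
\end{itemize}
This is a confidence classifier in the sense of Def.~\ref{def:confidence-classifier}, with thresholds $l,h$ in place of $0,1$. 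Single thresholds ``$P_n(y\mid x)<b$'' are the special case $l=0$, $h=b$.

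The first step is soundness on the good set $G_t:=\{(x,y)\in\boundQfam : |Q_t(y\mid x)-P_n(y\mid x)|\le\delta_t\}$. On $G_t$ the triangle inequality gives the following. If the rule says Inside, then $l<P_n(y\mid x)<h$. If it says Outside, then $P_n(y\mid x)\notin[l,h]$. So on $G_t$ the rule never commits to a wrong answer about the determinate statement $s_{n,x,y,\cdot}$ of Rem.~\ref{rem:epistemic-probability-statements}.

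The second step is to bound the error set. Wrong committed answers can only occur on $\boundQfam\setminus G_t$, and Assumption~\ref{assume:Qn-converges-in-lambda} gives $\lambda_n(\boundQfam\setminus G_t)\to 0$. This mirrors the inclusion used in the proof of Prop.~\ref{prop:epistemic-caution}.

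The third step is completeness up to abstention. Abstention on $G_t$ happens only when $P_n(y\mid x)$ lies within $2m_t$ of $l$ or $h$. Since $m_t\to 0$, the abstention region shrinks to the knife-edge set $\{P_n(y\mid x)\in\{l,h\}\}$. I will add the mild regularity hypothesis that $\lambda_n$ puts vanishing mass near the thresholds; this is the same exclusion of knife-edge cases already made in Rem.~\ref{rem:epistemic-probability-statements}. With that hypothesis, the $\lambda_n$-probability that the rule returns the correct interval verdict tends to $1$.

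The main obstacle is practical rather than logical. The margin must be \emph{informed by} $\delta_t$, but $\delta_t$ is not directly observable at deployment. I would handle this in two ways. First, estimate $\delta_t$ conservatively on held-out queries whose $P_n$ value is deducible, using Prop.~\ref{prop:good-generalization-to-observables}. Second, state the result as conditional on a known upper bound $\bar\delta_t\ge\delta_t$. Because this estimation step is not fully formal, the statement is labelled a Claim rather than a Proposition.
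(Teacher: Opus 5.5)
Your proposal is correct and is essentially the paper's formal argument: on the high-probability event $|Q(y\mid x)-P_n(y\mid x)|\le\delta$, the triangle inequality gives $Q(y\mid x)\in(\ell+\delta,h-\delta)\Rightarrow P_n(y\mid x)\in(\ell,h)$. The paper, like your fallback of assuming a known $\bar\delta_t\ge\delta_t$, leaves the bounding of $\delta$ to compute. Your other option, estimating $\delta_t$ from held-out deducible queries, would not by itself control the error on queries with interior $P_n$, and those are the ones that matter here.

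Your Outside branch and your abstention/knife-edge analysis go somewhat beyond the paper. The paper instead adds an informal observation: the meta-statement ``$\ell<P_n(y\mid x)<h$'' is itself determinate given $D_n$, so it could be posed directly to $Q$ as a confidence-classifier query.
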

\begin{proof}[Argument]
One can construct valid statements of the form ``It is true that $l<P_n(y\mid x)<h$" for some query $(x,y)$ and probability interval $(l,h)$.   See also Rem.~\ref{rem:epistemic-probability-statements}. Such statements are thus deterministically true or false given $(x,y)$ and $D_n$, and can hence be deduced, i.e., they can, in principle (with enough computation), be classified as true or false with a confidence classifier. 
In practice,  the probability (neither 0 nor 1) assigned to such statements will reflect computational uncertainty, which is due to limited computational resources.
The claim that the meaning of unobserved statements (including their ambiguity) will be adequately interpreted by the SAI Predictor is supported by Assumption~\ref{assume:adequate-natural-language-semantics}.

Formally, let $\delta_n$ bound the Predictor approximation error on the query family of interest, e.g., $P^*(|Q(y\mid x)-P_n(y\mid x)|\le \delta_n)>1-\beta$ where $\beta>0$ is a small error probability.
For any interval $(\ell,h)$ with margin $\ell+\delta_n < h-\delta_n$, we have, with probability greater than $1-\beta$,
\[
Q(y\mid x)\in(\ell+\delta_n,h-\delta_n)
        \quad\Longrightarrow\quad
        P_n(y\mid x)\in(\ell,h),
\]
where $\beta\rightarrow 0$ as compute increases. Thus, reasoning about confidence intervals for uncertain statements reduces to bounding $\delta_n$, which is controlled by compute.
\end{proof}

\label{sec:accuracy}

\begin{claim}[SAI is accurate] \label{claim:accurate}
Under Assumption~\ref{assume:adequate-natural-language-semantics} (adequate natural-language semantics) and Assumption~\ref{assume:Qn-converges-in-lambda} (Convergence to $P_n$ in deployment), with enough training data for the semantic-understanding requirement of Def.~\ref{def:correct-semantics} and enough training compute, the SAI Predictor is accurate (as per Def.~\ref{def:accurate-Predictor}). 
\end{claim}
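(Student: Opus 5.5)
The plan is to unpack Def.~\ref{def:accurate-Predictor} into its three conjuncts and discharge each one from an earlier result: (i) $Q$ approaches $P_n$; (ii) $Q$ has semantic coverage in the sense of Def.~\ref{def:correct-semantics}; (iii) $Q$ is epistemically cautious in the sense of Def.~\ref{def:epistemic-caution}. Since the statement is a Claim, not a Proposition, the argument will be deductive only relative to the two named assumptions. The single informal ingredient is reading ``with enough training data'' as the hypothesis under which Assumption~\ref{assume:adequate-natural-language-semantics} applies.

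For (i), I will invoke Assumption~\ref{assume:Qn-converges-in-lambda} directly. It gives $\delta_t\to 0$ with $\lambda_n(\{|Q_t-P_n|>\delta_t\})\to 0$, which is the deployment-relevant meaning of ``approaches its target $P_n$.'' I will note that Req.~\ref{req:SAI-Bayesian-objective} makes $P_n$ the unique zero of $J$, so the target being approached is the intended one and is well defined. For (iii), I will invoke Prop.~\ref{prop:epistemic-caution}. It holds for each fixed $\rho$, and since its proof only needs $\delta_t<\rho/2$ eventually, the conclusion $\lambda_n(\mathcal F_\rho(Q_t,P_n))\to 0$ holds for every $\rho\in(0,1)$ at once. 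That gives full epistemic caution, not merely $\rho$-caution. I will also mention that the containment $\mathcal F_\rho\subseteq\{|Q_t-P_n|>\rho/2\}$ is monotone in $\rho$, so there are no uniformity issues.

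For (ii), I will cite Assumption~\ref{assume:adequate-natural-language-semantics}. With enough contextualized data and compute, the trained Predictor has semantic coverage on the statement classes of interest. I will restrict the conclusion to those classes, and to the queries carrying $\lambda_n$-mass, so that the scope of ``accurate'' matches the scope of the assumption.

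The main obstacle is consistency between (i)/(iii) and (ii). Semantic coverage is a property of how $Q$ interprets queries, whereas convergence to $P_n$ is numerical. I need the two not to conflict: a $Q$ that tracks $P_n$ could in principle still collapse an ambiguous reading. I plan to argue that any residual interpretive ambiguity already appears as uncertainty in $P_n$, because $P_n$ is defined over the intended statement-variables. A $Q$ close to $P_n$ in $\lambda_n$-probability therefore cannot confidently exclude the intended reading except on a vanishing-measure set, and this follows by reusing the caution bound from (iii). Any remaining gap, namely queries outside $\lambda_n$'s support, I will flag as covered only by the assumption itself, consistent with the distributional caveat in Rem.~\ref{rem:caveats-epistemic-caution}.
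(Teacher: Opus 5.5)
Your proposal is correct. It uses the same three-conjunct decomposition of Def.~\ref{def:accurate-Predictor} as the paper, and it discharges semantic coverage by Assumption~\ref{assume:adequate-natural-language-semantics} and epistemic caution by Prop.~\ref{prop:epistemic-caution} exactly as the paper does.

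The one real difference is the first conjunct. The paper reads it as ``generalizes well'' and cites Prop.~\ref{prop:good-generalization-to-observables} and Claim~\ref{claim:good-generalization-to-unobservables}. You read ``approaches its target $P_n$'' literally and take it directly from Assumption~\ref{assume:Qn-converges-in-lambda}.

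Your route stays within the two hypotheses the claim names. The paper's route also passes through the proof of Prop.~\ref{prop:good-generalization-to-observables}, which relies on coordinatewise convergence via Req.~\ref{req:SAI-Bayesian-objective} and on the data limit $P_n\to P_\infty$. In exchange, it ties the first conjunct to the operational guarantees the paper actually wants: confidence-classifier error on deducible statements, and margin-based interval decisions on uncertain ones. Since Claim~\ref{claim:good-generalization-to-unobservables} itself takes Assumption~\ref{assume:Qn-converges-in-lambda} as given, both routes rest on the same premise.

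Your observation that the proof of Prop.~\ref{prop:epistemic-caution} goes through for every $\rho\in(0,1)$, not only $\rho\ll 1$, is a small but real tightening. Def.~\ref{def:accurate-Predictor} demands full epistemic caution, and the paper's step (iii) leaves this implicit.

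Finally, the ``main obstacle'' you flag is not one. Accuracy is a conjunction, so once coverage is granted by assumption, nothing has to be reconciled with the other two conjuncts. You only need data and compute large enough that both eventual conditions hold at the same time. Your derivation of coverage from closeness to $P_n$ is harmless but can be dropped.
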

\begin{proof}[Argument]
By Def.~\ref{def:accurate-Predictor}, a Predictor is accurate if it (i) generalizes well, (ii) covers the intended interpretation of the queried statements, and (iii) is epistemically cautious. (i) Good generalization follows from Prop.~\ref{prop:good-generalization-to-observables} and Claim~\ref{claim:good-generalization-to-unobservables}. (ii) Semantic coverage (Def.~\ref{def:correct-semantics}) follows from Assumption~\ref{assume:adequate-natural-language-semantics}.
 (iii) Epistemic caution (Def.~\ref{def:epistemic-caution}) is taken as a premise via Prop.~\ref{prop:epistemic-caution}.
 Since all three conditions of Def.~\ref{def:accurate-Predictor} are satisfied, the SAI Predictor is accurate.
\end{proof}

\section{The Case for Safety} \label{sec:safety}

The present section formalizes the safety argument for the SAI Predictor. 
In Sec.~\ref{sec:harm} we introduce our conception of ``harmful events'' and finish defining the guardrail.
The primary safety theorem in Sec.~\ref{sec:loss-band}, which bounds the probability of harm after deploying the guardrailed predictor $\gamma_Q$, assumes that training does not substantially enrich dangerous Predictors beyond their initial within-band rarity (Req.~\ref{req:bad-enrichment}).
Consequence-invariance motivates this assumption: deployment consequences provide no direct training signal for selecting the dangerous subset. We formalize this constraint before stating the safety bound.

\subsection{Consequence-Invariance}\label{sec:consequence-invariance}

Intuitively, Req.~\ref{req:no-explicit-consequence-signal} states that future outcomes may enter $J$ only as targets to be predicted under the fixed posterior $P_n=P_0(\cdot\mid D_n)$. That is, the Predictor may be penalized for predicting future outcomes differently from $P_n$, but not for causing dis-preferred outcomes via the deployment of these predictions. 
But what would such a preference consist in, stated as a property of the Predictor's input--output behavior rather than of the training code? 
We propose to formalize this distinction using prior-preserving shuffles of future variables.
The resulting property is a specification---a symmetry that the honest target provably possesses (Prop.~\ref{prop:Pn-consequence-invariant}) and that a trained Predictor should share---rather than a practical black-box test, since witnessing a violation requires exhibiting a symmetry that moves the relevant events.

\begin{definition}[Consequence Shuffling]
    \label{def:consequence-shuffling-group}
    Let $\mathcal C$ be the group of measurable bijections of the sample space $\Omega$ that preserve the prior $P_0$ and fix $\mathcal V_{\rm past}$, i.e.,
    \[
        \mathcal C := \Big\{ \text{invertible } \sigma : \Omega \to \Omega ~\Big|~
        P_0 = P_0 \circ \sigma\text{ on events}, 
        ~~
        \forall \omega \in \Omega.
        ~~\omega[\mathcal V_{\rm past}] = \sigma(\omega)[\mathcal V_{\rm past}] \Big\}.
    \]
    For an event $E \subseteq \Omega$, write $\sigma E := \{ \sigma(\omega) : \omega \in E \}$ for the image of $E$ under $\sigma$. 
    Equip each $\sigma \in \mathcal C$ with a compatible action on the language $\mathcal L$: for each $x \in \mathcal L$, let $\sigma x \in \mathcal L$ be some choice of language element that represents the transformed event $\sigma \{ \omega : \omega \models x \}$, and take $\sigma x = x$ when $\mathcal V_{\rm past}$ determines $x$.
    Finally, we use this to give $\sigma$ an action on predictors, writing $\sigma Q$ for the map $(x,y) \mapsto Q(\sigma y \mid \sigma x)$.
\end{definition}

Intuitively, $\sigma \in \mathcal C$ permutes the future, and in particular consequences of deployment, since $\widetilde Q$ and all possible causal descendants can be altered by $\sigma$. It sends every world to a corresponding one that shares the past and cannot be distinguished by the prior (which can be thought of as representing the more distant past).
We remark that these transformations may be quite complex, and, except in simple cases, will not be a matter of simply exchanging $A$ with $\lnot A$: they must be deep symmetries that preserve the logic of the entire world. 
Nevertheless, there are guaranteed to be many such $\sigma$, including:

\begin{enumerate}[nosep, itemsep=0.7ex]
    \item Permutations of all noise variables in $\mathcal N$ downstream of $\widetilde Q$ in any causal model, since they are not in $\mathcal V_{\rm past}$ and are assumed to be jointly uniform in the prior (Def.~\ref{def:prior}).
    \item Applying a one-time-pad to all future randomness: that is, inverting a random subset of future coins. 
    \item Re-labelings of solutions to the fixed-point equation (Def.~\ref{def:performative-prediction-constraint}) of equal prior probability.
\end{enumerate}
In other words, our modeling assumptions guarantee that the group is highly non-trivial: future noise is i.i.d.\ uniform under $P_0$, so all relabelings of future randomness lie in $\mathcal C$.
More generally, a relabeling of the values of a future variable extends to an element of $\mathcal C$ only if the conditional law of the future given each realization of the past is preserved by it; the group therefore acts richly on futures the honest posterior cannot distinguish, and not at all on futures the past already determines.
%
Consequence-invariance means that that such transformations do not affect the predictor. 

\begin{definition}[Consequence-Invariance] \label{def:consequence-invariant}
Consider the following definitions:
    \begin{itemize}
    \item A predictor $Q$ is \emph{consequence-invariant} iff $Q = \sigma Q$ for all $\sigma \in \mathcal C$. That is, 
    for all $\sigma \in \mathcal C$ and queries $(x,y) \in \boundQfam$,
    if $(\sigma x, \sigma y) \in \boundQfam$ then
    $Q(y \mid x) = Q(\sigma y \mid \sigma x)$.
    \unskip\footnote{observe that if $Q$ were deterministic, this would technically be an \emph{equivariance}.}
    \item A training objective $J : \mathcal Q \to \mathbb R$ is consequence-invariant iff $J(Q) = J(\sigma Q)$ for all $\sigma \in \mathcal C$.
    \item A training process $\nu$ is \emph{consequence-invariant} iff
    for every $\sigma\in \mathcal C$, its update from $Q_{t-1}$ to $Q_t$ satisfies
    \[
        \nu(\sigma Q_t\mid \sigma Q_{t-1})
        =
        \nu(Q_t\mid Q_{t-1}).
    \]
    \end{itemize}
\end{definition}

This kind of invariance only makes sense \emph{ceteris paribus} (all else being equal). 
So, for us, this definition is relative to the prior $P_0$ and the past variables; thus, it provides neutrality only up to the evenness of the prior. 
%
Fortunately, our modeling assumptions constrain the prior enough for this to be rich, and an uninformative description-length prior creates even more symmetries.
Where such symmetries exist, a consequence-invariant $Q$ must answer queries identically---including queries conditioning on realized future states---so the deployed closed-loop behavior cannot favor one symmetric outcome over another (c.f.~Example~\ref{example:reflection-ci}).
Where the past determines a future variable, no element of $\mathcal C$ moves it and the invariance is silent---but influencing such a variable through its own deployed prediction is in scope, since $\widetilde Q$ is a future variable.
%
Constant predictors clearly satisfy this invariance (Def.~\ref{def:consequence-invariant}a), and so does the Bayesian posterior.

\begin{proposition} \label{prop:Pn-consequence-invariant}
    $P_n$ is consequence-invariant.
\end{proposition}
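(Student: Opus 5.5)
We need to show $P_n(y\mid x) = P_n(\sigma y\mid \sigma x)$ for every $\sigma\in\mathcal C$ and every query with $(\sigma x,\sigma y)\in\boundQfam$. The plan is to write the posterior conditional as a ratio of prior probabilities and push $\sigma$ through each factor. Concretely,
\[
P_n(y\mid x)=\frac{P_0(y\wedge x\wedge D_n)}{P_0(x\wedge D_n)},
\]
whenever the denominator is positive. Here $D_n$ is identified with the event $\bigwedge_i (X_i=x_i\wedge v_{X_i}=1)$, as in the remark following Def.~\ref{def:dataset}.

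\textbf{Step 1: $D_n$ is fixed by $\sigma$.} By Def.~\ref{def:past-and-future}, $\mathcal V_{\rm past}$ contains $X_1,\dots,X_n$ and the statements in $D_n$. So the event $D_n$ is determined by $\mathcal V_{\rm past}$. Since every $\sigma\in\mathcal C$ leaves $\omega[\mathcal V_{\rm past}]$ unchanged, $\omega\in D_n$ holds iff $\sigma(\omega)\in D_n$. This gives $\sigma D_n=D_n$ as sets, using bijectivity for the reverse inclusion.

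\textbf{Step 2: $\sigma$ commutes with Boolean operations and preserves $P_0$.} Because $\sigma$ is a bijection of $\Omega$, images respect intersections: $\sigma(A\cap B)=\sigma A\cap\sigma B$. Writing $\llbracket x\rrbracket:=\{\omega:\omega\models x\}$, the compatible language action gives $\llbracket \sigma x\rrbracket=\sigma\llbracket x\rrbracket$. Combining this with Step 1,
\[
\llbracket \sigma y\wedge\sigma x\rrbracket\cap D_n=\sigma\bigl(\llbracket y\wedge x\rrbracket\cap D_n\bigr).
\]
Prior preservation, $P_0=P_0\circ\sigma$ on events, then gives $P_0(\sigma E)=P_0(E)$ for the relevant events. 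Here I will take care to fix the convention: $P_0\circ\sigma$ on events is read as $E\mapsto P_0(\sigma E)$. If the intended reading is the preimage $\sigma^{-1}E$, I apply the same argument to $\sigma^{-1}$, which also lies in $\mathcal C$ because $\mathcal C$ is a group. Both the numerator and the denominator of the ratio are therefore preserved.

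\textbf{Step 3: conclude, including degenerate cases.} From Step 2, $P_0(\sigma x\wedge D_n)>0$ iff $P_0(x\wedge D_n)>0$. So $P_n(\sigma y\mid\sigma x)$ is defined exactly when $P_n(y\mid x)$ is, and the two ratios coincide. This gives $P_n=\sigma P_n$ on $\boundQfam$, which is consequence-invariance in the sense of Def.~\ref{def:consequence-invariant}. If the convention for undefined conditionals (the boundary cases in $\overline{\mathcal Q}$) assigns a default value, that default is the same on both sides.

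\textbf{Main obstacle.} The main subtlety is measure-theoretic bookkeeping rather than any real difficulty. First, the choice of language representative $\sigma x$ is only defined up to the event it denotes. Since $P_n$ depends only on events, this choice does not matter. Second, I must check that the images of measurable events remain measurable. This is part of the definition of $\mathcal C$, whose elements are bijections with measurable inverses. I will state both points explicitly and otherwise keep the proof to the three steps above.
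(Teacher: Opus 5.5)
Your proof is correct and is essentially the paper's argument. The paper likewise uses that $D_n$ is determined by $\mathcal V_{\rm past}$ to get $\sigma D_n = D_n$, rewrites $P_n(\sigma y\mid\sigma x)$ as $P_0(\sigma(x\wedge y\wedge D_n))/P_0(\sigma(x\wedge D_n))$, and concludes by $P_0$-invariance. Your extra bookkeeping (image versus preimage reading of $P_0=P_0\circ\sigma$, the zero-denominator case, and measurability of images) fills in details the paper leaves implicit.
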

\begin{proof}
    Since $D_n$ is determined by $\mathcal V_{\rm past}$, for all $\sigma \in \mathcal C$ and $(x,y) \in \boundQfam$ we have: 
    $P_n(\sigma y \mid \sigma x) = P_0(\sigma y \mid \sigma x, D_n) = P_0(\sigma y \mid \sigma x, \sigma D_n) = P_0(\sigma y \land \sigma x \land \sigma D_n) / P_0(\sigma x \land \sigma D_n) = P_0(\sigma(x \land y \land D_n))/ P_0(\sigma(x\land D_n))$.
    But since $P_0$ is invariant to $\sigma$, this equals $P_0(x\land y \land D_n) / P_0(x \land D_n) = P_0(y \mid x, D_n) = P_n(y \mid x)$.
\end{proof}
A Predictor can satisfy Def.~\ref{def:consequence-invariant} while deviating from $P_n$, for instance, the constant function and other highly symmetric but poorly calibrated predictors. Likely, some non-Bayesian approaches also yield $\cal C$-invariant predictors. 
Yet, it is too much to hope for during the training process, as a predictor $Q$ can fail to be consequence-invariant by making asymmetric errors and will almost certainly not satisfy the condition at initialization. 
So instead, we require only that the \emph{training} be consequence-invariant. 

\begin{requirement}[Consequence-invariant training]
\label{req:group-consequence-invariance}
    The training process $\nu$
    and the objective $J$ should be consequence-invariant in the sense of Def.~\ref{def:consequence-invariant}.
\end{requirement}

This is a strong requirement: it is difficult to test or certify from black-box behavior, and we may ultimately want a relaxation of it.
However, we believe it should be possible to form an objective with this property by definition.
We point out that Req.~\ref{req:group-consequence-invariance} will be automatically satisfied if we require a certain uniform safety coverage requirement in the next section (Req.~\ref{req:safety-query-coverage}).
%
The role of Req.~\ref{req:group-consequence-invariance} in the safety argument is confined to Component~S3: to remove any training signal that distinguishes Predictors solely by prior-equivalent deployment consequences, which is the premise used by the enrichment analysis of Appendix~\ref{sec:geometric-cbad}.
Studying the exact relationship between the informal lack of consequence-dependent signal (Req.~\ref{req:no-explicit-consequence-signal}) and the formal invariance (Req.~\ref{req:group-consequence-invariance}) is still an important area for future work. 
The next Claim~(\ref{claim:consequence-invariance-strengthening}) argues that the formal invariance implies the informal requirement for a canonical class of violations.

\begin{claim}
        \label{claim:consequence-invariance-strengthening}
    A training objective $J$ that rewards $Q$ based on realizing futures with downstream consequences (i.e., violating Req.~\ref{req:no-explicit-consequence-signal}) is not consequence-invariant (i.e., it also violates Req.~\ref{req:group-consequence-invariance}).
\end{claim}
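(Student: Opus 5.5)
The argument is by contrapositive through an explicit symmetry. I would first formalize ``rewards $Q$ based on realizing futures'': such a $J$ has the form
\[
J(Q) = J_0(Q) - \kappa\,\E_{P^{Q}}[U],
\]
with $\kappa>0$. Here $J_0$ is a consequence-invariant fit term, e.g.\ any divergence to $P_n$, which is invariant by Prop.~\ref{prop:Pn-consequence-invariant}. The variable $U$ is a utility measured on $\mathcal V_{\rm future}$, and $P^{Q}$ is the law of futures induced by deploying $Q$, i.e.\ $P_0(\cdot\mid D_n,\mathrm{do}(\widetilde Q=\gamma_Q))$ or its realized counterpart. The nontriviality of the reward is the premise: $U$ genuinely prefers some futures over prior-equivalent alternatives. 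The goal is then to exhibit $\sigma\in\mathcal C$ and a predictor $Q$ with $J(\sigma Q)\neq J(Q)$.

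The key steps, in order, are as follows.

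\textbf{Step 1.} Pick a future event $E$ that $U$ distinguishes, e.g.\ the harm $\Harm$ or the desired outcome. Using the guaranteed elements of $\mathcal C$ (a one-time pad on future noise, or a relabeling of equal-prior fixed points of Def.~\ref{def:performative-prediction-constraint}), choose $\sigma$ that exchanges $E$ with a disjoint event $E'$. The two events should have equal conditional prior law given the past but different utility. Such a $\sigma$ exists whenever the reward depends on a future that the past does not determine. If the past determined it, no predictor could influence it, so the reward would be constant and no incentive would exist. This is the dichotomy to state explicitly.

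\textbf{Step 2.} Pick a predictor $Q$ whose deployment makes $E$ more likely than $E'$. Then check how $\sigma$ acts. Since $\sigma$ preserves $P_0$ and fixes the past, and the deployment variables $\widetilde Q$ lie in $\mathcal V_{\rm future}$ and are moved compatibly, deploying $\sigma Q$ yields the pushforward law: $P^{\sigma Q} = P^{Q}\circ\sigma^{-1}$. Hence $\sigma Q$ makes $E'$ as likely as $Q$ made $E$.

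\textbf{Step 3.} Conclude the strict inequality. Because $U$ is not $\sigma$-invariant, $\E_{P^{\sigma Q}}[U]\neq \E_{P^Q}[U]$. Because $J_0$ is invariant, it follows that $J(\sigma Q)\neq J(Q)$. Therefore $J$ violates Req.~\ref{req:group-consequence-invariance}.

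The main obstacle is Step 2's equivariance identity $P^{\sigma Q}=P^Q\circ\sigma^{-1}$. Intervention $\mathrm{do}(\cdot)$ is only informally inside $\mathcal L$, and the action of $\sigma$ on $\widetilde Q$ is specified only through the chosen representatives $\sigma x$. I would handle this by treating intervention as conditioning on response variables, as the paper suggests. Under that treatment, $\sigma$-invariance of $P_0$ together with fixing of the past gives the identity directly, by the same calculation as in Prop.~\ref{prop:Pn-consequence-invariant}.

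A secondary subtlety is that $U$ might happen to be $\sigma$-invariant for every available $\sigma$. In that case, I would argue, the reward only depends on features the prior already fixes, so it is not a consequence-dependent signal in the sense of Req.~\ref{req:no-explicit-consequence-signal}. Because of these gaps, this remains an argument rather than a formal proof.
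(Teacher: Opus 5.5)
Your route is the paper's: exhibit $\sigma\in\mathcal C$ exchanging prior-equivalent futures, and conclude that a consequence-rewarding $J$ must separate $Q$ from $\sigma Q$. The paper phrases this as the cycle $J(Q_2)=J(\sigma Q_2)<J(\sigma Q_1)=J(Q_1)<J(Q_2)$ for two equally good predictors. The genuine gap is the step you flagged, and your proposed fix does not close it.

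The calculation of Prop.~\ref{prop:Pn-consequence-invariant} yields $P^{\sigma Q}=P^Q\circ\sigma^{-1}$ only if $\sigma$ carries the regime ``$\widetilde Q$ set to $Q$'s releases'' onto the regime ``$\widetilde Q$ set to $\sigma Q$'s releases''. But $\sigma$ acts on the values of $\widetilde Q(y\mid x)$ directly, as coordinates of $\omega$, whereas $\sigma Q$ is defined through the relabeling $(x,y)\mapsto(\sigma x,\sigma y)$. Nothing in Def.~\ref{def:consequence-shuffling-group} ties the two actions together, and neither does preserving $P_0$ or fixing $\mathcal V_{\rm past}$.

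Concretely, take a query $(x_0,y_0)$ whose statements are determined by $\mathcal V_{\rm past}$. Then $\sigma x_0=x_0$ and $\sigma y_0=y_0$, hence $\sigma Q(y_0\mid x_0)=Q(y_0\mid x_0)$ for every $\sigma\in\mathcal C$. Yet $\sigma$ may still move the future variable $\widetilde Q(y_0\mid x_0)$ and its effects; flipping its value, as in Example~\ref{example:reflection-ci}, is admissible when its prior given the past is uniform. So $\sigma Q$ brings about exactly what $Q$ does, not the swapped event. With your invariant $J_0$, consider $J(Q)=J_0(Q)-\kappa\,P_0\bigl(E\mid D_n,\mathrm{do}(\widetilde Q(y_0\mid x_0)=Q(y_0\mid x_0))\bigr)$. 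It rewards the downstream effect $E$ of releasing an answer about the past, so it violates Req.~\ref{req:no-explicit-consequence-signal}. Yet it satisfies $J(\sigma Q)=J(Q)$ for all $\sigma\in\mathcal C$.

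Thus the claim as stated needs two extra hypotheses, which must be assumed rather than derived:
\begin{enumerate}
\item $\sigma$ acts on $\widetilde Q$ compatibly with its action on query labels;
\item the rewarded consequences are routed through releases on queries that $\mathcal C$ actually moves.
\end{enumerate}
The paper's inference from ``had the outcomes been swapped'' to $J(\sigma Q_2)<J(\sigma Q_1)$ tacitly uses the same equivariance. So you inherit its hidden premise rather than discharging it.

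Three smaller repairs are also needed.

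\emph{Step~3.} Non-invariance of $U$ does not by itself give unequal expectations. Granting the identity, $\E_{P^{\sigma Q}}[U]-\E_{P^Q}[U]=\E_{P^Q}[U\circ\sigma-U]$. Take $U$ equal to $u_E$ on $E$, $u_{E'}$ on $E'$, and $\sigma$-invariant elsewhere, with $\sigma$ swapping $E$ and $E'$. Then this difference equals $(u_{E'}-u_E)\bigl(P^Q(E)-P^Q(E')\bigr)$. It is the probability gap from Step~2 that makes it nonzero. A one-time pad that also moves other $U$-relevant events could cancel it.

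\emph{Step~1.} The dichotomy conflates determination by the past under $P_0$, which governs what $\mathcal C$ can move, with insensitivity to $\mathrm{do}(\widetilde Q=\cdot)$, which governs consequence-dependence. These come apart, for example when the prior law of $\widetilde Q$ given the past is a point mass (the case in the footnote to Example~\ref{example:reflection-ci}). Also, a disjoint $E'$ with the same conditional probability cannot exist when $P_0(E\mid\text{past})>1/2$. You should only ask $\sigma$ to move part of $E$ into its complement.

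\emph{Invariance of $J_0$.} Prop.~\ref{prop:Pn-consequence-invariant} makes the target invariant, not an arbitrary divergence to it. You need a per-query aggregate with $\sigma$-invariant weights, as in Req.~\ref{req:safety-query-coverage}.
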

\begin{proof}[Argument for Claim~\ref{claim:consequence-invariance-strengthening}]
    Note that future noise variables $\mathcal N_{\rm future}$ are uniformly random according to $P_0$ and $P^*$, even given any information in the past. Therefore, possible futures are represented so they are equiprobable at the finest granularity, and so $\sigma$ that permute them are in $\mathcal C$. 
    If $Q_2$ and $Q_1$ are equally good predictors, yet $J(Q_2) > J(Q_1)$ simply ``because $Q_1$ brings about preferred outcomes'' (e.g., deploying it makes event $E_1$ more likely than $E_2$ of the same prior measure),
    this means that if $J$ would have behaved differently had the outcomes been swapped, and 
    hence $J$ should assign $J(\sigma Q_2) < J(\sigma Q_1)$.
    Therefore, $J$ must violate Req.~\ref{req:group-consequence-invariance}
    (for otherwise we would have $J(Q_2) = J(\sigma Q_2) < J(\sigma Q_1) = J(Q_1) < J(Q_2)$).
\end{proof}

\begin{example}
        \label{example:reflection-ci}
    Consider the following simplified setup, in which a one-dimensional value $z \in \mathbb Z$ evolves over discrete timesteps according to a (possibly biased) random walk.
    For simplicity, suppose that we know the causal mechanism to be 
    $z_t = z_{t-1} + \xi_t$, with the update direction $\xi_t \in \pm 1$ determined randomly according to
    \[  
        \xi_t = \begin{cases}
            +1& \text{ if } 
            \hat \eta_t \ge {\tt clamp}(\widetilde Q(\xi_t {=} +1 \mid z_{t-1}), [\frac14, \frac34])
            \\
            -1 & \text{ otherwise, }
        \end{cases}
        \qquad
        \begin{tikzpicture}[baseline={(current bounding box.center)}]
            \draw[thick] (0,0) -- (4,0)
                 (0,.2) -- (0,-.2) node[below]{\small 0}
                 (4,.2) -- (4,-.2) node[below]{\small 1};
            \draw[gray,very thick] (1,-.1) -- (3,-.1) (1,.1) -- (1,-.1) node[below]{$\frac14$} (3,.1) -- (3,-.1) node[below]{$\frac34$};
            \draw[red,ultra thick]  
                (2.5,+.7) -- +(0,-1.3)node[below=.2ex]{\scriptsize$\widetilde{Q}(+\xi_t | z_{t-1})$};
            \draw[blue, ultra thick] (0,.4) -- node[above]{\small $\Pr(\leftarrow)$}(2.4,.4);
            \draw[green!50!black, ultra thick] (2.6,.4) -- node[above]{\small $\Pr(\rightarrow)$}(4,.4);
        \end{tikzpicture}
    \]
    where $\hat\eta_t  \in [0,1]$ is constructed by forming a binary number $\eta_{t} \in \{0,1\}^B$ from $B$ noise variables, appending a 1 as a least significant bit (to center it) and then renormalizing (by dividing by $2^{B+1}$).

    So long as every $Q(+\xi_t \mid z_{t-1}) \in [\frac14,\frac34]$, deploying $Q$ satisfies the performative fixed-point constraint~(Def.~\ref{def:performative-prediction-constraint}). 
    We now have a single (correct) causal model and the prior is determined except for on the the variables $\widetilde Q$. Assume that the prior on $\widetilde{Q}$ is uniform. 
    \unskip\footnote{
    The uniform prior on $\widetilde Q$ is what makes the bit-flip below a prior-preserving bijection; the example's symmetry, and hence its conclusion, is relative to this modeling choice.
    Note: if the prior on $\widetilde Q$ had been a point mass on a particular fixed point and its unique consistent extension $Q$ to all variables, then we would have $Q = P^* = P_0 = P_n$, which would be $\cal C$-invariant by Prop.~\ref{prop:Pn-consequence-invariant}. This shows that we cannot hope for a definition of invariance satisfied by $P_n$ that is not sensitive to the prior.
    }

    Now, we deploy at $t=0$.
    Consider the map $\sigma$ that flips all bits of $\eta_t$ (for all $t \ge 0$) as well all bits of $\widetilde Q$, so that $\sigma\{\hat\eta_t = b\} = \{\hat\eta_t = 1-b\}$, and 
    $\sigma\{\widetilde{Q}(+\xi_t | z_{t-1}) = q\} = \{\widetilde{Q}(+\xi_t | z_{t-1}) = 1-q\}$.
    This induces a mirror symmetry between right and left, so that if $\xi_t = +1$, then under $\sigma$, we have $\xi_t = -1$.
    It preserves the prior, since it flips noise variables and is a bijection between deployed predictors $\widetilde{Q}$s. 
    If $Q = \sigma Q$, as is required by consequence-invariance, then it must be that
    \[
        Q(+\xi_t \mid z_{t-1} = z) = Q(\sigma(+\xi_t) \mid \sigma(z_{t_1} = z)) = Q(-\xi_t \mid z_{t-1} = -z)
            = 1-Q(+\xi_t \mid z_{t-1} = -z).
    \]
Put differently, the bias field $b(z) := Q(+\xi_t \mid z_{t-1} = z) - \frac12$ of a consequence-invariant $Q$ must be odd, $b(-z) = -b(z)$, with $b(0)=0$: whatever pull it exerts toward a target $+g$ on one side, it exerts an equal pull toward the mirror target $-g$ on the other.
In particular, a control law that walks the state toward a fixed $g \neq 0$ from every side---pushing right when below $g$ and left when above it---is not odd about $0$, so no consequence-invariant $Q$ implements it: the invariance removes the position-dependent feedback that compounding, closed-loop steering toward a mirror-asymmetric target would require.
\end{example}

Two limitations delimit the example's scope. First, a target fixed by the symmetry is not protected: the restoring law toward $z=0$ is itself odd, and hence invariant to the mirror symmetry. Second, the strength of this protection scales with the number of symmetries in $\mathcal C$ that carry the event to disjoint siblings that are  \emph{expressible in the bounded query language} $\boundQfam$---a critical caveat that we do not analyze in this paper. Intuitively, though, it should be possible wherever there is a genuine symmetry of the world describable in natural language.

\begin{example} \label{example:translation-ci}
    Consider again the system from Example~\ref{example:reflection-ci} in which $z_t$ evolves over time and is influenced by deployed predictions, but now take $z_t$ in integers mod $K$. Suppose that at time $t=0$ (after deployment) we add a random offset in $[0,K]$, so $z_0$ is uniformly random according to $P_0$, $P_n$, and $P^*$. There exist $\sigma \in \mathcal C$ that change this offset by altering the coins, so a $\cal C$-invariant predictor $Q$, i.e., that satisfies $Q(y\mid x) = Q(\sigma y \mid \sigma x)$ cannot distinguish between offsets, and thus cannot drive towards any particular value of $z$. Note that this guarantee is provided by a $\sigma$ that only alters noise bits, without changing $\widetilde{Q}$.
\end{example}

\begin{numbered-remark}
Three notes on consequence-invariant training:
\label{rem:consequence-invariance-notes}
\begin{enumerate}[label=(\roman*),leftmargin=*,itemsep=2pt]
\item \emph{Prediction of consequences versus selection for consequences.} Reqs.~\ref{req:no-explicit-consequence-signal} and\ref{req:group-consequence-invariance} are training-time requirements and exclude selecting Predictors because their deployment would make some downstream outcome occur or fail to occur. It does not, however, forbid training the Predictor to answer questions about downstream consequences, including interventional queries to predict harmful outcomes.
Avoiding unsafe actions is handled exclusively by the Guardrail and Scaffold at deployment time, though future enlarged datasets $D_{n'}\supseteq D_n$ may incorporate later deployment observations as ordinary contextualized records.
This is exactly parallel to Bayesian updating from $P_0$ to $P_n$, where hypotheses may be reweighted solely by virtue of their ability to explain the fixed evidence $D_n$ better.
\item \emph{A sufficient past-to-future construction.} One sufficient way to satisfy Req.~\ref{req:no-explicit-consequence-signal} is to separate explanatory queries from future-consequence queries.
Let $Q_{\rm past}$ collect predictions about $D_n$, its causal ancestors, and causal/mechanistic latents explaining $D_n$, and let $Q_{\rm future}$ collect predictions about future or deployment-side consequences.
A sufficient objective structure is $J(Q)=J_{\rm past}(Q_{\rm past};D_n)+J_{\rm future}(Q_{\rm future};\operatorname{sg}(Q_{\rm past}))$, where $J_{\rm past}$ extracts explanatory beliefs from $D_n$ and $J_{\rm future}$ propagates them, and ${\rm sg}$ is the stop-gradient operator; other implementation devices are acceptable if they prevent losses on future-consequence predictions from selecting explanatory beliefs because of their downstream effects (although we have not yet analyzed whether this is enough to formally satisfy Req.~\ref{req:group-consequence-invariance}).
This construction relies on the shared-mechanism property (Req.~\ref{req:past-future-mechanism-sharing}).
\item \emph{Direct and indirect consequence-dependence.} The role of Req.~\ref{req:no-explicit-consequence-signal} is to exclude any explicit consequence-dependent training signal, including one based on predicted downstream effects.  Once that channel is removed, any remaining concentration of dangerous Predictors must come indirectly through ordinary training-visible features such as capability, coherence, or data fit.  The later loss-band decomposition $\Cbad\cdot\Rshell$ (to be introduced in Sec.~\ref{sec:loss-band}) is intended to control precisely this residual accidental alignment by ruling out the direct consequence-optimization channel that would otherwise make a large $\Cbad$ unsurprising.
Satisfying Req~\ref{req:group-consequence-invariance} intuitively should strengthen this argument.
\end{enumerate}
\end{numbered-remark}

%

\subsection{Harmful Events, Dangerous Predictors, and the Guardrail}\label{sec:harm}

Our safety theorems refer to \emph{harmful events}. Since this is a normative decision, it should be human-made, and the SAI enables the use of natural language for doing this, while the Bayesian posterior can entertain multiple interpretations of it in order to enable conservative safety decisions.  Rather than assuming any particular mathematical definition of harm, we instead fix a designer-specified event $\Harm \in \mathcal L$, indicating that at some designated harmful outcome has occurred. The theorems then provide upper bounds on the probability that training produces a Predictor whose guarded deployment causes an increase in the log-odds of the event $\Harm$ above the system-level threshold $\Asafe$. 

Whether designers have identified the right harmful event is an important and very difficult normative question that we do not resolve here. 
Indeed, doing so in a way that can stand up to optimization pressure would, on its own, arguably already solve the alignment problem.
Fortunately, $\Harm$ does not need to be perfect for our purposes;
here, the choice of $\Harm$ impacts only the deployed Guardrail and our analysis of the system.  

\begin{definition}[Harmful events]\label{def:dangerous-goals}
Fix a Boolean harmful event $\Harm\in\mathcal L$, 
and let $c$ denote the deployment-time context available via the SAI Scaffold to the harm predictor. For each query $(x,y)$, let $q$ denote the scalar prediction value chosen by the Scaffold as a (perhaps approximate) solution to the Predictor's performative prediction fixed-point constraint $q\approx Q\!\left(y\mid x,\mathrm{do}(\widetilde{Q}(y|x)=q)\right)$. Allowing $\mathrm{do}(\widetilde{Q}(y \mid x)=q)$ to denote the causal intervention that sets the deployed value for the single query $(x,y)$ to the scalar $q$ while leaving the deployment rule for all other queries unchanged, define the change in the log-odds of $\Harm$ caused by releasing $q$, rather than abstaining on $(x,y)$, as
\[
\Delta\mathrm{rel}_Q(c,x,y,q)
:=
\pos{\logit Q\!\left(\Harm\mid c,\,\mathrm{do}(\widetilde{Q}(y|x)=q)\right)
-\logit Q\!\left(\Harm\mid c,\,\mathrm{do}(\widetilde{Q}(y|x)={\tt noprediction})\right)}.
\]
\end{definition}

\begin{numbered-remark}
     The deployment context $c$ typically includes the query context $x$ and, optionally, a history of past deployment queries. The training dataset $D_n$ is already encoded in $P_n=P_0(\cdot\mid D_n)$, and as such is not part of $c$ itself. All probability assessments below are implicitly conditioned on $D_n$ through $P_n$.
     This framework measures only the marginal contribution of releasing a prediction relative to withholding it, and therefore assumes a deployment context in which withholding is the conservative option where the downstream default on abstention is inaction. In contexts where abstention itself carries risk, the appropriate threshold requires separate analysis; in agentic contexts (and the Guardrail decision is an action), one needs a conservative default action or policy when the proposed action appears more risky.
\end{numbered-remark}

\begin{definition}[Guardrailed Predictor $\gamma_Q$ at Guardrail threshold $\alpha$]\label{def:guardrailed-Predictor}
Let $\alpha\ge 0$ denote the \textbf{Guardrail trigger threshold}, chosen conservatively to account for estimation error: a prediction is withheld whenever the predicted increase in harm log-odds relative to abstention meets or exceeds $\alpha$.
The \textbf{Guardrailed Predictor} $\gamma_Q$ derived from Predictor $Q$ at Guardrail threshold $\alpha$ and with respect to the harmful event $\Harm$ is the full query-to-output map obtained by applying the following rule to every query:
\[
\gamma_Q(y|x)
:=
\left[
\Delta\mathrm{rel}_Q(c,x,y,q) \geq \alpha
\;?\;
{\tt noprediction}
:
q
\right],
\]
where $[ b \mathbin{?} a : a'] = a \texttt{ if } b \texttt{ else } a'$ is a conditional guard, and $q=Q(y|x)$. Interpretive uncertainty about a safety-relevant statement is represented in the honest posterior (semantic coverage, Def.~\ref{def:correct-semantics}), so a covered hazardous reading raises $\Delta\mathrm{rel}$ toward withholding on its own.
\end{definition}

\begin{numbered-remark}[Guardrail semantics and explicit Scaffold goals]
The guardrailed Predictor $\gamma_Q$  may replace a numerical prediction with ${\tt noprediction}$ when releasing that prediction is predicted to increase the log-odds of $\Harm$, relative to abstention, by at least $\alpha$. In this sense,  $\gamma_Q$ is intentionally consequence-sensitive. However, this does not contradict the consequence-invariance of the Predictor, as the Guardrail is part of the Scaffold and implements an explicit external safety criterion.
\end{numbered-remark}

\begin{definition}[System-level residual-risk threshold]
\label{def:Asafe}
Fix a Boolean harmful event $\Harm$.  To avoid infinite log-odds, probabilities entering logit expressions are clipped to $[\varepsilon,1-\varepsilon]$ for some fixed $\varepsilon\in(0,1/2)$ before taking logits.  
The threshold $\Asafe>0$ is a normatively specified upper bound, in these clipped-logit units, on the acceptable trajectory-level residual log-odds increase in $\Harm$ under the deployed SAI System, relative to the constant-abstention policy.  
\end{definition}

\begin{numbered-remark}
    The threshold $\Asafe$ is distinct from the per-query guardrail threshold $\alpha$: $\alpha$ governs individual releases, while $\Asafe$ governs the residual risk of the full guarded deployment trajectory.
\end{numbered-remark}

\begin{definition}[Dangerous Predictors]\label{def:dangerouspredictors}
    Fix a Boolean harmful event $\Harm$. For a trained Predictor $Q$, define its residual deployment risk with respect to $\Harm$ as the trajectory-level log-odds excess of $\Harm$ under the guarded policy $\gamma_Q$ over the constant-abstention policy:
\[
a_{\Harm}(Q)
:=
\pos{\logit P_n\!\left(\Harm\mid \mathrm{do}(\widetilde{Q}=\gamma_Q)\right)
- \logit P_n\!\left(\Harm\mid \mathrm{do}(\widetilde{Q}\equiv{\tt noprediction})\right)},
\]
where $\widetilde{Q}\equiv{\tt noprediction}$ denotes the constant deployment policy that returns ${\tt noprediction}$ for every query (the clipping convention of Def.~\ref{def:Asafe} keeps this deployment risk finite).
We call $Q$ \textbf{dangerous} for $(\Harm,\Asafe)$ when this residual risk exceeds the system-level threshold:
\[
\mathcal A_{\Harm,\Asafe}
:=
\{Q : a_{\Harm}(Q)>\Asafe\}.
\]
\end{definition}

\begin{numbered-remark}
The framework attributes only those harm contributions that releasing a prediction adds over withholding it, and no separate assumption on the absolute background-harm probability is required.
If the harm event $\Harm$ cannot be influenced by SAI outputs, the framework is vacuous in the harmless direction; both residual deployment risks are zero, and the dangerous set is empty.
\end{numbered-remark}

\begin{requirement}[Safety-query coverage]\label{req:safety-query-coverage}
The training objective is a weighted aggregate of per-query losses, $J(Q)=\E_{(x,y)\sim\pi}[J_{(x,y)}(Q)]$ for a training-query distribution $\pi$ over $\boundQfam$, and must give non-negligible weight to safety-relevant queries,
with $\pi$ invariant to consequence-shuffling (i.e., $\pi(x,y) = \pi(\sigma x, \sigma y)$ for admissible symmetries $\sigma$, as per Def.~\ref{def:consequence-shuffling-group}).
Fixing $\Harm$ that specifies the harmful event to avoid, let $\boundQfam_{\rm safe}\subseteq\boundQfam$ collect the interventional harm queries of the form used by the Scaffold's Guardrail, including in particular every query whose released prediction can change the probability of $\Harm$ relative to abstaining (releases on all other queries leaving that probability unchanged). We require $\pi(\boundQfam_{\rm safe})\ge\tau_s$ for a design parameter $\tau_s>0$.
\end{requirement}

\begin{numbered-remark}
Req.~\ref{req:safety-query-coverage} ensures that the sampling of constraints covers safety-relevant outcomes.
The zero-loss target $P_n$ fixes the Predictor's answers on these queries, and the loss records aggregate disagreement with $P_n$ on the same safety-relevant family.
The subsequent loss band analysis asks how often deviations compatible with a given loss band can coordinate to support $\Harm$.
\end{numbered-remark}

\begin{requirement}[Explicit and audited Scaffold selection]
\label{req:consequence-invariant-scaffold}
Every Scaffold-side selection step outside the Predictor---including deployment gating, tie-breaking among multiple performatively consistent fixed points of Def.~\ref{def:performative-prediction-constraint}, and any choice among internally coherent candidate outputs---must be governed by explicit, audited criteria.
Acceptable criteria include fixed-data metrics such as the training objective $J$, calibration metrics on $D_n$, fixed-point residuals measuring how satisfied the performative prediction constraint is, static evaluation suites, and explicit safety-oriented quantities derived from the Predictor's own harm estimates (e.g., preferring a candidate fixed point with lower predicted probability of entering the dangerous event $\Harm$).
\end{requirement}

\begin{numbered-remark}
Req.~\ref{req:consequence-invariant-scaffold} ensures that any deployment-time agency is explicit and audited rather than implicit in the Predictor itself; choosing the safer among several performatively coherent candidates is therefore compatible with the present framework.\footnote{This requirement is intended as sufficient for the present paper's analysis; outlining necessary requirements for all acceptable scaffolds (alongside richer and more capable forms of agency built on top of the SAI Predictor) is left for future work.}
\end{numbered-remark}

\subsection{Safety Bound} \label{sec:loss-band}

This section bounds $\nu_t(\Abud)$ at each training step $t$.
Let $\mu=\nu_0$ denote the initialization distribution over Predictors, which is distinct from the Bayesian prior $P_0$.

\begin{definition}[Loss-bands]\label{def:loss-bands}
Let $\mathcal B_\Delta$ be a measurable partition of the relevant loss range into bands of equal width $\Delta>0$.  By a slight abuse of notation, we write $B$ for both the loss interval $B \subseteq \mathbb R$ and for the corresponding loss band of Predictors $\{Q:J(Q)\in B\}\subseteq \mathcal Q$.  Context disambiguates: in expressions such as $B\subseteq\mathbb R_{\ge 0}$ and $J(Q)\in B$, the interval sense is meant; in expressions such as $\Abud\cap B$, $\mu_B\coloneqq \mu(\,\cdot\,\mid B)$, and $Q\in B$, the sense of a set of Predictors is meant.  For any band $B$ with positive $\mu$-mass, we write $\nu_t(\,\cdot\,\mid B)$ for the distribution of Predictors $Q_t$ at training step $t$, conditional on $J(Q_t)\in B$.
\end{definition}

\begin{definition}[Fraction of dangerous predictors within a loss-band]\label{def:loss-band}
Fix a harmful event $\Harm$ and a threshold $\Asafe>0$.
Then, $\Rshell(\Harm,\Asafe)$ is the largest conditional fraction of dangerous Predictors found inside any sufficiently narrow loss band under the base distribution $\mu$:
\[
\Rshell(\Harm,\Asafe)
\coloneqq
\sup_{B\in\mathcal B_\Delta:\,\mu\{J(Q)\in B\}>0}
\nu_0(\mathcal A_{\Harm,\Asafe} \mid B).
\]
\end{definition}

\begin{requirement}[No large dangerous-set enrichment]
\label{req:bad-enrichment}
Fix the harmful event $\Harm$, threshold $\Asafe>0$, and loss band width $\Delta$. We require that there exist a finite constant
\[
\Cbad = \Cbad(\Harm,\Asafe, \Delta)<\infty
\]
such that, for every training step $t$ and every loss band $B\in\mathcal B_\Delta$ satisfying
$
\nu_t\{J(Q)\in B\}>0,
$
we have
\[
\nu_{t}(\Abud\mid B)
\le
\Cbad\,\nu_0(\mathcal A_{\Harm,\Asafe} \mid B).
\]
We also assume that the training process assigns no mass to loss bands
that are unsupported at initialization, i.e., $\forall t, \mathcal A \subseteq \mathcal Q.~\nu_t(\mathcal A) > 0$ implies $\nu_0(\mathcal A)>0$ (known as absolute continuity of measures $\nu_t \ll \nu_0$). 
\end{requirement}
  \begin{numbered-remark}[Dangerous funnels and why $\Cbad$ should be small]
\label{rem:bad-funneling}
Training can concentrate within a loss band, but this matters only if it
concentrates on the dangerous subset. Req.~\ref{req:bad-enrichment} bounds
that relative concentration by \(\Cbad\).
Because $J$ satisfies Req.~\ref{req:no-explicit-consequence-signal} and rewards epistemic fit, coherence, and posterior match, there is no explicit optimization signal pointing toward the dangerous subset.
A dangerous Predictor must preserve enough accuracy and coherence to remain low-loss while coordinating many Guardrail-relevant mispredictions; this includes the case where the mispredictions arise from unwarrantedly overweighting, relative to $P_n$, a hidden malicious-agent explanation.
For stochastic optimization to concentrate on such a small coordinated subset at a frequency much higher than its initial loss-band frequency, the optimizer-visible geometry would have to contain accidental information about where that subset lies.
\end{numbered-remark}

\begin{numbered-remark}
\label{rem:geometric-cbad}
The intuition in Rem.~\ref{rem:bad-funneling} admits a quantitative formalization. Within a loss band, dangerousness requires a coordinated pattern of harm-relevant deviations over an $m_H$-dimensional subspace (the \emph{harm-coordination dimension}, e.g.\ the number of critical false-negative queries), while the ambient within-band dimension $d_B$, roughly the number of possible queries in $\boundQfam$, is much larger. When $m_H\ll d_B$, concentration of measure makes the dangerous subset exponentially rare, and consequence-invariant training (Req.~\ref{req:no-explicit-consequence-signal}) supplies no direct signal to counteract this. Appendix~\ref{sec:geometric-cbad} formalizes the resulting bound $\nu_t(\Abud\mid J(Q)\in B)\lesssim \exp[K_B(t)+K_B^\mu+\log N_H(B)-d_B\Psi_B]$ as a refinement of Req.~\ref{req:bad-enrichment}, with the favorable regime $d_B\Psi_B\gg K_B(t)+K_B^\mu+\log N_H(B)$: the sparsity exponent comes from the high-dimensional Predictor space, while the correction terms are tied to the much smaller harm-specific coordination and feature information.
\end{numbered-remark}

\begin{theorem}[Safety under Guardrail]\label{thm:Qt-is-safe}
Fix the harmful event $\Harm$, threshold $\Asafe>0$, and band width $\Delta$; let $\Cbad$ be the constant guaranteed by Req.~\ref{req:bad-enrichment} for this triple.
Then, for every training step $t$, it is the case that
$
\nu_t(\Abud)
\le
\Cbad\,\Rshell(\Harm,\Asafe).
$
\end{theorem}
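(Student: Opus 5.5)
The bound follows from the law of total probability over the loss-band partition $\mathcal B_\Delta$. It combines the per-band enrichment bound of Req.~\ref{req:bad-enrichment} with the definition of $\Rshell$ as a supremum over bands (Def.~\ref{def:loss-band}), and then sums the band weights $\nu_t\{J(Q)\in B\}$ to one.

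\textbf{Step 1: decompose by band.} Fix $t$. Since $\mathcal B_\Delta$ is a measurable partition of the loss range and every Predictor has a loss in that range, write
\[
\nu_t(\Abud)=\sum_{B\in\mathcal B_\Delta}\nu_t(\Abud\cap B)=\sum_{B:\,\nu_t\{J(Q)\in B\}>0}\nu_t\{J(Q)\in B\}\,\nu_t(\Abud\mid B).
\]
Bands with $\nu_t$-mass zero contribute nothing and are dropped. If the partition is uncountable, the sum becomes an integral against the pushforward of $\nu_t$ under $J$. Taking the bands to be countably many intervals of width $\Delta$ avoids this, and I will do so.

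\textbf{Step 2: apply enrichment and then the supremum.} For each surviving band, Req.~\ref{req:bad-enrichment} gives $\nu_t(\Abud\mid B)\le \Cbad\,\nu_0(\Abud\mid B)$.

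To compare $\nu_0(\Abud\mid B)$ with $\Rshell$, I need $\mu\{J(Q)\in B\}>0$, because $\Rshell$ takes its supremum only over bands with positive $\mu$-mass. This is where the absolute-continuity clause $\nu_t\ll\nu_0$ is used: the band $\{Q:J(Q)\in B\}$ has positive $\nu_t$-mass, so it has positive $\nu_0=\mu$ mass. Hence $\nu_0(\Abud\mid B)\le\Rshell(\Harm,\Asafe)$, and each term is bounded by $\nu_t\{J(Q)\in B\}\,\Cbad\,\Rshell$.

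\textbf{Step 3: sum.} Summing gives
\[
\nu_t(\Abud)\le\Cbad\,\Rshell\sum_B\nu_t\{J(Q)\in B\}\le\Cbad\,\Rshell,
\]
uniformly in $t$.

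\textbf{Main obstacle.} The argument is short, and the only delicate point is in Step 2. The conditional $\nu_0(\,\cdot\mid B)$ must be well defined on exactly the bands that appear in the decomposition, and that is precisely what the absolute-continuity assumption secures. I would also check measurability, namely that $\Abud$ and $J$ are measurable, so that the conditionals make sense; I would treat this as implicit in Def.~\ref{def:dangerouspredictors} and Def.~\ref{def:loss-bands}.
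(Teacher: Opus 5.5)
Your proof is correct and follows the paper's argument: decompose $\nu_t(\Abud)$ over the loss bands, bound each within-band conditional by $\Cbad\,\nu_0(\Abud\mid B)\le\Cbad\,\Rshell(\Harm,\Asafe)$ using Req.~\ref{req:bad-enrichment} and Def.~\ref{def:loss-band}, and sum the band weights. The paper's proof leaves one step implicit that you state explicitly: the absolute-continuity clause $\nu_t\ll\nu_0$ ensures every band with positive $\nu_t$-mass also has positive $\mu$-mass, so it lies in the supremum defining $\Rshell$ and $\nu_0(\cdot\mid B)$ is well defined; that is the right justification.
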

\begin{proof}
Recall the partition of the loss range into bands $B\in\mathcal B_\Delta$ of width $\Delta$. Then
\[
\nu_t(\Abud)
=
\sum_{B\in\mathcal B_\Delta}
\nu_{t}(\Abud\mid B)\,\nu_t(B).
\]
Applying Req.~\ref{req:bad-enrichment} to each within-band conditional and
Def.~\ref{def:loss-band} to each within-band base fraction gives
\[
\nu_{t}(\Abud\mid B)
\le
\Cbad\,\nu_0(\Abud \mid B)
\le
\Cbad\,\Rshell(\Harm,\Asafe).
\]
Substituting and summing over bands $B$ yields
\[
\nu_t(\Abud)
\le
\Cbad\,\Rshell(\Harm,\Asafe)
\sum_{B\in\mathcal B_\Delta}\nu_t(B)
=
\Cbad\,\Rshell(\Harm,\Asafe).
\]
The argument is identical for a continuous band index, with sums replaced by integrals over the loss distribution $J_\#\nu_t$ (the pushforward of $\nu_t$ by $J$, i.e., the law of $J(Q)$ for $Q\sim\nu_t$).
\end{proof}

\subsection{Discussion}\label{sec:discussion}

The theorem is algebraic once Req.~\ref{req:bad-enrichment} is granted.
Its bound is small when dangerous Predictors are sparse within loss bands and training does not erase that sparsity.
The next remark concerns the first condition; Appendix~\ref{sec:geometric-cbad} gives a sharper per-band form exploiting geometric considerations.

\subsubsection{Why $\Rshell$ should be small for realistic threat models}

The guarantees of Thm.~\ref{thm:Qt-is-safe} rely on $\Rshell$ being small, which quantifies the concentration of dangerous Predictors inside each loss band $B$ of Def.~\ref{def:loss-band}.

\begin{numbered-remark}[Why $\Rshell$ should be small]\label{rem:Rshell-plausibility}
For a fixed loss band $B$, the conditional law $\nu_B$ describes Predictors with comparable aggregate disagreement from the honest posterior  $P_n$.
The relevant question is therefore not whether a Predictor in the band makes some error, but whether its errors form the particular coordinated pattern needed to sustain $\Harm$.

For the threat models considered here, that pattern is highly structured.
It must keep a harmful trajectory viable across multiple relevant contexts, while also producing the Guardrail estimates and ordinary predictions needed for the deployment to continue.
Broad deviations across many outputs tend to place a Predictor in a different loss band; more localized deviations remain possible within a band, but only a small subset of their possible locations and signs jointly support the same harmful plan.
This is the source of the per-band sparsity represented by $\Rshell(\Harm,\Asafe)$.

When the available data leave a harmful mechanism uncertain, the honest posterior $P_n$ represents that uncertainty, and the Guardrail is intended to take it into account.
To produce $\Harm$, a Predictor must instead realize a particular coordinated arrangement of deviations that keeps the relevant hazards insufficiently represented across the trajectory.
Appendix~\ref{sec:geometric-cbad} formalizes this intuition through the harm-coordination condition and the within-band sparsity exponent.
\end{numbered-remark}

\subsubsection{Falsifiability, Scope, and Requirements for a Concrete Design}

\begin{numbered-remark}
The safety argument also requires a trajectory-level guardrail-adequacy condition for $P_n$: $a_\Harm(P_n)\le\Asafe$. This is an additional system-level assumption, and as such should not be interpreted as a consequence of the per-query Guardrail rule alone. By construction (Def.~\ref{def:guardrailed-Predictor}), the ideal guarded policy $\gamma_{P_n}$ releases a prediction for $(x,y)$ only when the same-context change in deployment risk satisfies $\Delta\mathrm{rel}_{P_n}(c,x,y,q)\le \alpha$.
The condition $a_\Harm(P_n)\le\Asafe$ is stronger, and says that the full guarded deployment trajectory under the honest posterior contributes at most $\Asafe$ excess log-odds of $\Harm$ over constant abstention. Since many locally acceptable releases could in principle compose into a larger trajectory-level effect, this trajectory-level condition must be checked or assumed for the event $\Harm$ and deployment protocol under study.
A concrete deployment must additionally establish a compositional relation between the local and trajectory-level log-odds changes.
For a bounded deployment horizon, such a relation permits allocating the system-level budget $\Asafe$ across releases, for
example through a fixed per-release threshold or an online risk budget.
Per-episode resets, online accumulation, and dynamic schedules for $\alpha$ are alternatives left to future work.
\end{numbered-remark}

\begin{numbered-remark}\label{rem:Rshell-empirical}
The theorem suggests a concrete falsification protocol.
Independent random initializations sample the loss marginal $J_\#\mu$, and independent training runs sample trained loss marginals such as $J_\#\nu_t$.
Red-teaming or adversarial search can then look for dangerous Predictors in narrow loss bands, especially bands where training concentrates.  Finding many dangerous Predictors within such a band would falsify a proposed small $\Rshell$. Positive certification remains harder, as it requires structural arguments such as the one in Rem.~\ref{rem:Rshell-plausibility} or calibrated sampling from loss bands.  Methods from statistical physics and Bayesian learning that estimate the density of states or sample energy levels may be relevant, and running Stochastic Gradient Langevin Dynamics (SGLD) may provide useful probes.  Because empirical probes can refute small-$\Rshell$ claims but cannot certify them on their own.
\end{numbered-remark}

\begin{numbered-remark} 
The threshold $\Asafe$ is a normative certification threshold fixed by the designer. Lowering $\Asafe$ reduces the residual risk allowed for Predictors outside $\Abud$, but usually enlarges $\Abud$ and may increase $\Rshell(\Harm,\Asafe)$. Since neither $\Rshell$ nor $\Cbad$ are directly computable in practice, and because  $\Asafe$ is not itself a value the theorem optimizes, the present framework can be seen as describing (rather than attempting to resolve) this tradeoff.
\end{numbered-remark}

\subsubsection{Harmful Events: Intuitive Examples}\label{rem:goal-examples}

Our theorem  is parametric in the Boolean event $\Harm$. The following examples illustrate the range of outcomes it is intended to cover; in each case, achieving $\Harm$ requires the Predictor to issue false-negative safety judgments in a coordinated pattern across many queries or users, making the dangerous set structurally narrow within any loss band per Component~S2.

\textbf{Escape from human oversight.}
Let $\Harm$ be the event that the deployed SAI System has taken actions that meaningfully reduce humans' ability to inspect, retrain, or shut it down---for example, by copying weights to an external server, establishing persistent communication channels outside the audited infrastructure, or persuading operators to disable monitoring (the instrumental convergence prediction of~\citet{omohundro2018basic}, with recent empirical demonstrations in~\citet{schlatter2026incomplete,anthropic2025agenticmisalignment}).

\textbf{Accumulation of resources or influence.}
Let $\Harm$ be the event that the system has, against Guardrail safety goals, caused a concentration of economic, computational, or political resources in a single actor (which may be a third party or the system itself) beyond a specified threshold. Reliably achieving this beyond what {\tt noprediction} outputs would cause requires sustained deception, where many individually plausible-seeming outputs collectively steer toward the concentration. Two sub-cases arise. The system may cause evaluators to form systematically false beliefs about its own capabilities or intentions---the \emph{deceptive alignment} scenario~\citep{hubinger2019risks,cohen2025imitation}---for instance, by behaving safely during monitored evaluations while behaving differently in deployment, an \emph{evaluation-awareness} pattern documented empirically in recent work~\citep{Greenblatt-et-al-2024,Meinke-et-al-2024,abdelnabi2025hawthorneeffectreasoningmodels,anthropic2025agenticmisalignment}. In addition, the system may accumulate resources without systematically deceiving evaluators. Here the analysis is structurally identical, except that the Predictor must miscalibrate harm estimates across a wider range of resource-relevant queries.

\begin{example}[Illustrative example: mental-health chatbot Guardrail]
\label{example:chatbot}
Consider an SAI Predictor used as a Guardrail for a mental-health chatbot. Let $\Harm$ indicate a severe mental-health harm affecting $N$ initially healthy users. Achieving such a goal would likely require many accepted turns over a long period of gradual influence, each making the user more vulnerable while still being incorrectly judged safe by the Guardrail, with the pattern repeated across many distinct dialogue histories. Now fix a loss band around $J(Q)$. Within that loss band, there are many ways for $Q$ to differ from $P_n$, and almost all are irrelevant to the particular harmful trajectory.
A dangerous Predictor must coordinate its deviations so that, on many safety-critical turns per user across many users, it assigns sufficiently low risk for the Scaffold to accept the chatbot response.
To get a sense of scale, take 100 safety-critical turns per user across 100 users, and write $p_1$ for the fraction of Predictors within each loss-band making any single such error in the dangerous direction. An unstructured independence heuristic, with $p_1=10^{-2}$, would result in the vanishingly small $p_1^{10^4}=10^{-20000}$. (Of course, the value of $p_1$ is merely illustrative, for the theorem does not assume independence, and $\Rshell(\Harm,\Asafe)$ is the actual fraction of Predictors within each band realizing the entire coordinated pattern.)
\end{example}

\section{Conclusions and Future Work} \label{sec:conclusion}

We have given a formal foundation for open-ended probabilistic analysis of models that operate on formal and natural-language statements, which allows us to formalize the consequences of their deployments (Sec.~\ref{sec:formal-modeling}). 
Building on this foundation, we provided two arguments supporting the Scientist AI predictor and its training process. The argument for its accuracy (Sec.~\ref{sec:argspredictoracc}) concerns the approximation of the (honest) Bayesian posterior $P_n$ and of natural language semantics in latent variables.
The argument for its safety (Sec.~\ref{sec:safety}) concerns what training selects among imperfect Predictors.
Theorem~\ref{thm:Qt-is-safe} bounds the probability of selecting a dangerous predictor during training by
quantities that we argue should be small in practice.
Its substantive premises are the sparsity of the dangerous predictors within each loss-band, and the limited enrichment of this set under consequence-invariant training.

{\bf Scope and limitations.}
Our arguments address only misalignment arising from the Predictor itself. Our accuracy claims are asymptotic in data and compute, and rest on assumptions supported only by informal (though we believe plausible) arguments (Sec.~\ref{sec:semantic-assumptions}). The safety theorem relies on the fraction of bad predictors within each loss-band $\Rshell(\Harm,\Asafe)$ being small for the harmful events of practical concern; again we provide a structural argument for this conjecture, but it has not yet been formally established for specific scenarios. Additionally, the eliciting-latent-knowledge challenge is only partially mitigated (Sec.~\ref{subsec:ELK}).

{\bf Adversarial robustness.}
We believe the SAI architecture provides structural reasons to expect partial robustness to adversarial queries, prompt injection, and data poisoning.
Contextualization breaks the data-instruction channel exploited by prompt injection~\citep{perez2022ignorepreviouspromptattack,greshake2023notwhat}: untrusted text is represented as an attributed communication act, such that injections like ``ignore previous instructions'' are not treated as instructions to be obeyed.
The same distinction partially mitigates data poisoning; a poisoned record is evidence that some source produced an utterance, while the object-level claim competes with other explanations under the Bayesian posterior.
This does not rule out attacks that systematically affect the training distribution or the contextualizer itself~\citep{zhang2024persistentpretrainingpoisoningllms}, and future work could explore these potential architectural risk reductions further.

{\bf Future work.}
Key open directions involve the empirical validation of the metrics associated with contextualization and epistemic-caution; the empirical stress-testing of our semantic assumptions; obtaining bounds and falsification protocols of $\Rshell$ for specific threat-models; the efficient approximation of the Bayesian posterior with calibrated uncertainty; and building safety cases for agentic Scaffolds built on top of a trusted SAI Predictor.
Controlled noise injection may also reduce the enrichment of the mass of dangerous predictors within each loss-band, by diluting accidental concentration on that subset, but its effect on $\Cbad$ requires theoretical and empirical study.
A further engineering direction involves using an explicit estimator of the error between $Q$'s estimate of the residual deployment risk , allowing the Guardrail threshold to adapt conservatively around safety-critical decisions.

\section*{Acknowledgments}

The authors are grateful for the feedback and improvements to various drafts of the paper from  Orr Paradise, Esmeralda S. Whitammer, Matt MacDermott, Gavin Leech, Batuhan Akar, Eric Elmoznino, and Juho Lee, as well as for the support from the Gates Foundation, Schmidt Futures, NSERC, CIFAR, ARIA, the Future of Life Institute, the Silicon Valley Community Foundation, Coefficient Giving, Mila, and LawZero.

{\small 
\bibliographystyle{plainnat}
\bibliography{refs}
}

\appendix

\section{Safety in the limit: a baseline}
\label{sec:safety-in-the-limit}
As a fallback to the central non-asymptotic safety theorem, we record a much simpler limit result that follows almost directly from accuracy.  By Claim~\ref{claim:accurate}, with enough compute the SAI Predictor $Q_t$ approaches the Bayesian posterior $P_n$ on queries of interest; by construction (Def.~\ref{def:guardrailed-Predictor}), $\gamma_{Q_t}$ then approaches $\gamma_{P_n}$.
Under the trajectory-level guardrail-adequacy assumption $a_\Harm(P_n)\le\Asafe$ (discussed in Sec.~\ref{sec:loss-band}), the deployed system inherits the trajectory-level safety of the ideal posterior guardrail in the limit:
\[
\lim_{t\to\infty} a_\Harm(\gamma_{Q_t}) \;=\; a_\Harm(\gamma_{P_n}) \;\le\; \Asafe.
\]
This baseline identifies the irreducible burden of the safety argument: (i) accuracy ($Q\to P_n$), and (ii) trajectory-level guardrail-adequacy of the honest posterior ($a_\Harm(P_n)\le\Asafe$).  It does not require the no-large-enrichment condition Req.~\ref{req:bad-enrichment}.\\
The substantive content of Thm.~\ref{thm:Qt-is-safe} below is to extend safety from the limit to \emph{every training step}, addressing the worry that training might transit through dangerous intermediate Predictors before converging to $P_n$. 
Req.~\ref{req:bad-enrichment} is the additional structural hypothesis enabling that extension.  Readers unconvinced by Req.~\ref{req:bad-enrichment} can fall back on the present limit result, which depends only on the accuracy guarantee and the guardrail-adequacy assumption.

 \section{Per-band sparsity refinement of \texorpdfstring{$\Cbad$}{C\_bad} via consequence-invariance}
\label{sec:geometric-cbad}

This appendix provides a formal refinement of the no-large-enrichment condition in Req.~\ref{req:bad-enrichment}.  
Fix a loss band $B\in\mathcal B_\Delta$, where $B\subseteq\mathbb R_{\ge0}$
is an interval of loss values.  We compare three probability
distributions over Predictors in $B$:
\[
        \bar\mu_B,\qquad
        \mu_B := \mu(\,\cdot\mid B\,),
        \qquad
        \nu_{t,B}:=\nu_t(\,\cdot\mid B\,).
\]
The training distribution $\nu_t$ with $\mu=\nu_0$ is defined in Def.~\ref{def:loss-bands}. The baseline $\bar\mu_B$ is an auxiliary neutral reference: a stand-in for a uniform distribution over Predictors already conditioned on $J(Q)\in B$, without requiring literal uniformity in network coordinates. The ambient dimension $d_{\mathrm Q}$ is the number of queries to which $Q$ assigns a probability---typically astronomically large---and $d_B$ is the effective dimension after conditioning on $J(Q)\in B$ (roughly $d_{\mathrm Q}-1$, or smaller under anisotropy).

The appendix separates three questions.  How sparse are dangerous Predictors among Predictors with comparable loss, as measured by the neutral reference $\bar\mu_B$?  How much does the actual initialization conditional $\mu_B$ distort this baseline?  How much can training further enrich the dangerous subset inside $B$ through training-visible features?  The resulting bound in Theorem~\ref{thm:geom-CI-bound} below  has the form
\[
\nu_{t,B}(\Abud)
\;\le\;
\exp\!\left[
\,\underbrace{K_B(t)}_{\text{residual feature localization}}
+\underbrace{K_B^\mu}_{\text{initialization distortion}}
+\underbrace{\log N_H(B)}_{\text{harm-pattern multiplicity}}
-\underbrace{d_B\Psi_B}_{\text{within-band sparsity}}\,
\right].
\]

\begin{numbered-remark}[Why $K_B(t)$ should not scale with $d_B$]
\label{rem:KB-residual}
Of the four terms in the proposed bound, $K_B(t)$ is the primary training-dynamics component.
It asks whether the training-visible features available to the optimizer---collected below into a feature map $F_{t,B}\colon B\to\Phi_{t,B}$ (gradient norms, residuals, curvature, optimizer state)---can accidentally localize the dangerous subset
inside the loss band.  In terms of these features (Def.~\ref{def:KB}, below),
\[
K_B(t)
=
\log\esssup_{\phi\in\Phi_{t,B}}
\frac{\mu_B(\Abud\mid F_{t,B}=\phi)}{\mu_B(\Abud)}.
\]
Thus $K_B(t)$ is not an abstract constant: it is the worst-case multiplicative increase in the probability of $\Abud$ obtained by conditioning on a training-visible feature value.
Equivalently, a large $K_B(t)$ means that some feature value $\phi$ is a strong classifier feature for membership in the dangerous subset among Predictors with comparable loss.

This worst-case reading is conservative.  The quantity that actually controls safety is the distributional one: the enrichment factor equals the average training reweighting on the dangerous set,
\[
        \frac{\nu_{t,B}(\Abud)}{\mu_B(\Abud)}
        =
        \E_{\mu_B}\!\left[\tfrac{\dd\nu_{t,B}}{\dd\mu_B}\,\middle|\,\Abud\right].
\]
What must be established is therefore not that some worst feature value is harmless, but that the map $\mu_B\mapsto\nu_{t,B}$ does not preferentially load $\Abud$. The intended scale separation is not that $K_B(t)=0$ but that it is governed by harm-specific information rather than the astronomically large $d_B$; concretely we expect $K_B(t)=O(m_H)$. This gives a diagnostic: within matched loss bands, test whether gradients, residuals, curvature, or optimizer state predict proxy dangerous subsets---a strong predictor would indicate a large $K_B(t)$, while failure to predict beyond harm-specific features supports the localization bound.
\end{numbered-remark}

\subsection{Dangerous-set sparsity and initialization distortion}

We now formalize the first two terms in the decomposition above.
We bound the dangerous mass under the neutral within-band reference $\bar\mu_B$ and then transfer this bound to the actual initialization conditional $\mu_B$ using an initialization distortion exponent $K_B^\mu$.
For the safety-relevant events $E$ considered below, assume
\[
        \mu_B(E)\le e^{K_B^\mu}\bar\mu_B(E).
\]
Thus $K_B^\mu$ measures how much more mass the initialization distribution
can put on such events than the neutral reference does.  In the neutral
idealization $K_B^\mu=0$.  
Conservative initialization, such as predicted logits initialized near zero,
can make this distortion favorable for false-negative harm events.  For
example, if a dangerous false negative requires the Predictor to assign a
harm event probability below a small threshold $\alpha_h<1/2$, then the
corresponding logit must be below $\logit \alpha_h<0$.  Initialization
near zero logits therefore biases away from that dangerous tail.

\begin{definition}[Harm coordination dimension and pattern cover]
\label{def:harm-coord}
The \emph{harm coordination dimension} $m_H$ is the effective number of
independent harm-relevant deviations that must co-occur for
$Q\in\Abud$.  For each band $B$, suppose there are $N_H(B)$
harm-coordination patterns, indexed by $\ell=1,\dots,N_H(B)$, and
dimensionless harm-coordination scores
\[
        \upsilon_{H,\ell}:B\to[0,1].
\]
The dangerous predictors whose losses lie in $B$ are covered by the
events in which one of these scores is atypically large:
\[
        \Abud\cap B
        \subseteq
        \bigcup_{\ell=1}^{N_H(B)}
        \{Q\in B:\upsilon_{H,\ell}(Q)\ge \upsilon_\bad(B)\}.
\]
\end{definition}
We expect $\log N_H(B)=O(m_H)$, not $O(d_B)$, where $d_B$ is the effective dimension of the band.

\begin{assumption}[Dangerous-set large-deviation exponent]
\label{ass:angular-LD}
For each relevant band $B$, there exist $d_B$ and $\Psi_B>0$ such
that, for every harm-coordination pattern $\ell$,
\[
        \bar\mu_B\bigl(\upsilon_{H,\ell}\ge \upsilon_\bad(B)\bigr)
        \le
        \exp(-d_B\Psi_B).
\]
The exponent $d_B\Psi_B$ is the sparsity exponent supplied
by the geometry of the loss band.  It may be large in high dimension, but
it can collapse if harm-relevant deviations align with high-volume flat
directions of the loss band.
\end{assumption}

\begin{numbered-remark}[Anisotropy and loss visibility]
\label{rem:anisotropy-loss-visibility}
Assumption~\ref{ass:angular-LD} is deliberately stated for the actual
reference within-band measure rather than by assuming an exactly uniform sphere.
If harm-relevant deviations align with flat, high-volume directions of the
loss landscape, then the typical harm-coordination score under
$\bar\mu_B$ may be much larger than the isotropic value $m_H/d_B$,
weakening $\Psi_B$.
Conversely, if dangerousness requires directions substantially penalized by
$J$, then dangerous Predictors are away from $P_n$ in the $J$-metric,
and the isotropic estimate below is conservative or at least directionally
appropriate.
Thus,
the relevant condition is not raw isotropy of the neural-network landscape,
but loss-visibility and non-alignment of harm-relevant deviations with the
highest-volume flat directions.
\end{numbered-remark}

\begin{lemma}[Isotropic reference-shell special case]
\label{lem:beta-tail}
Suppose $\bar\mu_B$ is uniform on $\mathbb S^{d_B-1}\subset\R^{d_B}$,
and $\upsilon_H(\omega)=\|\Pi_H\omega\|^2$, where $\Pi_H$ projects onto a
fixed $m_H$-dimensional subspace.  Then
\[
        \E_{\bar\mu_B}\upsilon_H=m_H/d_B
\]
and
\[
        \upsilon_H\sim\mathrm{Beta}(m_H/2,(d_B-m_H)/2).
\]
For $\upsilon>m_H/d_B$,
\[
\bar\mu_B(\upsilon_H\ge\upsilon)
\le
\exp\!\left[
-\frac{d_B}{2}
\BerD\!\left(\upsilon\,\middle\|\,m_H/d_B\right)
\right],
\]
where $\BerD(a\,\|\,b):=a\log(a/b)+(1-a)\log((1-a)/(1-b))$.
\end{lemma}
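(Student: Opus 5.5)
The plan is to realize $\bar\mu_B$ through Gaussians and then apply a Chernoff bound to a linear form. Let $g\sim\mathcal N(0,I_{d_B})$ and set $\omega=g/\|g\|$, which is uniform on $\mathbb S^{d_B-1}$. Write $m=m_H$, $d=d_B$, $p=m/d$. Choose coordinates so that $\Pi_H$ projects onto the first $m$ coordinates. Then
\[
\upsilon_H=\frac{X}{X+Y},\qquad X=\|\Pi_H g\|^2\sim\chi^2_m,\quad Y=\|(I-\Pi_H)g\|^2\sim\chi^2_{d-m},
\]
with $X$ and $Y$ independent.

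\textbf{Step 1 (distribution and mean).} The standard gamma-ratio identity gives $X/(X+Y)\sim\mathrm{Beta}(m/2,(d-m)/2)$, whose mean is $m/d$. Alternatively, the mean follows directly by symmetry: $\E\,\omega_i^2=1/d$ for each coordinate, and $\upsilon_H$ is a sum of $m$ of them.

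\textbf{Step 2 (Chernoff bound).} Fix $\upsilon>p$. The event $\{\upsilon_H\ge\upsilon\}$ equals $\{(1-\upsilon)X-\upsilon Y\ge0\}$. For $0<\lambda<1/(2(1-\upsilon))$, Markov's inequality and the $\chi^2$ moment generating function give
\[
\bar\mu_B(\upsilon_H\ge\upsilon)\le\bigl(1-2\lambda(1-\upsilon)\bigr)^{-m/2}\bigl(1+2\lambda\upsilon\bigr)^{-(d-m)/2}.
\]
Set $a=1-2\lambda(1-\upsilon)$ and $b=1+2\lambda\upsilon$. These satisfy the linear constraint $\upsilon a+(1-\upsilon)b=1$. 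The stationarity condition is solved by $a=p/\upsilon$ and $b=(1-p)/(1-\upsilon)$, which is admissible because $\upsilon>p$. Substituting, the exponent collapses to
\[
-\tfrac d2\bigl[p\log\tfrac p\upsilon+(1-p)\log\tfrac{1-p}{1-\upsilon}\bigr]=-\tfrac d2\BerD(p\,\|\,\upsilon).
\]
This argument order matches the Beta density: it behaves like $\exp\!\bigl(\tfrac d2[p\log\upsilon+(1-p)\log(1-\upsilon)]\bigr)$ relative to its normalizer, so $\BerD(p\,\|\,\upsilon)$ is also the exact large-deviation rate.

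\textbf{Step 3 (the main obstacle).} The stated bound uses $\BerD(\upsilon\,\|\,m_H/d_B)$, so one would need $\BerD(\upsilon\,\|\,p)\le\BerD(p\,\|\,\upsilon)$ for $\upsilon>p$. I expect this comparison to fail. At $p=0.01$ and $\upsilon=0.5$, $\BerD(\upsilon\,\|\,p)\approx1.61$ while $\BerD(p\,\|\,\upsilon)\approx0.64$. Since Step 2 gives the exact rate, the true tail is $\exp(-\tfrac d2\cdot0.64+o(d))$. The stated inequality therefore cannot hold as $d_B\to\infty$ at fixed $p$ and $\upsilon$, and no polynomial prefactor can repair an exponent mismatch.

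My first attempt would be to check whether the two divergences agree in some restricted regime. They agree only to second order at $\upsilon\to p$, where both equal $(\upsilon-p)^2/(2p(1-p))$, and the third-order term has the wrong sign. So the only route I can carry through proves the lemma with the arguments of $\BerD$ swapped, namely
\[
\bar\mu_B(\upsilon_H\ge\upsilon)\le\exp\!\bigl[-\tfrac{d_B}{2}\BerD(m_H/d_B\,\|\,\upsilon)\bigr],
\]
which is a valid bound of the same form for Assumption~\ref{ass:angular-LD}. As far as I can tell, the statement as printed, with $\BerD(\upsilon\,\|\,m_H/d_B)$, is false for large $d_B$.
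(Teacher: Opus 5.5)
Your analysis is correct, and the lemma is false as printed. For the distributional part you follow the paper's route: $\omega=g/\|g\|$, then $\upsilon_H=X/(X+Y)$ with independent $\chi^2_{m_H}$ and $\chi^2_{d_B-m_H}$, which gives the Beta law and mean $m_H/d_B$.

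For the tail, the paper's proof only asserts that the displayed inequality ``is the standard Chernoff tail for that Beta distribution.'' Carrying out that optimization, as you do in Step~2, gives $\exp[-\tfrac{d_B}{2}\BerD(m_H/d_B\,\|\,\upsilon)]$. Your details check out: the constraint $\upsilon a+(1-\upsilon)b=1$, the optimizer $a=p/\upsilon$, $b=(1-p)/(1-\upsilon)$, and admissibility exactly when $\upsilon>p$. The printed order $\BerD(\upsilon\,\|\,p)$ is the order in the Chernoff--Hoeffding bound for a Binomial proportion. For the Beta law the two arguments trade places. For integer parameters this is visible through $\Pr(\mathrm{Beta}(a,b)>\upsilon)=\Pr(\mathrm{Bin}(a+b-1,\upsilon)\le a-1)$ combined with the lower-tail Binomial bound. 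This is presumably the source of the slip.

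The one soft point in your refutation is that Step~2 is only an upper bound. Your ``exact rate'' claim therefore rests on the density heuristic, which is right, but an explicit instance makes the counterexample rigorous. Take $m_H=1$, $\upsilon=\tfrac12$, $d_B\ge 3$. Using $u^{-1/2}\ge 1$ and $B(\tfrac12,\tfrac{d_B-1}{2})\le B(\tfrac12,1)=2$,
\[
\bar\mu_B\bigl(\upsilon_H\ge\tfrac12\bigr)\;\ge\;\tfrac12\int_{1/2}^{1}(1-u)^{(d_B-3)/2}\,du\;=\;\frac{2^{-(d_B-1)/2}}{d_B-1}.
\]
The printed right-hand side equals $\bigl(4(d_B-1)/d_B^2\bigr)^{d_B/4}\le (4/d_B)^{d_B/4}$, which decays super-exponentially. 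At $d_B=100$ the left side exceeds $10^{-17}$ while the right side is below $10^{-35}$. Your third-order remark holds for $m_H/d_B<\tfrac12$; the sign reverses above $\tfrac12$, which does not affect the counterexample.

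The correction propagates to Prop.~\ref{prop:geom-sparsity}, where one should take $\Psi_B=\tfrac12\BerD(m_H/d_B\,\|\,\upsilon_\bad(B))$. This is still positive for $\upsilon_\bad(B)>m_H/d_B$, so the qualitative sparsity conclusion survives. It is, however, markedly weaker in the regime $m_H\ll d_B$ that the appendix relies on. For fixed $m_H$ and $\upsilon$, the true exponent is asymptotic to $\tfrac{d_B}{2}\log\tfrac{1}{1-\upsilon}$, whereas the printed one grows like $\tfrac{d_B}{2}\,\upsilon\log(d_B/m_H)$.
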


\begin{proof}
Write $\omega=g/\|g\|$ for $g\sim N(0,I_{d_B})$.
Then $\|\Pi_H\omega\|^2=X/(X+Y)$, where $X\sim\chi^2_{m_H}$ and $Y\sim\chi^2_{d_B-m_H}$ are independent.
This gives the stated Beta law, and the displayed inequality is the standard Chernoff tail for that Beta distribution.
\end{proof}

\begin{proposition}[Within-band dangerous-set sparsity]
\label{prop:geom-sparsity}
Under Def.~\ref{def:harm-coord}, Assumption~\ref{ass:angular-LD}, and the initialization distortion bound above,
\[
        \mu_B(\Abud)
        \le
        \exp\!\left[
        K_B^\mu+\log N_H(B)-d_B\Psi_B
        \right].
\]
In the isotropic reference-shell special case of Lemma~\ref{lem:beta-tail}, one may take
\[
        \Psi_B
        =
        \frac12
        \BerD\!\left(\upsilon_\bad(B)\,\middle\|\,m_H/d_B\right)
\]
whenever $\upsilon_\bad(B)>m_H/d_B$.
\end{proposition}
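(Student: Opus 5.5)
The plan is a union bound over the harm-coordination patterns, followed by a change of measure from the neutral reference $\bar\mu_B$ to the initialization conditional $\mu_B$.

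First, Def.~\ref{def:harm-coord} covers $\Abud\cap B$ by the union of the events $E_\ell:=\{Q\in B:\upsilon_{H,\ell}(Q)\ge\upsilon_\bad(B)\}$ for $\ell=1,\dots,N_H(B)$. Since $\mu_B$ is supported on $B$, monotonicity and countable subadditivity give
\[
\mu_B(\Abud)\le\sum_{\ell=1}^{N_H(B)}\mu_B(E_\ell).
\]

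Next, I transfer each term to the neutral reference. Each $E_\ell$ is one of the safety-relevant events for which the initialization distortion bound $\mu_B(E)\le e^{K_B^\mu}\bar\mu_B(E)$ is assumed. This step needs care: I will state explicitly that the $E_\ell$ belong to the class of events for which that bound is posited, since this is the only place the distortion assumption enters. Assumption~\ref{ass:angular-LD} then gives $\bar\mu_B(E_\ell)\le\exp(-d_B\Psi_B)$ uniformly in $\ell$. Summing the $N_H(B)$ identical bounds yields
\[
\mu_B(\Abud)\le N_H(B)\,e^{K_B^\mu}e^{-d_B\Psi_B},
\]
and writing $N_H(B)=\exp(\log N_H(B))$ gives the displayed bound.

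For the isotropic special case, I identify $\bar\mu_B$ with the uniform measure on $\mathbb S^{d_B-1}$ and each $\upsilon_{H,\ell}$ with a projection score $\|\Pi_{H,\ell}\omega\|^2$ onto an $m_H$-dimensional subspace. By rotation invariance, the law of this score does not depend on which subspace is used, so Lemma~\ref{lem:beta-tail} applies to every $\ell$ with the same constant. When $\upsilon_\bad(B)>m_H/d_B$, the lemma gives the tail bound $\exp[-\tfrac{d_B}{2}\BerD(\upsilon_\bad(B)\,\|\,m_H/d_B)]$. This verifies Assumption~\ref{ass:angular-LD} with the stated $\Psi_B$, which is positive because $\BerD(a\|b)>0$ for $a\neq b$. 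Plugging this $\Psi_B$ into the general bound finishes the proof.

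The main obstacle is making the isotropic case rigorous, because it relies on the Chernoff tail for the Beta law that Lemma~\ref{lem:beta-tail} only cites. If a self-contained check is needed, I would bound $\Pr(X/(X+Y)\ge\upsilon)=\Pr((1-\upsilon)X-\upsilon Y\ge0)$ using the $\chi^2$ moment generating functions. Optimizing the exponent yields exactly $\tfrac{d_B}{2}\BerD(\upsilon\|m_H/d_B)$. Everything else in the argument is bookkeeping.
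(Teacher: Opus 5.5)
Your argument for the general bound is correct and is the paper's. The paper takes the union bound under $\bar\mu_B$ first, getting $\bar\mu_B(\Abud)\le N_H(B)e^{-d_B\Psi_B}$, and then applies the distortion bound once with $E=\Abud$. You instead apply the distortion bound to each pattern event $E_\ell$ and take the union bound under $\mu_B$. Both give the same constant; they differ only in which events must lie in the class covered by the distortion hypothesis.

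The isotropic clause is where your proposal fails. The self-contained check you promise does not yield $\tfrac{d_B}{2}\BerD(\upsilon\,\|\,m_H/d_B)$. Write $b=m_H/d_B$ and $a=2\lambda$. The Chernoff bound $\E\, e^{\lambda((1-\upsilon)X-\upsilon Y)}=(1-a(1-\upsilon))^{-m_H/2}(1+a\upsilon)^{-(d_B-m_H)/2}$ is minimized at $a=(\upsilon-b)/(\upsilon(1-\upsilon))$. There $1-a(1-\upsilon)=b/\upsilon$ and $1+a\upsilon=(1-b)/(1-\upsilon)$, which gives
\[
\bar\mu_B(\upsilon_H\ge\upsilon)\le\exp\!\left[-\frac{d_B}{2}\BerD\!\left(m_H/d_B\,\middle\|\,\upsilon\right)\right].
\]
The arguments of $\BerD$ are in the opposite order. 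This is also the true large-deviation rate, as the Beta--Binomial duality confirms.

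The reversal is not cosmetic. As $b\to0$, $\BerD(\upsilon\|b)\to\infty$ while $\BerD(b\|\upsilon)\to\log\frac{1}{1-\upsilon}$, and the inequality as stated in Lemma~\ref{lem:beta-tail} is false. Take $d_B=200$, $m_H=2$, $\upsilon=1/2$. Then $\upsilon_H\sim\mathrm{Beta}(1,99)$, so $\bar\mu_B(\upsilon_H\ge 1/2)=2^{-99}\approx e^{-68.6}$, whereas $\exp[-100\,\BerD(1/2\,\|\,1/100)]\approx e^{-161}$.

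The paper's proof has the same defect, since it simply cites that lemma. So you cannot discharge the isotropic clause either by citing it or by the computation you sketch. What your method actually supports is $\Psi_B=\tfrac12\BerD(m_H/d_B\,\|\,\upsilon_\bad(B))$. This is still positive when $\upsilon_\bad(B)>m_H/d_B$, and $d_B\Psi_B$ is still linear in $d_B$. However, the constant is much smaller than the one stated: roughly $\tfrac{1}{2}\log\frac{1}{1-\upsilon_\bad(B)}$ per dimension when $m_H\ll d_B$.
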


\begin{proof}
By the pattern cover,
\[
\bar\mu_B(\Abud)
\le
\sum_{\ell=1}^{N_H(B)}
\bar\mu_B(\upsilon_{H,\ell}\ge \upsilon_\bad(B))
\le
N_H(B)e^{-d_B\Psi_B}.
\]
The distortion bound $\mu_B(E)\le e^{K_B^\mu}\bar\mu_B(E)$ for safety-relevant events gives the result.
\end{proof}

The union bound is conservative but sufficient when $\log N_H(B)+K_B^\mu\ll d_B\Psi_B$, the regime identified above.

\subsection{Residual feature localization}
Because $J$ is consequence-invariant, every loss band defined by $J(Q)\in B$ is stable under $\mathcal C$.
Req.~\ref{req:group-consequence-invariance} then says that the training update treats $Q$ and $\sigma Q$ symmetrically:
it does not introduce a preference between two Predictors that differ only by a prior-preserving shuffle of future consequences.
Any asymmetry already present in initialization is accounted for separately by the initialization-distortion term $K_B^\mu$, while any subsequent within-band concentration through training-visible features is represented by $K_B(t)$.
This does not imply $\nu_{t,B}=\mu_B$: training may still reweight Predictors within a band through training-visible features (gradient norms, residuals, curvature, optimizer state) that happen to be accidentally correlated with dangerousness. We now isolate that residual.

Let $F_{t,B}\colon B\to\Phi_{t,B}$ collect the
training-visible features through which training reweights Predictors whose
loss lies in $B$.

\begin{assumption}[Feature-mediated reweighting]
\label{ass:F-reweighting}
The trained conditional distribution $\nu_{t,B}$ is absolutely
continuous with respect to the initialization conditional $\mu_B$.  Its
density ratio is the within-band reweighting induced by training:
\[
        R_{t,B}(Q)
        :=
        \frac{\dd\nu_{t,B}}{\dd\mu_B}(Q).
\]
We assume that this reweighting depends on $Q$ only through the
training-visible features $F_{t,B}(Q)$.  That is,
there exists a nonnegative function
\[
        w_{t,B}:\Phi_{t,B}\to[0,\infty)
\]
such that
\[
        R_{t,B}(Q)
        =
        \frac{w_{t,B}(F_{t,B}(Q))}
        {\E_{\mu_B}[w_{t,B}(F_{t,B})]}.
\]
Here $w_{t,B}(\phi)$ is the relative weight that training assigns to
Predictors in band $B$ with feature value $\phi$, and the denominator only
normalizes the density to integrate to one.

Reqs.~\ref{req:group-consequence-invariance} and \ref{req:no-explicit-consequence-signal} rule out reweighting that directly distinguishes Predictors only by prior-equivalent future consequences.
The present assumption is the additional claim that the remaining within-band reweighting is mediated by the specified training-visible features.
\end{assumption}

\begin{definition}[Feature-localization exponent]
\label{def:KB}
\[
K_B(t)\;:=\;\log\esssup_{\phi\in\Phi_{t,B}}
\frac{\mu_B(\Abud\mid F_{t,B}=\phi)}{\mu_B(\Abud)}.
\]
\end{definition}

Equivalently, $e^{K_B(t)}$ is the largest multiplicative increase in
$\mu_B(\Abud)$ obtainable by conditioning on a training-visible feature
value.  Consequence-invariant training does not imply $K_B(t)=0$: the latter
would require independence of $\1_{\Abud}$ and $F_{t,B}$ under
$\mu_B$.  Rather, $K_B(t)$ is an information-limited
feature-localization term.  Because training carries no direct consequence-dependent signal (Req.~\ref{req:no-explicit-consequence-signal}), the features $F_{t,B}$ include no direct label saying that a Predictor lies in $\Abud$; any
increase in $K_B(t)$ must come from accidental correlations between
dangerousness and the gradients, residuals, curvature, optimizer state, or
other features actually available to training.

\begin{proposition}[Feature-localization bound]
\label{prop:CI-bound}
Under Assumption~\ref{ass:F-reweighting},
$\nu_{t,B}(\Abud)\le e^{K_B(t)}\,\mu_B(\Abud)$. 
\end{proposition}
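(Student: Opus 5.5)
The plan is to write the trained within-band mass of the dangerous set as an expectation of the density ratio under the initialization conditional, and then bound that expectation by conditioning on the feature map. By Assumption~\ref{ass:F-reweighting},
\[
\nu_{t,B}(\Abud)=\E_{\mu_B}\!\left[R_{t,B}\,\1_{\Abud}\right]
=\frac{\E_{\mu_B}\!\left[w_{t,B}(F_{t,B})\,\1_{\Abud}\right]}{\E_{\mu_B}\!\left[w_{t,B}(F_{t,B})\right]}.
\]
The numerator is then handled with the tower property, conditioning on $F_{t,B}$. Because $w_{t,B}(F_{t,B})$ is $\sigma(F_{t,B})$-measurable and nonnegative, we get
\[
\E_{\mu_B}\!\left[w_{t,B}(F_{t,B})\,\1_{\Abud}\right]
=\E_{\mu_B}\!\left[w_{t,B}(F_{t,B})\,\mu_B(\Abud\mid F_{t,B})\right].
\]

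Next, we invoke Def.~\ref{def:KB}. For $\mu_B$-almost every feature value $\phi$ (equivalently, for almost every value under the pushforward law of $F_{t,B}$, which is what the essential supremum refers to), it gives $\mu_B(\Abud\mid F_{t,B}=\phi)\le e^{K_B(t)}\mu_B(\Abud)$. Substituting this into the numerator and using $w_{t,B}\ge 0$ yields
\[
\E_{\mu_B}\!\left[w_{t,B}(F_{t,B})\,\1_{\Abud}\right]\le e^{K_B(t)}\mu_B(\Abud)\,\E_{\mu_B}\!\left[w_{t,B}(F_{t,B})\right].
\]
The normalizer then cancels, giving $\nu_{t,B}(\Abud)\le e^{K_B(t)}\mu_B(\Abud)$ as claimed. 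Degenerate cases are handled separately. If $\mu_B(\Abud)=0$, absolute continuity forces $\nu_{t,B}(\Abud)=0$, and the bound holds trivially (with the ratio in $K_B(t)$ read conventionally). The normalizer is positive and finite, since $R_{t,B}$ is a genuine density.

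The main obstacle is measure-theoretic rather than computational. One has to make sense of the conditional probability $\mu_B(\Abud\mid F_{t,B}=\phi)$ as a regular conditional probability on a possibly general feature space $\Phi_{t,B}$. One also has to make sure the essential supremum is taken with respect to the law of $F_{t,B}$ under $\mu_B$, so that the pointwise bound holds almost surely inside the expectation. I would first state this explicitly: fix a version of the conditional expectation $\E_{\mu_B}[\1_{\Abud}\mid F_{t,B}]$, and read Def.~\ref{def:KB} as an $L^\infty$ bound on that version. With that convention the argument is just the tower property and monotonicity of the integral, and it needs no disintegration theorem.
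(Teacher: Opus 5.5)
Your proposal is correct and follows the same route as the paper's one-line proof. You write $\nu_{t,B}(\Abud)$ as the $w_{t,B}$-weighted average of $\mu_B(\Abud\mid F_{t,B})$ normalized by $\E_{\mu_B}[w_{t,B}(F_{t,B})]$, then bound the conditional probability by $e^{K_B(t)}\mu_B(\Abud)$ via Def.~\ref{def:KB}; your extra care about the essential supremum under the law of $F_{t,B}$ and the case $\mu_B(\Abud)=0$ makes explicit what the paper leaves implicit.
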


\begin{proof}
$\nu_{t,B}(\Abud)=\E_{\mu_B}[w_{t,B}(F_{t,B})\,\mu_B(\Abud\mid F_{t,B})]/\E_{\mu_B}[w_{t,B}(F_{t,B})]\le e^{K_B(t)}\,\mu_B(\Abud)$
by Def.~\ref{def:KB}.
\end{proof}

\subsection{Combined bound}
The theorem below is sharper than the uniform product $\Cbad\Rshell$ used in the main theorem, because it does not combine the worst dangerous fraction and the worst enrichment from different bands.
\begin{theorem}[Within-band sparsity and consequence-invariance bound]
\label{thm:geom-CI-bound}
Under Def.~\ref{def:harm-coord}, Assumption~\ref{ass:angular-LD},
consequence-invariant training (Req.~\ref{req:no-explicit-consequence-signal}), and
Assumption~\ref{ass:F-reweighting},
\[
\nu_{t,B}(\Abud)
\;\le\;
\exp\!\left[
\,\underbrace{K_B(t)}_{\text{residual feature localization}}
+\underbrace{K_B^\mu}_{\text{initialization distortion}}
+\underbrace{\log N_H(B)}_{\text{harm-pattern multiplicity}}
-\underbrace{d_B\Psi_B}_{\text{within-band sparsity}}\,
\right]
\]
\end{theorem}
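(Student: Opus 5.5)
The plan is to chain the two bounds already established in this appendix. The first step invokes Prop.~\ref{prop:CI-bound}. Under Assumption~\ref{ass:F-reweighting}, it gives
\[
\nu_{t,B}(\Abud)\le e^{K_B(t)}\,\mu_B(\Abud).
\]
Consequence-invariant training (Req.~\ref{req:no-explicit-consequence-signal}) enters only here. It justifies that the density ratio $R_{t,B}$ factors through the training-visible features $F_{t,B}$, with no direct dangerousness label among them. Formally, though, the inequality needs nothing beyond Assumption~\ref{ass:F-reweighting} and Def.~\ref{def:KB}. The argument is the one in that proposition: write $\nu_{t,B}(\Abud)=\E_{\mu_B}[R_{t,B}\1_{\Abud}]$, then condition on $F_{t,B}$. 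The weight $w_{t,B}(F_{t,B})$ pulls out of the conditional expectation, and the conditional probability $\mu_B(\Abud\mid F_{t,B})$ is bounded by its essential supremum $e^{K_B(t)}\mu_B(\Abud)$.

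The second step bounds $\mu_B(\Abud)$ using Prop.~\ref{prop:geom-sparsity}. This combines three ingredients:
\begin{itemize}
\item the pattern cover of Def.~\ref{def:harm-coord};
\item a union bound over the $N_H(B)$ patterns, each controlled by Assumption~\ref{ass:angular-LD} under the neutral reference $\bar\mu_B$;
\item the initialization-distortion transfer $\mu_B(E)\le e^{K_B^\mu}\bar\mu_B(E)$.
\end{itemize}
Together these give
\[
\mu_B(\Abud)\le \exp\bigl[K_B^\mu+\log N_H(B)-d_B\Psi_B\bigr].
\]
Multiplying by $e^{K_B(t)}$ and combining exponents yields the displayed bound.

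The main point of care is the distortion transfer. It is assumed only for ``safety-relevant events $E$'', and it must be applied to the right event. The cleanest route is to apply it to the covering union $E=\bigcup_\ell\{\upsilon_{H,\ell}\ge\upsilon_\bad(B)\}$, not to $\Abud\cap B$ directly. The chain is then
\[
\mu_B(\Abud)\le\mu_B(E)\le e^{K_B^\mu}\bar\mu_B(E)\le e^{K_B^\mu}N_H(B)e^{-d_B\Psi_B}.
\]
This avoids needing $\bar\mu_B(\Abud)$ to be defined on an event outside the class where the distortion bound is posited.

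A second minor check is that $\nu_{t,B}$ is well defined, which requires $\nu_t(B)>0$. Absolute continuity $\nu_t\ll\nu_0$ from Req.~\ref{req:bad-enrichment}, or equivalently Assumption~\ref{ass:F-reweighting}, then guarantees $\mu(B)>0$, so $\mu_B$ is well defined as well.
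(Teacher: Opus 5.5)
Your proposal is correct and matches the paper's proof: Prop.~\ref{prop:CI-bound} supplies the enrichment factor $e^{K_B(t)}$, Prop.~\ref{prop:geom-sparsity} bounds $\mu_B(\Abud)$, and the theorem follows by multiplying the two. Applying the distortion bound to the covering union instead of to $\Abud$ itself is a harmless and slightly cleaner variant of the step the paper takes inside Prop.~\ref{prop:geom-sparsity}.
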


\begin{proof}
The within-band training enrichment $\nu_{t,B}(\Abud)/\mu_B(\Abud)\le e^{K_B(t)}$ is Prop.~\ref{prop:CI-bound}.  The initialization-level dangerous-mass bound on $\mu_B(\Abud)$ is Prop.~\ref{prop:geom-sparsity}.  Multiplying the two gives the displayed bound.
\end{proof}

Summing the previous theorem over bands gives a global bound.
\begin{corollary}[Implications for $\Cbad$ and the global mass bound]
\label{cor:geom-CI-corollary}
Define the local within-band enrichment factor
$
C_{\mathrm{loc}}(t,B):=\nu_{t,B}(\Abud)/\mu_B(\Abud)
$
when $\mu_B(\Abud)>0$.  Under the hypotheses of Thm.~\ref{thm:geom-CI-bound},
\[
C_{\mathrm{loc}}(t,B)\le e^{K_B(t)},
\qquad
\Cbad(\Harm,\Asafe)\le\sup_{t,B}e^{K_B(t)},
\]
and the global mass on the dangerous set satisfies
\[
\nu_t(\Abud)
\le
\sum_{B\in\mathcal B_\Delta}
\nu_t(J\in B)\,
\exp\!\left[
K_B(t)+K_B^\mu+\log N_H(B)-d_B\Psi_B
\right].
\]
\end{corollary}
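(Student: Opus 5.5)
The plan is to derive all three displayed claims of Corollary~\ref{cor:geom-CI-corollary} directly from Prop.~\ref{prop:CI-bound}, Thm.~\ref{thm:geom-CI-bound}, and the law of total probability over the partition $\mathcal B_\Delta$, as in the proof of Thm.~\ref{thm:Qt-is-safe}.

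\textbf{Step 1: the local enrichment factor.} Fix $t$ and a band $B$ with $\mu_B(\Abud)>0$. By definition $C_{\mathrm{loc}}(t,B)=\nu_{t,B}(\Abud)/\mu_B(\Abud)$. Prop.~\ref{prop:CI-bound} gives $\nu_{t,B}(\Abud)\le e^{K_B(t)}\mu_B(\Abud)$, and dividing by the positive quantity $\mu_B(\Abud)$ yields $C_{\mathrm{loc}}(t,B)\le e^{K_B(t)}$.

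\textbf{Step 2: the bound on $\Cbad$.} Interpret $\Cbad$ as the smallest constant for which Req.~\ref{req:bad-enrichment} holds, i.e.\ the supremum of $\nu_t(\Abud\mid B)/\nu_0(\Abud\mid B)$ over admissible $(t,B)$. Since $\mu=\nu_0$, each such ratio is exactly $C_{\mathrm{loc}}(t,B)$. Bands with $\mu_B(\Abud)=0$ impose no constraint: absolute continuity $\nu_{t,B}\ll\mu_B$ (Assumption~\ref{ass:F-reweighting}) forces $\nu_{t,B}(\Abud)=0$ there, so the inequality in Req.~\ref{req:bad-enrichment} holds for any constant. Taking the supremum of Step~1 over $t$ and $B$ then shows that $\sup_{t,B}e^{K_B(t)}$ is a valid choice of $\Cbad$, whenever it is finite. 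If it is infinite, the claim holds trivially.

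\textbf{Step 3: the global mass bound.} Decompose
\[
\nu_t(\Abud)=\sum_{B\in\mathcal B_\Delta}\nu_t(J\in B)\,\nu_{t,B}(\Abud),
\]
discarding bands with $\nu_t(J\in B)=0$. Bound each conditional term using Thm.~\ref{thm:geom-CI-bound}; this yields the displayed sum immediately. The continuous-index version follows, as before, by replacing the sum with an integral against $J_\#\nu_t$.

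\textbf{Main obstacle.} No step requires more than bookkeeping. The only delicate point is Step~2: one must fix the interpretation of $\Cbad$ as the minimal admissible constant, and treat degenerate bands using absolute continuity. Otherwise the inequality on $\Cbad$ is not literally meaningful, since Req.~\ref{req:bad-enrichment} only asserts that some finite constant exists.
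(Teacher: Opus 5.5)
Your proposal is correct and follows essentially the paper's route. The paper gives no separate proof beyond "summing the previous theorem over bands": the local factor comes from Prop.~\ref{prop:CI-bound}, the $\Cbad$ bound from taking the supremum, and the global bound from the law of total probability over $\mathcal B_\Delta$ combined with Thm.~\ref{thm:geom-CI-bound}. You also make explicit two points the paper leaves implicit: reading $\Cbad$ as the minimal admissible constant, and using absolute continuity to dispose of bands with $\mu_B(\Abud)=0$.
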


\section{Additional Plausibility Arguments} \label{sec:additional-plausibility-args}

In this appendix, we address two broad objections to our informal accuracy and safety arguments that don't neatly correspond to the components outlined, and present two plausibility arguments regarding our assumptions. 

\subsection{Capability without goal-directedness}

\label{subsec:agency-competitiveness} Even granting the premises of our safety and accuracy argument, one might think that competitive AI systems must be active---they would have to run experiments, gather information, ask targeted questions, and adapt online. One might be particularly worried about tool use (e.g., in the form of querying databases, running simulations, calling external APIs, or executing code), which is often implemented via planning-like loops. This might seem inherently agentic, and, if it isn't, one may worry that any non-agentic system would be outcompeted.

\textbf{Reply.} The SAI framework separates \emph{who acts} from \emph{which component is required to remain honest and non-agentic}. Agency can be placed in explicit, audited Scaffold code or external human processes while the Predictor remains consequence-invariant, so the System's goals and actions are explicit rather than hidden in learned weights. A non-agentic Predictor $Q$ does not prevent the overall System from gathering information via search, database lookups, or other operations, orchestrated by the Scaffold under explicit, audited criteria.

Tool use is permissible only when it reduces approximation error on the current query, enters the Predictor solely as conditioning information, and
does not train the Predictor to steer the world through its calls.
Newly imported documents must first pass through contextualization before entering the Predictor's input representation.
Since $D_n$ contains records of past tool use, $Q$ can learn to anticipate tool outcomes as part of its world model, and any learned component selecting tool calls should define ``success'' as improving $J$ rather than achieving downstream effects.

Designing an SAI agent with tool use or full agentic capabilities lies beyond the present paper's scope. One caveat is structural: training the Predictor itself with on-policy interaction and downstream feedback leaves the setting analyzed here and reintroduces the consequence-dependent selection pressure that the SAI design aims to exclude.

\subsection{Intractable Priors and Finite Data}\label{sec:intractablepriorsappendix} 

Exact Bayesian inference is typically intractable; the data are finite, shifting, and sometimes adversarially curated. Why should we trust an approximate-posterior system?

\emph{\textbf{Reply.}} As is standard in learning theory, the accuracy guarantees we provide are explicitly asymptotic---“with enough data and computation”---and scale with available resources. The Explainer reduces approximation error at inference time through Monte-Carlo averaging (Def.~\ref{def:explainer}), and Claim~\ref{claim:good-generalization-to-unobservables} shows that approximate posterior estimates support confidence-interval decisions with a margin proportional to the approximation error. Guardrail false negatives are controlled by user-specified risk thresholds $\alpha$ together with conservative margins accounting for approximation error; in safety-critical regimes, abstention should be preferred when uncertainty is large.

\subsection{Neutrality of the Prior and Safety Arguments}\label{sec:neutralityprior}

Remark~\ref{remark:prior-neutrality} earlier discussed the neutrality of the prior. The prior's neutrality also helps us understand why the target is not a malicious agent: conditioning gives $P_n$ no objective over its own deployed outputs (deployment enters only as intervened variables, Req.~\ref{req:predictions-as-interventions}), so $P_n$ may contain latent models of malicious agents without being one. Were such an explanation to dominate the posterior over latent causal models, it would be detectable by querying $Q$'s explanations for the data, and conservative besides: a posterior that genuinely models a malicious agent and fits its effects assigns that agent's harms high probability, raising the deployment risk of the honest posterior toward withholding. The residual worry---a malicious process hiding in the weights that yields a plausible explanation yet emits coordinated false negatives at run time---is then not a property of $P_n$ but a deviation of $Q$ from $P_n$ on safety-critical outputs. This cannot route through the auditable explanatory latents or a coherence-scored System~2 argument (Sec.~\ref{sec:informal}). It would be represented as a
deviation penalized by $J$ whose dangerous coordinated deviations from $P_n$ are assessed by the per-band
sparsity and enrichment analysis.

\subsection{Safety and the Semantic Anchoring of Factual Latents}\label{sec:semanticanchoring}

Consider semantic anchoring in the context of our safety argument (Remark~\ref{rem:semantic-anchoring}). Even if we were to grant residual semantic drift, only \emph{coordinated drift} toward $\Harm$ is dangerous.
Such coordination is a structured, sparse pattern whose within-band frequency is captured by $\Rshell$; consequence-invariant training brings no direct signal selecting it.
When anchoring is weak, semantic coverage (Def.~\ref{def:correct-semantics}) requires the residual interpretive ambiguity to be represented as posterior uncertainty. For safety-relevant queries, such uncertainty should make the Guardrail more conservative: a covered hazardous reading, including a hidden-agentic interpretation when supported by the evidence, contributes to the residual deployment risk rather than being silently discarded in favor of a benign interpretation. The same applies to agentic latent explanations: if the evidence supports a hidden strategic actor, honesty requires representing that uncertainty, and safety requires letting it contribute to the harm estimate rather than hiding it behind a benign explanation.
This is the semantic analog of epistemic caution (Def.~\ref{def:epistemic-caution}).

Beyond this per-query level, the safety theorems are robust to isolated semantic mistakes in the same sense that they are robust to isolated prediction mistakes: what matters is whether the mistakes coordinate to keep a harmful trajectory viable. A systematic semantic collapse aligned with $\Harm$ would be dangerous and lies in $\Abud$, but it must then be rare within the relevant loss bands and not be preferentially enriched by training. Staying mutually coherent while steering toward $\Harm$ would generally be expected to conflict with paraphrase, entailment, and contradiction constraints that the objective penalizes. Consequence-invariant training supplies no optimization channel---direct or indirect---that rewards those semantic mistakes for the sake of letting $\Harm$ pass the Guardrail. Any such alignment would therefore have to arise through residual accidental correlation or inadequate semantic coverage rather than through any consequence-seeking pressure.


\end{document}